\documentclass{article}





\usepackage[final,nonatbib]{nips_2018}

\usepackage[utf8]{inputenc} 
\usepackage[T1]{fontenc}    
\usepackage{hyperref}       
\usepackage{url}            
\usepackage{booktabs}       
\usepackage{amsfonts}       
\usepackage{nicefrac}       
\usepackage{microtype}      
\usepackage{amsmath}
\usepackage{graphicx}
\usepackage{subfigure}
\usepackage{amssymb}
\usepackage{cancel}
\usepackage{enumitem}
\usepackage{color}
\usepackage[ruled,linesnumbered]{algorithm2e}
\usepackage{multirow}

\title{Efficient Loss-Based Decoding on Graphs for Extreme Classification}

%

\author{
  Itay Evron \\
  Computer Science Dept.\\
  The Technion, Israel \\
  \texttt{evron.itay@gmail.com} \\
  \And
  Edward Moroshko \\
  Electrical Engineering Dept.\\
  The Technion, Israel \\
  \texttt{edward.moroshko@gmail.com} \\
  \And
  Koby Crammer \\
  Electrical Engineering Dept.\\
  The Technion, Israel \\
  \texttt{koby@ee.technion.ac.il} \\
}

\begin{document}

\maketitle

\begin{abstract}
  In {\em extreme classification} problems, learning algorithms are
  required to map instances to labels from an extremely large label set.
  We build on a recent extreme classification framework with logarithmic time and space \cite{Jasinska2016},
  and on a general approach for error correcting output coding (ECOC) with loss-based decoding \cite{Allwein2001_new},
  and introduce a flexible and efficient approach accompanied by theoretical bounds.
  Our framework employs output codes induced by graphs,
  for which we show how to perform efficient loss-based decoding
  to potentially improve accuracy.
  In addition,
  our framework offers a tradeoff between accuracy, model size and prediction time.
  We show how to find the sweet spot of this tradeoff using only the training data.
  Our experimental study demonstrates the validity of our assumptions and claims,
  and shows that our method is competitive with state-of-the-art algorithms.
\end{abstract}

%


\newtheorem{theorem}{Theorem}
\newtheorem{lemma}[theorem]{Lemma}
\newtheorem{corollary}[theorem]{Corollary}
\newtheorem{definition}{Definition}
\newtheorem{Remark}{Remark}
\def\proof{\par\penalty-1000\vskip .5 pt\noindent{\bf Proof\/: }}
\def\proofsketch{\par\penalty-1000\vskip .5 pt\noindent{\bf Proof sketch\/: }}
\def\ProofSketch{\par\penalty-1000\vskip .1 pt\noindent{\bf Proof sketch\/: }}
\newcommand{\QED}{\hfill$\;\;\;\rule[0.1mm]{2mm}{2mm}$\\}

\newcommand{\todo}[1]{{~\\\bf TODO: {#1}}~\\}

\newfont{\msym}{msbm10}
\newcommand{\reals}{\mathbb{R}}
\newcommand{\half}{\frac{1}{2}}
\newcommand{\sign}{{\rm sign}}
\newcommand{\paren}[1]{\left({#1}\right)}
\newcommand{\brackets}[1]{\left[{#1}\right]}
\newcommand{\braces}[1]{\left\{{#1}\right\}}
\newcommand{\ceiling}[1]{\left\lceil{#1}\right\rceil}
\newcommand{\abs}[1]{\left\vert{#1}\right\vert}
\newcommand{\tr}{{\rm Tr}}
\newcommand{\pr}[1]{{\rm Pr}\left[{#1}\right]}
\newcommand{\prp}[2]{{\rm Pr}_{#1}\left[{#2}\right]}
\newcommand{\Exp}[1]{{\rm E}\left[{#1}\right]}
\newcommand{\Expp}[2]{{\rm E}_{#1}\left[{#2}\right]}
\newcommand{\eqdef}{\stackrel{\rm def}{=}}
\newcommand{\comdots}{, \ldots ,}
\newcommand{\true}{\texttt{True}}
\newcommand{\false}{\texttt{False}}
\newcommand{\mcal}[1]{{\mathcal{#1}}}
\newcommand{\argmin}[1]{\underset{#1}{\mathrm{argmin}} \:}
\newcommand{\normt}[1]{\left\Vert {#1} \right\Vert^2}
\newcommand{\step}[1]{\left[#1\right]_+}
\newcommand{\1}[1]{[\![{#1}]\!]}
\newcommand{\diag}{{\textrm{diag}}}
\newcommand{\KL}{{\textrm{D}_{\textrm{KL}}}}
\newcommand{\IS}{{\textrm{D}_{\textrm{IS}}}}
\newcommand{\EU}{{\textrm{D}_{\textrm{EU}}}}
\newcommand{\bigO}[1]{\mathcal{O}\left(#1\right)}

\newcommand{\leftmarginpar}[1]{\marginpar[#1]{}}
\newcommand{\figline}{\rule{0.50\textwidth}{0.5pt}}
\newcommand{\pseudocodefont}{\normalsize}
\newcommand{\nolineskips}{
\setlength{\parskip}{0pt}
\setlength{\parsep}{0pt}
\setlength{\topsep}{0pt}
\setlength{\partopsep}{0pt}
\setlength{\itemsep}{0pt}}

\newcommand{\beq}[1]{\begin{equation}\label{#1}}
\newcommand{\eeq}{\end{equation}}
\newcommand{\beqa}{\begin{eqnarray}}
\newcommand{\eeqa}{\end{eqnarray}}
\newcommand{\secref}[1]{Section~\ref{#1}}
\newcommand{\figref}[1]{Figure~\ref{#1}}
\newcommand{\exmref}[1]{Example~\ref{#1}}
\newcommand{\thmref}[1]{Theorem~\ref{#1}}
\newcommand{\sthmref}[1]{Thm.~\ref{#1}}
\newcommand{\defref}[1]{Definition~\ref{#1}}
\newcommand{\remref}[1]{Remark~\ref{#1}}
\newcommand{\chapref}[1]{Chapter~\ref{#1}}
\newcommand{\appref}[1]{Appendix~\ref{#1}}
\newcommand{\lemref}[1]{Lemma~\ref{#1}}
\newcommand{\propref}[1]{Proposition~\ref{#1}}
\newcommand{\claimref}[1]{Claim~\ref{#1}}
\newcommand{\algoref}[1]{Algorithm~\ref{#1}}

\newcommand{\corref}[1]{Corollary~\ref{#1}}
\newcommand{\scorref}[1]{Cor.~\ref{#1}}
\newcommand{\tabref}[1]{Table~\ref{#1}}
\newcommand{\tran}[1]{{#1}^{\top}}
\newcommand{\norm}{\mcal{N}}
\newcommand{\eqsref}[1]{Eqns.~(\ref{#1})}

\newcommand{\mb}[1]{{\boldsymbol{#1}}}
\newcommand{\up}[2]{{#1}^{#2}}
\newcommand{\dn}[2]{{#1}_{#2}}
\newcommand{\du}[3]{{#1}_{#2}^{#3}}
\newcommand{\textl}[2]{{$\textrm{#1}_{\textrm{#2}}$}}

\newcommand{\vx}{\mathbf{x}}
\newcommand{\vxi}[1]{\vx_{#1}}
\newcommand{\vxii}{\vxi{t}}

\newcommand{\yi}[1]{y_{#1}}
\newcommand{\yii}{\yi{t}}
\newcommand{\hyi}[1]{\hat{y}_{#1}}
\newcommand{\hyii}{\hyi{i}}

\newcommand{\vy}{\mb{y}}
\newcommand{\vyi}[1]{\vy_{#1}}
\newcommand{\vyii}{\vyi{i}}

\newcommand{\vn}{\mb{\nu}}
\newcommand{\vni}[1]{\vn_{#1}}
\newcommand{\vnii}{\vni{i}}

\newcommand{\vmu}{\mb{\mu}}
\newcommand{\vmus}{{\vmu^*}}
\newcommand{\vmuts}{{\vmus}^{\top}}
\newcommand{\vmui}[1]{\vmu_{#1}}
\newcommand{\vmuii}{\vmui{i}}

\newcommand{\vmut}{\vmu^{\top}}
\newcommand{\vmuti}[1]{\vmut_{#1}}
\newcommand{\vmutii}{\vmuti{i}}

\newcommand{\vsigma}{\mb \sigma}
\newcommand{\msigma}{\Sigma}
\newcommand{\msigmas}{{\msigma^*}}
\newcommand{\msigmai}[1]{\msigma_{#1}}
\newcommand{\msigmaii}{\msigmai{t}}

\newcommand{\mups}{\Upsilon}
\newcommand{\mupss}{{\mups^*}}
\newcommand{\mupsi}[1]{\mups_{#1}}
\newcommand{\mupsii}{\mupsi{i}}
\newcommand{\upssl}{\upsilon^*_l}

\newcommand{\vu}{\mathbf{u}}
\newcommand{\vut}{\tran{\vu}}
\newcommand{\vui}[1]{\vu_{#1}}
\newcommand{\vuti}[1]{\vut_{#1}}
\newcommand{\hvu}{\hat{\vu}}
\newcommand{\hvut}{\tran{\hvu}}
\newcommand{\hvur}[1]{\hvu_{#1}}
\newcommand{\hvutr}[1]{\hvut_{#1}}
\newcommand{\vw}{\mb{w}}
\newcommand{\vwi}[1]{\vw_{#1}}
\newcommand{\vwii}{\vwi{t}}
\newcommand{\vwti}[1]{\vwt_{#1}}
\newcommand{\vwt}{\tran{\vw}}

\newcommand{\tvw}{\tilde{\mb{w}}}
\newcommand{\tvwi}[1]{\tvw_{#1}}
\newcommand{\tvwii}{\tvwi{t}}

\newcommand{\vv}{\mb{v}}
\newcommand{\vvt}{\tran{\vv}}

\newcommand{\vvi}[1]{\vv_{#1}}
\newcommand{\vvti}[1]{\vvt_{#1}}
\newcommand{\lambdai}[1]{\lambda_{#1}}
\newcommand{\Lambdai}[1]{\Lambda_{#1}}

\newcommand{\vxt}{\tran{\vx}}
\newcommand{\hvx}{\hat{\vx}}
\newcommand{\hvxi}[1]{\hvx_{#1}}
\newcommand{\hvxii}{\hvxi{i}}
\newcommand{\hvxt}{\tran{\hvx}}
\newcommand{\hvxti}[1]{\hvxt_{#1}}
\newcommand{\hvxtii}{\hvxti{i}}
\newcommand{\vxti}[1]{\vxt_{#1}}
\newcommand{\vxtii}{\vxti{i}}

\newcommand{\vb}{\mb{b}}
\newcommand{\vbt}{\tran{\vb}}
\newcommand{\vbi}[1]{\vb_{#1}}

\newcommand{\hvy}{\hat{\vy}}
\newcommand{\hvyi}[1]{\hvy_{#1}}


\renewcommand{\mp}{P}
\newcommand{\mpd}{\mp^{(d)}}
\newcommand{\mpt}{\mp^T}
\newcommand{\tmp}{\tilde{\mp}}
\newcommand{\mpi}[1]{\mp_{#1}}
\newcommand{\mpti}[1]{\mpt_{#1}}
\newcommand{\mptii}{\mpti{i}}
\newcommand{\mpii}{\mpi{i}}
\newcommand{\mps}{Q}
\newcommand{\mpsi}[1]{\mps_{#1}}
\newcommand{\mpsii}{\mpsi{i}}
\newcommand{\tmpt}{\tmp^T}
\newcommand{\mz}{Z}
\newcommand{\mv}{V}
\newcommand{\mvi}[1]{\mv_{#1}}
\newcommand{\mvt}{V^T}
\newcommand{\mvti}[1]{\mvt_{#1}}
\newcommand{\mzt}{\mz^T}
\newcommand{\tmz}{\tilde{\mz}}
\newcommand{\tmzt}{\tmz^T}
\newcommand{\mx}{\mathbf{X}}
\newcommand{\ma}{\mathbf{A}}
\newcommand{\mxs}[1]{\mx_{#1}}

\newcommand{\mai}[1]{\ma_{#1}}
\newcommand{\mat}{\tran{\ma}}
\newcommand{\mati}[1]{\mat_{#1}}

\newcommand{\mc}{{C}}
\newcommand{\mci}[1]{\mc_{#1}}
\newcommand{\mcti}[1]{\mct_{#1}}

\newcommand{\md}{{\mathbf{D}}}
\newcommand{\mdi}[1]{\md_{#1}}

\newcommand{\mxi}[1]{\textrm{diag}^2\paren{\vxi{#1}}}
\newcommand{\mxii}{\mxi{i}}

\newcommand{\hmx}{\hat{\mx}}
\newcommand{\hmxi}[1]{\hmx_{#1}}
\newcommand{\hmxii}{\hmxi{i}}
\newcommand{\hmxt}{\hmx^T}
\newcommand{\mxt}{\mx^\top}
\newcommand{\mi}{\mathbf{I}}
\newcommand{\mq}{Q}
\newcommand{\mqt}{\mq^T}
\newcommand{\mlam}{\Lambda}

\renewcommand{\L}{\mcal{L}}
\newcommand{\R}{\mcal{R}}
\newcommand{\X}{\mcal{X}}
\newcommand{\Y}{\mcal{Y}}
\newcommand{\F}{\mcal{F}}
\newcommand{\nur}[1]{\nu_{#1}}
\newcommand{\lambdar}[1]{\lambda_{#1}}
\newcommand{\gammai}[1]{\gamma_{#1}}
\newcommand{\gammaii}{\gammai{i}}
\newcommand{\alphai}[1]{\alpha_{#1}}
\newcommand{\alphaii}{\alphai{i}}
\newcommand{\lossp}[1]{\ell_{#1}}
\newcommand{\eps}{\epsilon}
\newcommand{\epss}{\eps^*}
\newcommand{\lsep}{\lossp{\eps}}
\newcommand{\lseps}{\lossp{\epss}}
\newcommand{\T}{\mcal{T}}

\newcommand{\kc}[1]{\begin{center}\fbox{\parbox{3in}{{\textcolor{green}{KC: #1}}}}\end{center}}
\newcommand{\edward}[1]{\begin{center}\fbox{\parbox{3in}{{\textcolor{red}{EM: #1}}}}\end{center}}
\newcommand{\itay}[1]{\begin{center}\fbox{\parbox{3in}{{\textcolor{blue}{IE: #1}}}}\end{center}}
\newcommand{\revised}[1]{#1}

\newcommand{\newstuffa}[2]{#2}
\newcommand{\newstufffroma}[1]{}
\newcommand{\newstufftoa}{}

\newcommand{\newstuff}[2]{#2}
\newcommand{\newstufffrom}[1]{}
\newcommand{\newstuffto}{}
\newcommand{\oldnote}[2]{}

\newcommand{\commentout}[1]{}
\newcommand{\mypar}[1]{\medskip\noindent{\bf #1}}

\newcommand{\inner}[2]{\left< {#1} , {#2} \right>}
\newcommand{\kernel}[2]{K\left({#1},{#2} \right)}
\newcommand{\tprr}{\tilde{p}_{rr}}
\newcommand{\hxr}{\hat{x}_{r}}
\newcommand{\projalg}{{PST }}
\newcommand{\projealg}[1]{$\textrm{PST}_{#1}~$}
\newcommand{\gradalg}{{GST }}

\newcounter {mySubCounter}
\newcommand {\twocoleqn}[4]{
  \setcounter {mySubCounter}{0} %
  \let\OldTheEquation \theequation %
  \renewcommand {\theequation }{\OldTheEquation \alph {mySubCounter}}%
  \noindent \hfill%
  \begin{minipage}{.40\textwidth}
\vspace{-0.6cm}
    \begin{equation}\refstepcounter{mySubCounter}
      #1
    \end {equation}
  \end {minipage}
~~~~~~
  \addtocounter {equation}{ -1}%
  \begin{minipage}{.40\textwidth}
\vspace{-0.6cm}
    \begin{equation}\refstepcounter{mySubCounter}
      #3
    \end{equation}
  \end{minipage}%
  \let\theequation\OldTheEquation
}

\newcommand{\vzero}{\mb{0}}

\newcommand{\smargin}{\mcal{M}}

\newcommand{\ai}[1]{A_{#1}}
\newcommand{\bi}[1]{B_{#1}}
\newcommand{\aii}{\ai{i}}
\newcommand{\bii}{\bi{i}}
\newcommand{\betai}[1]{\beta_{#1}}
\newcommand{\betaii}{\betai{i}}
\newcommand{\mar}{M}
\newcommand{\mari}[1]{\mar_{#1}}
\newcommand{\marii}{\mari{i}}
\newcommand{\nmari}[1]{m_{#1}}
\newcommand{\nmarii}{\nmari{i}}

\newcommand{\erf}{\Phi}

\newcommand{\var}{V}
\newcommand{\vari}[1]{\var_{#1}}
\newcommand{\varii}{\vari{i}}

\newcommand{\varb}{v}
\newcommand{\varbi}[1]{\varb_{#1}}
\newcommand{\varbii}{\varbi{i}}

\newcommand{\vara}{u}
\newcommand{\varai}[1]{\vara_{#1}}
\newcommand{\varaii}{\varai{i}}

\newcommand{\marb}{m}
\newcommand{\marbi}[1]{\marb_{#1}}
\newcommand{\marbii}{\marbi{i}}

\newcommand{\algname}{{AROW}}
\newcommand{\rlsname}{{RLS}}
\newcommand{\mrlsname}{{MRLS}}

\newcommand{\phia}{\psi}
\newcommand{\phib}{\xi}

\newcommand{\amsigmaii}{\tilde{\msigma}_t}
\newcommand{\amsigmai}[1]{\tilde{\msigma}_{#1}}
\newcommand{\avmuii}{\tilde{\vmu}_i}
\newcommand{\avmui}[1]{\tilde{\vmu}_{#1}}
\newcommand{\amarbii}{\tilde{\marb}_i}
\newcommand{\avarbii}{\tilde{\varb}_i}
\newcommand{\avaraii}{\tilde{\vara}_i}
\newcommand{\aalphaii}{\tilde{\alpha}_i}

\newcommand{\svar}{v}
\newcommand{\smar}{m}
\newcommand{\nsmar}{\bar{m}}

\newcommand{\vnu}{\mb{\nu}}
\newcommand{\vnut}{\vnu^\top}
\newcommand{\vz}{\mb{z}}
\newcommand{\vZ}{\mb{Z}}
\newcommand{\fphi}{f_{\phi}}
\newcommand{\gphi}{g_{\phi}}


\newcommand{\vtmui}[1]{\tilde{\vmu}_{#1}}
\newcommand{\vtmuii}{\vtmui{i}}

\newcommand{\zetai}[1]{\zeta_{#1}}
\newcommand{\zetaii}{\zetai{i}}


\newcommand{\vstate}{\bf{s}}
\newcommand{\vstatet}[1]{\vstate_{#1}}
\newcommand{\vstatett}{\vstatet{t}}

\newcommand{\mtran}{\bf{\Phi}}
\newcommand{\mtrant}[1]{\mtran_{#1}}
\newcommand{\mtrantt}{\mtrant{t}}

\newcommand{\vstatenoise}{\bf{\eta}}
\newcommand{\vstatenoiset}[1]{\vstatenoise_{#1}}
\newcommand{\vstatenoisett}{\vstatenoiset{t}}

\newcommand{\vobser}{\bf{o}}
\newcommand{\vobsert}[1]{\vobser_{#1}}
\newcommand{\vobsertt}{\vobsert{t}}

\newcommand{\mobser}{\bf{H}}
\newcommand{\mobsert}[1]{\mobser_{#1}}
\newcommand{\mobsertt}{\mobsert{t}}

\newcommand{\vobsernoise}{\bf{\nu}}
\newcommand{\vobsernoiset}[1]{\vobsernoise_{#1}}
\newcommand{\vobsernoisett}{\vobsernoiset{t}}

\newcommand{\mstatenoisecov}{\bf{Q}}
\newcommand{\mstatenoisecovt}[1]{\mstatenoisecov_{#1}}
\newcommand{\mstatenoisecovtt}{\mstatenoisecovt{t}}

\newcommand{\mobsernoisecov}{\bf{R}}
\newcommand{\mobsernoisecovt}[1]{\mobsernoisecov_{#1}}
\newcommand{\mobsernoisecovtt}{\mobsernoisecovt{t}}

\newcommand{\vestate}{\bf{\hat{s}}}
\newcommand{\vestatet}[1]{\vestate_{#1}}
\newcommand{\vestatett}{\vestatet{t}}
\newcommand{\vestatept}[1]{\vestatet{#1}^+}
\newcommand{\vestatent}[1]{\vestatet{#1}^-}

\newcommand{\mcovar}{\bf{P}}
\newcommand{\mcovart}[1]{\mcovar_{#1}}
\newcommand{\mcovarpt}[1]{\mcovart{#1}^+}
\newcommand{\mcovarnt}[1]{\mcovart{#1}^-}

\newcommand{\mkalmangain}{\bf{K}}
\newcommand{\mkalmangaint}[1]{\mkalmangain_{#1}}

\newcommand{\vkalmangain}{\bf{\kappa}}
\newcommand{\vkalmangaint}[1]{\vkalmangain_{#1}}

\newcommand{\obsernoise}{{\nu}}
\newcommand{\obsernoiset}[1]{\obsernoise_{#1}}
\newcommand{\obsernoisett}{\obsernoiset{t}}

\newcommand{\obsernoisecov}{r}
\newcommand{\obsernoisecovt}[1]{\obsernoisecov_{#1}}
\newcommand{\obsernoisecovtt}{\obsernoisecov}

\newcommand{\obsnscv}{s}
\newcommand{\obsnscvt}[1]{\obsnscv_{#1}}
\newcommand{\obsnscvtt}{\obsnscvt{t}}

\newcommand{\Psit}[1]{\Psi_{#1}}
\newcommand{\Psitt}{\Psit{t}}

\newcommand{\Omegat}[1]{\Omega_{#1}}
\newcommand{\Omegatt}{\Omegat{t}}

\newcommand{\ellt}[1]{\ell_{#1}}
\newcommand{\gllt}[1]{g_{#1}}

\newcommand{\chit}[1]{\chi_{#1}}

\newcommand{\ms}{\mathcal{M}}
\newcommand{\us}{\mathcal{U}}
\newcommand{\as}{\mathcal{A}}

\newcommand{\mn}{M}
\newcommand{\un}{U}

\newcommand{\seti}[1]{S_{#1}}

\newcommand{\obj}{\mcal{C}}

\newcommand{\dta}[3]{d_{#3}\paren{#1,#2}}

\newcommand{\coa}{a}
\newcommand{\coc}{c}
\newcommand{\cob}{b}
\newcommand{\cor}{r}
\newcommand{\conu}{\nu}

\newcommand{\coat}[1]{\coa_{#1}}
\newcommand{\coct}[1]{\coc_{#1}}
\newcommand{\cobt}[1]{\cob_{#1}}
\newcommand{\cort}[1]{\cor_{#1}}
\newcommand{\conut}[1]{\conu_{#1}}

\newcommand{\coatt}{\coat{t}}
\newcommand{\coctt}{\coct{t}}
\newcommand{\cobtt}{\cobt{t}}
\newcommand{\cortt}{\cort{t}}
\newcommand{\conutt}{\conut{t}}

\newcommand{\rb}{R_B}
\newcommand{\proj}{\textrm{proj}}

\section{Introduction}

Multiclass classification is the task of assigning instances with a
category or class from a finite set.
Its numerous applications range from finding a topic of a news item,
via classifying objects in images,
via spoken words detection, to predicting the next word in a sentence.
Our ability to solve multiclass problems with larger and
larger sets improves with computation power.
Recent research focuses on \textit{extreme classification}
where the number of possible classes $K$ is extremely large.

In such cases, previously developed methods, such as {\em One-vs-One (OVO)}~\cite{Furnkranz:2002:RRC:944790.944803},
\textit{One-vs-Rest (OVR)}~\cite{CortesVa95} and multiclass SVMs ~\cite{WestonWa99,BredensteinerBe99,CrammerSi01a,Mesterharm99},
 that scale linearly in the
number of classes $K$, are not feasible.
These methods maintain too large models, that cannot be stored easily.
Moreover, their training and inference times are at least linear in $K$,
and thus do not scale for extreme classification problems.

Recently, Jasinska and Karampatziakis \cite{Jasinska2016} proposed a Log-Time Log-Space
(LTLS) approach, representing classes as paths on graphs.
LTLS is very efficient,
but has a limited representation,
resulting in an inferior accuracy compared to other methods.
More than a decade earlier,
Allwein et al. \cite{Allwein2001_new} presented a unified view of error
correcting output coding (ECOC) for classification, as well as
the loss-based decoding framework.
They showed its superiority over
Hamming decoding, both theoretically and empirically.

In this work we build on these two works and introduce an
\emph{efficient} (i.e. $O(\log K)$ time and space) loss-based learning and
decoding algorithm for \emph{any} loss function of the binary
learners' margin.
We show that LTLS can be seen as a special case of ECOC.
We also make a more general connection between
loss-based decoding and graph-based representations and inference. 
Based on the theoretical framework and analysis derived by \cite{Allwein2001_new}
for loss-based decoding,
we gain insights on how to
improve on the specific graphs proposed in LTLS by using more general
trellis graphs -- which we name \textit{Wide-LTLS (W-LTLS)}.
Our method profits
from the best of both worlds: better accuracy as in loss-based decoding, and
the logarithmic time and space of LTLS.
Our empirical study suggests that by employing coding matrices induced by different trellis graphs,
our method allows tradeoffs between accuracy, model size, and inference time,
especially appealing for extreme classification.

%

\section{Problem setting}
\label{sec:problem_setting}

We consider \emph{multiclass classification} with $K$ classes, where $K$ is very large. 
Given a training set of $m$ examples $\left(x_i,y_i\right)$ 
for $x_i\in \mathcal{X}\subseteq\mathbb{R}^d$ and $y_i\in \mathcal{Y}=\{1,...,K\}$ 
our goal is to learn a mapping from $\mathcal{X}$ to $\mathcal{Y}$. 
We focus on the 0/1 loss and evaluate the performance 
of the learned mapping by measuring its accuracy on a test set -- 
i.e. the fraction of instances with a correct prediction. 
Formally, the accuracy of a mapping $h:\mathcal{X}\rightarrow\mathcal{Y}$ 
on a set of $n$ pairs, $\left\{ \left(x_i, y_i\right) \right\}_{i=1}^n$, 
is defined as $\frac{1}{n} \sum_{i=1}^n \mathbf{1}_{h(x_i)=y_i}$,
where $\mathbf{1}_z$ equals $1$ if the predicate $z$ is true, and $0$ otherwise.

\section{Error Correcting Output Coding (ECOC)}
\label{sec:ecoc}

Dietterich and Bakiri \cite{Dietterich1995} employed ideas from coding theory \cite{Lin_2004} to create \textit{Error Correcting Output Coding (ECOC)} -- a reduction from a multiclass classification problem to multiple binary classification subproblems.
In this scheme, each class is assigned with a (distinct) binary codeword of $\ell$ bits (with values in $\{-1,+1\}$).
The $K$ codewords create a matrix $M\in\left\{-1,+1\right\}^{K\times{\ell}}$ whose rows are the codewords
and whose columns induce $\ell$ partitions of the classes into two subsets.
Each of these partitions induces a binary classification subproblem.
We denote by $M_k$ the $k$th row of the matrix, and by $M_{k,j}$ its $(k,j)$ entry.
In the j$th$ partition, class $k$ is assigned with the binary label $M_{k,j}$.

ECOC introduces redundancy in order to acquire
error-correcting capabilities such as a minimum Hamming distance between codewords.
The Hamming distance between two codewords $M_a, M_b$ is defined as $\rho(a,b)\triangleq\sum_{j=1}^{\ell}\frac{1-M_{a,j}M_{b,j}}{2}$,
and the minimum Hamming distance of $M$ is $\rho=\min_{a\neq{b}}\rho(a,b)$.
A high minimum distance of the coding matrix
potentially allows overcoming binary classification errors during inference time.

At training time, this scheme generates $\ell$ binary classification training sets of the form $\left\{x_i, M_{y_i, j}\right\}^m_{i=1}$ for $j=1,\dots,\ell$,
and executes some binary classification learning algorithm that returns $\ell$ classifiers,
each trained on one of these sets.
We assume these classifiers are margin-based, that is,
each classifier is a real-valued function,
$f_j:\mathcal{X}\to\mathbb{R}$,
whose binary prediction for an input $x$ is $\sign\left(f_j\left(x\right)\right)$.
The binary classification learning algorithm defines a margin-based loss $L:\reals\rightarrow\reals_+$, 
and minimizes the average loss over the induced set.
Formally, $f_j = \arg\min_{f\in\mathcal{F}} \frac{1}{m}\sum_{i=1}^m L\left( M_{y_i,j} f\left(x_i\right) \right)$,
where $\mathcal{F}$ is a class of functions,
such as the class of bounded linear functions.
Few well known loss functions are the hinge loss $L(z) \triangleq \max\left(0,1-z\right)$, used by SVM,
its square, the log loss $L\left(z\right)\triangleq\log\left(1+e^{-z}\right)$ used in logistic regression,
and the exponential loss $L\left(z\right)\triangleq e^{-z}$ used in AdaBoost \cite{DBLP:conf/birthday/Schapire13}.


Once these classifiers are trained, a straightforward inference is performed.
Given an input $x$, the algorithm first applies the
$\ell$ functions on $x$ and computes a $\{\pm 1\}$-vector of size $\ell$,
that is $\left(\sign\left(f_1\left(x\right)\right) \dots \sign\left(f_\ell\left(x\right)\right)\right)$.
Then, the class $k$ which is assigned to the codeword closest
in Hamming distance to this vector is returned.
This inference scheme is often called {\em Hamming decoding}.

The Hamming decoding uses only the binary prediction of the binary learners,
ignoring the confidence each learner has in its prediction per input.
Allwein et al. \cite{Allwein2001_new} showed that this margin or confidence
holds valuable information for predicting a class $y\in\mathcal{Y}$,
and proposed the {\em loss-based decoding} framework for ECOC\footnote{
Another contribution of their work, less relevant to our work, is a unifying approach for multiclass classification tasks. 
They showed that many popular approaches are unified into a framework of sparse (ternary) coding schemes with a coding matrix $M\in\left\{-1,0,1\right\}^{K\times{\ell}}$.
For example, One-vs-Rest (OVR) could be thought of
as $K\times{K}$ matrix whose diagonal elements are 1, and the rest are -1.}.
In loss-based decoding, the margin is incorporated via the loss function $L(z)$.
Specifically, the class predicted is the one minimizing the total loss
\begin{align}
\label{loss_based}
k^*=\arg\min_{k} \sum_{j=1}^{\ell} L\left(M_{k,j}f_j\left(x\right)\right)~.
\end{align}

They \cite{Allwein2001_new} also developed error bounds and showed theoretically and empirically that
loss-based decoding outperforms Hamming decoding.

One drawback of their method is that given a loss function $L$,
loss-based decoding requires an exhaustive evaluation
of the total loss for each codeword $M_k$ (each row of the coding matrix).
This implies a decoding time at least linear in $K$,
making it intractable for extreme classification.
We address this problem below.



\section{LTLS}
\label{sec:ltls}

A recent extreme classification approach,
proposed by Jasinska and Karampatziakis \cite{Jasinska2016},
performs training and inference in time and space logarithmic in $K$,
by embedding the $K$ classes
into $K$ paths of a directed-acyclic trellis graph $T$,
built compactly with $\ell=\bigO{\log{K}}$ edges.
We denote the set of vertices $V$ and set of edges $E$.
A multiclass model is defined using $\ell$ functions from the feature space to the reals,
$w_{j}\left(x\right)$, one function per edge in $E=\{e_j\}_{j=1}^\ell$.
Given an input, the algorithm assigns weights to the edges,
and computes the \textit{heaviest path}
using the Viterbi \cite{Viterbi67} algorithm in $\bigO{\left|E\right|}=\bigO{\log{K}}$ time.
It then outputs the class (from $\mathcal{Y}$) assigned to the heaviest path.

Jasinska and Karampatziakis \cite{Jasinska2016} proposed to train the model in an online manner.
The algorithm maintains $\ell$ functions $f_j(x)$ and works in rounds.
In each training round a specific input-output pair $\left(x_i, y_i\right)$ is considered,
the algorithm performs inference using the $\ell$ functions to predict a class $\hat{y}_i$,
and the functions $f_j(x)$ are modified to improve the overall prediction for $x_i$
according to $y_i,\hat{y}_i$.
 The inference performed during train and test times,
 includes using the obtained functions $f_j(x)$
 to compute the weights $w_j(x)$ of each input,
 by simply setting $w_j(x)=f_j(x)$.
 Specifically, they used margin-based learning algorithms,
 where $f_j(x)$ is the margin of a binary prediction.


Our first contribution is the observation that the LTLS approach can be thought of as an ECOC scheme,
in which the codewords (rows) represent paths in the trellis graph,
and the columns correspond to edges on the graph.
\figref{fig:path-to-code} illustrates how a codeword corresponds to a path on the graph.

\begin{figure}
    \centering
    \begin{minipage}[t]{.48\textwidth}
      \includegraphics[width=\linewidth]{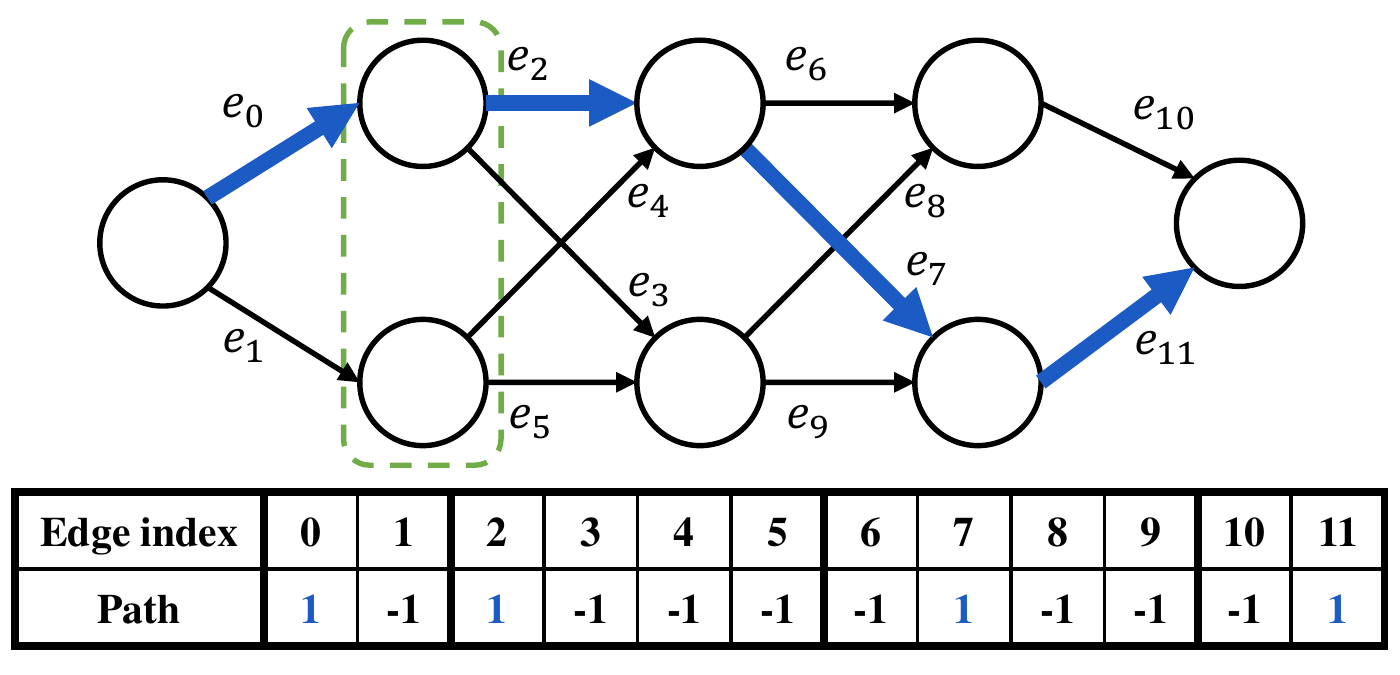}
      \vskip -0.1in
      \caption{Path codeword representation.
      An entry containing 1 means that the corresponding edge is a part of the illustrated bold \textcolor[rgb]{0.00,0.00,0.93}{blue} path.
      The green dashed rectangle shows a vertical slice.}
      \label{fig:path-to-code}
    \end{minipage}
    \hspace{.25cm}
    \begin{minipage}[t]{.48\textwidth}
      \includegraphics[width=\linewidth]{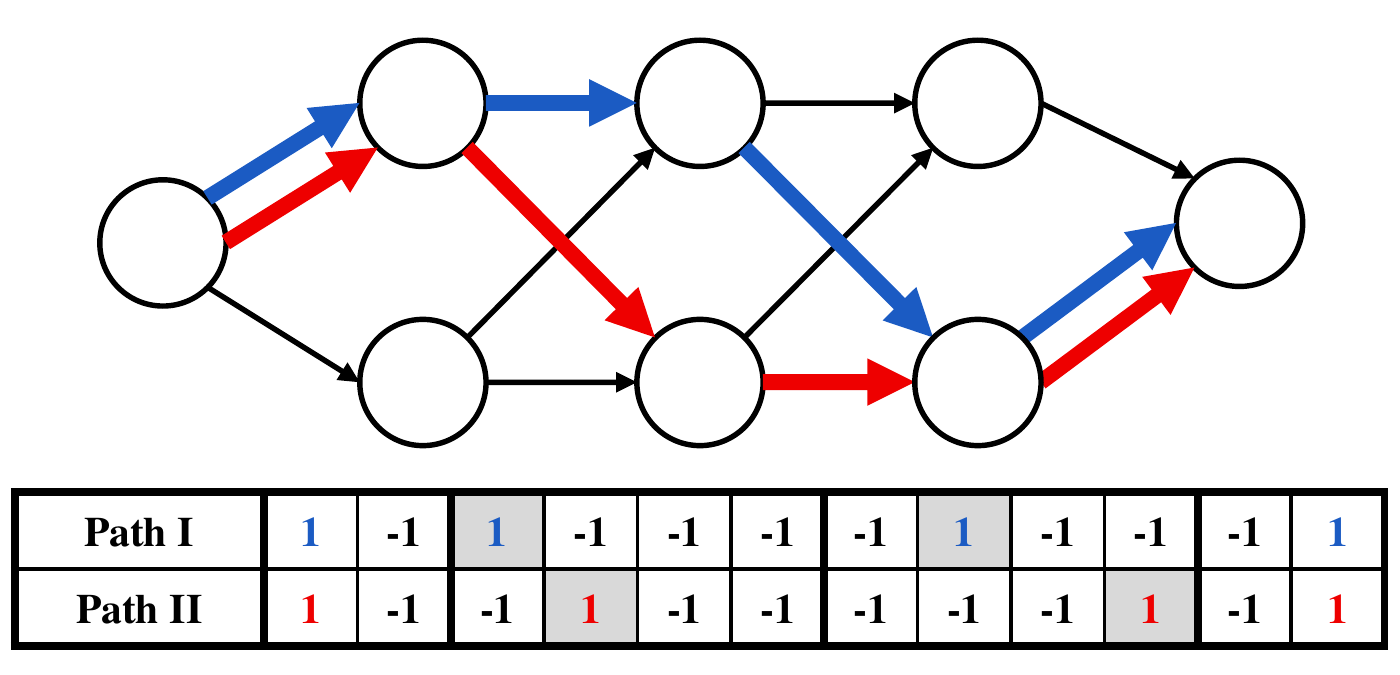}
      \vskip -0.1in
      \caption{Two closest paths. Predicting Path II (\textcolor[rgb]{0.89,0.00,0.00}{red}) instead of I (\textcolor[rgb]{0.00,0.00,0.93}{blue}),
      will result in a prediction error.
      The Hamming distance between the corresponding codewords is 4.
      The highlighted entries correspond to the 4 disagreement edges.}
      \label{fig:path-decode-mistake}
    \end{minipage}
\end{figure}


It might seem like this approach can represent only numbers of classes $K$ which are powers of $2$.
However, in \appref{subsec:graphConstruction} we show how to create trellis graphs with exactly $K$ paths,
for any $K\in\mathbb{N}$.

\subsection{Path assignment}
\label{pathAssignment}
LTLS requires a bijective mapping between paths to classes and vice versa.
It was proposed in \cite{Jasinska2016} to employ a greedy assignment policy suitable for online learning,
where during training, a sample whose class is yet unassigned with a path,
is assigned with the heaviest unassigned path.
One could also consider a naive random assignment between paths and classes.

\subsection{Limitations}
\label{sec:limitations}

The elegant LTLS construction suffers from two limitations:

\begin{enumerate}[leftmargin=*]

    \item \textbf{Difficult induced binary subproblems}:
      The induced binary subproblems are hard, especially when learned with \emph{linear} classifiers.
      Each path uses one of four edges between every two adjacent vertical slices.
      Therefore, \revised{each edge is used by $\frac{1}{4}$ of the classes,
      inducing a} \textit{$\frac{1}{4}K$-vs-$\frac{3}{4}K$} subproblem.
      Similarly, the edges connected to the source or sink induce
      \textit{$\frac{1}{2}K$-vs-$\frac{1}{2}K$} subproblems.
      In both cases classes are split into two groups,
      almost arbitrarily, with no clear semantic interpretation for that partition.
      For comparison, in 1-vs-Rest (OVR) the induced subproblems are considered much simpler as they require classifying only one class vs the rest\footnote{
        \label{foot:OVR}
        A similar observation is given in Section 6 of Allwein et al. \cite{Allwein2001_new} regarding OVR.
        } (meaning they are much less \emph{balanced}).

    \item \textbf{Low minimum distance}:
        \revised{In the LTLS trellis architecture, every path has another (closest) path
        within 2 edge deletions and 2 edge insertions} (see \figref{fig:path-decode-mistake}).
        Thus, the \textit{minimum Hamming distance}
        in the underlying coding matrix is restrictively small: $\rho=4$,
        which might imply a poor error correcting capability.
        The OVR coding matrix also suffers from a small minimum distance ($\rho=2$),
        but as we explained,
        the induced subproblems are very simple,
        allowing a higher classification accuracy in many cases.

\end{enumerate}

We focus on improving the multiclass accuracy by tackling the first limitation,
namely making the underlying binary subproblems easier.
Addressing the second limitation is deferred to future work.


\section{Efficient loss-based decoding}
\label{sec:loss_based_decoding}
We now introduce another contribution --
a new algorithm performing efficient loss-based decoding (inference)
for \emph{any} loss function by exploiting the structure of trellis graphs.
Similarly to \cite{Jasinska2016}, our decoding algorithm performs inference in two steps.
First, it assigns (per input $x$ to be classified) weights
$\left\{w_{j}\left(x\right)\right\}_{j=1}^\ell$
to the edges $\left\{e_j\right\}_{j=1}^\ell$ of the trellis graph.
Second, it finds the \emph{shortest} path (instead of the heaviest) $P_{k^{*}}$
by an efficient dynamic programming (Viterbi) algorithm and predicts the class $k^{*}$.
Unlike \cite{Jasinska2016},
our strategy for assigning edge weights ensures that for any class $k$,
the weight of the path assigned to this class,
$w\left(P_k\right)\triangleq\sum_{j:e_j\in P_k}w_{j}\left(x\right)$,
equals the total loss $\sum_{j=1}^{\ell} L\left(M_{k,j}f_j\left(x\right)\right)$
for the classified input $x$.
Therefore, finding the shortest path on the graph is equivalent to minimizing the total loss,
which is the aim in loss-based decoding.
In other words, we design a new weighting scheme
that links loss-based decoding
to the shortest path in a graph.

We now describe our algorithm in more detail
for the case when the number of classes $K$ is a power of 2
(see \appref{subsec:arbitraryLBDGeneralization} for extension to arbitrary $K$).
Consider a directed edge $e_j\in{E}$
and denote by $\left(u_j,v_j\right)$ the two vertices it connects.
Denote by $S\left(e_j\right)$ the set of edges outgoing from the same vertical slice as $e_j$.
Formally, $S\left(e_j\right)=\left\{\left(u,u'\right):
    \delta\left(u\right)=\delta\left(u_j\right)\right\}$,
where $\delta\left(v\right)$ is the shortest distance
from the source vertex to $v$ (in terms of number of edges).
 For example, in \figref{fig:path-to-code},
$S\left(e_0\right)=S\left(e_1\right)=\left\{e_0,e_1\right\}$, $S\left(e_2\right)=S\left(e_3\right)=S\left(e_4\right)=S\left(e_5\right)=\left\{e_2,e_3,e_4,e_5\right\}$.
Given a loss function $L\left(z\right)$ and an input instance $x$, we set the weight $w_j$ for edge $e_j$ as following,
\begin{align}
  w_{j}\left(x\right) = L\left(1\times f_j(x)\right) + \sum_{j{'}:e_{j'}\in S\left(e_j\right)\backslash \left\{e_j\right\}}  L\left((-1)\times f_{j'}(x)\right) ~.
\label{w_i}
\end{align}
For example, in \figref{fig:path-to-code} we have,
\begin{align*}
 w_0\left(x\right) = &~L\left(1\times f_0(x)\right) + L\left(\left(-1\right)\times f_1(x)\right) \\
 w_2\left(x\right) = &~L\left(1\times f_2(x)\right) + L\left(\left(-1\right)\times f_3(x)\right)
 +L\left(\left(-1\right)\times f_4(x)\right) +L\left((-1)\times f_5(x)\right) ~.
\end{align*}

The next theorem states that for our choice of weights,
finding the shortest path in the weighted graph
is equivalent to loss-based decoding.
Thus, algorithmically we can enjoy fast decoding (i.e. inference),
and statistically we can enjoy better performance by using loss-based decoding.

\begin{theorem}
Let $L\left(z\right)$ be any loss function of the margin.
Let $T$ be a trellis graph with an underlying coding matrix $M$.
Assume that for any $x\in \mathcal{X}$ the edge weights are calculated as in Eq. \eqref{w_i}.
Then, the weight of any path $P_k$ equals to the loss suffered by predicting its corresponding class $k$,
i.e. $w(P_k)=\sum_{j=1}^{\ell} L\left(M_{k,j}f_j(x)\right)$.
\label{thm:loss_based_decoding}
\end{theorem}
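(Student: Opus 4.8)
The plan is to exploit the layered structure of the trellis graph, together with the path-to-codeword convention of \figref{fig:path-to-code} (an edge $e_j$ lies on path $P_k$ iff $M_{k,j}=+1$, and $M_{k,j}=-1$ otherwise). First I would note that the relation "$e_j$ and $e_{j'}$ leave vertices at the same shortest-distance level from the source," i.e. $\delta(u_j)=\delta(u_{j'})$, is an equivalence relation on $E$, so the slices $\{S(e_j)\}$ are exactly the blocks of a partition $S_1,\dots,S_r$ of the edge set. Since every codeword-path $P_k$ runs from the source to the sink, it passes through exactly one vertex at each level and therefore contains precisely one edge from each slice; I denote by $e_{j(k,s)}$ the unique edge of $P_k$ lying in slice $S_s$. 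Equivalently, within a fixed slice $S_s$ the sign pattern of the codeword is $M_{k,j}=+1$ for the single on-path edge $e_{j(k,s)}$ and $M_{k,j}=-1$ for every other edge of $S_s$.

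Next I would regroup the total loss by slices. Because the slices partition the columns $1,\dots,\ell$,
\[
\sum_{j=1}^{\ell} L(M_{k,j} f_j(x)) = \sum_{s=1}^{r} \sum_{j:\, e_j \in S_s} L(M_{k,j} f_j(x)).
\]
Within a fixed slice $S_s$ I would split the inner sum into the on-path term and the off-path terms and substitute the sign pattern established above, obtaining
\[
\sum_{j:\, e_j \in S_s} L(M_{k,j} f_j(x)) = L(f_{j(k,s)}(x)) + \sum_{j':\, e_{j'} \in S_s \setminus \{e_{j(k,s)}\}} L(-f_{j'}(x)).
\]
By the definition of the edge weight in Eq. \eqref{w_i} applied to $e_{j(k,s)}$ (whose slice is $S(e_{j(k,s)})=S_s$), the right-hand side is exactly $w_{j(k,s)}(x)$, the weight of the unique on-path edge of slice $S_s$.

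Finally, summing over all slices and using that the edge set of $P_k$ is exactly $\{e_{j(k,s)}\}_{s=1}^{r}$,
\[
w(P_k) = \sum_{j:\, e_j \in P_k} w_j(x) = \sum_{s=1}^{r} w_{j(k,s)}(x) = \sum_{s=1}^{r} \sum_{j:\, e_j \in S_s} L(M_{k,j} f_j(x)) = \sum_{j=1}^{\ell} L(M_{k,j} f_j(x)),
\]
which is the claim. The only genuinely nontrivial step is the structural one: verifying that the slices form a partition of $E$ and that each source-to-sink path meets each slice in exactly one edge. I expect this to be the main obstacle, because it is precisely what guarantees simultaneously that no column is double-counted and that the "$+1$ on-path / $-1$ off-path" bookkeeping inside each slice is complete; once it is in place, the remainder is a direct substitution of the weight definition \eqref{w_i}. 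Note that the argument uses only that $T$ is a layered source-to-sink trellis and the coding-matrix convention, so it holds for \emph{any} margin loss $L$, as asserted.
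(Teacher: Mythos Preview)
Your proposal is correct and follows essentially the same approach as the paper's proof: both arguments hinge on the observation that the slices $S(e_j)$ partition the edge set and that each path $P_k$ contains exactly one edge per slice, so the weight of the on-path edge in each slice collects the loss contributions (with the correct signs $M_{k,j}$) of every edge in that slice. The paper compresses this into a four-step chain of equalities with the structural facts left implicit (attributed to ``the path codeword representation''), whereas you spell out the partition-and-regroup argument explicitly; the underlying reasoning is identical.
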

The proof appears in \appref{supp:proof_thm_loss_based_decoding}.
In the next lemma we claim that LTLS decoding is a special case of loss-based decoding with the squared loss function. See \appref{supp:proof_lemma_ltls_decoding} for proof.
\begin{lemma}
Denote the squared loss function by $L_{sq}(z)\triangleq\left(1-z\right)^2$.
Given a trellis graph represented using a coding matrix $M\in\{-1,+1\}^{K\times \ell}$,
and $\ell$ functions $f_j\left(x\right)$, for $j = 1 \dots \ell$,
the decoding method of LTLS (mentioned in \secref{sec:ltls})
is a special case of loss-based decoding with the squared loss,
that is
$\arg\max_{k}w\left(P_{k}\right) = \arg\min_{k}\left\{ \sum_{j}L_{sq}\left(M_{k,j}f_j\left(x\right)\right) \right\}~.$
\label{lemma:ltls_decoding}
\end{lemma}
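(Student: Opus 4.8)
The plan is to reduce the squared-loss decoding rule to a purely linear objective in the class index $k$ by discarding every term that does not depend on $k$, and then to recognize the surviving term as a monotone function of the LTLS path weight $w(P_k)$. Throughout I would lean on the fact that the coding matrix here is \emph{binary}, i.e. $M_{k,j}\in\{-1,+1\}$, which is exactly what makes the quadratic part of the squared loss collapse into a class-independent constant.

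First I would expand a single term of the loss. Since $L_{sq}(z)=(1-z)^2$, we have $L_{sq}(M_{k,j}f_j(x)) = 1 - 2M_{k,j}f_j(x) + M_{k,j}^2 f_j(x)^2$, and because $M_{k,j}\in\{-1,+1\}$ we have $M_{k,j}^2=1$, so this equals $1 - 2M_{k,j}f_j(x) + f_j(x)^2$. Summing over the $\ell$ columns gives $\sum_j L_{sq}(M_{k,j}f_j(x)) = \ell + \sum_j f_j(x)^2 - 2\sum_j M_{k,j}f_j(x)$. The first two summands are identical for every class $k$, and hence drop out of the minimization, yielding $\arg\min_k \sum_j L_{sq}(M_{k,j}f_j(x)) = \arg\max_k \sum_j M_{k,j}f_j(x)$.

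Next I would connect $\sum_j M_{k,j}f_j(x)$ to the path weight. By the path--codeword correspondence (\figref{fig:path-to-code}), $M_{k,j}=+1$ exactly when edge $e_j$ lies on the path $P_k$ and $M_{k,j}=-1$ otherwise. Splitting the sum into on-path and off-path edges and recalling that $w(P_k)=\sum_{j:e_j\in P_k}f_j(x)$, I get $\sum_j M_{k,j}f_j(x) = \sum_{j:e_j\in P_k}f_j(x) - \sum_{j:e_j\notin P_k}f_j(x) = 2w(P_k) - \sum_{j=1}^\ell f_j(x)$. The total $\sum_{j=1}^\ell f_j(x)$ is again independent of $k$, so maximizing $\sum_j M_{k,j}f_j(x)$ over $k$ coincides with maximizing $w(P_k)$, which is precisely the heaviest-path rule of LTLS. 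Chaining the two reductions establishes the claimed identity.

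There is no serious obstacle here; the argument is essentially careful bookkeeping of which terms are constant in $k$. The one point that genuinely relies on the structure of the problem -- and the only place I would be careful -- is the use of $M_{k,j}^2=1$: it requires $M$ to be a $\{-1,+1\}$ (rather than sparse/ternary) matrix, so that $\sum_j f_j(x)^2$ is truly class-independent. Were the matrix ternary, this step would fail and the equivalence between the squared loss and the heaviest-path rule would no longer hold.
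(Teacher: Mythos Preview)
Your proof is correct and is essentially the same argument as the paper's, run in the opposite direction: the paper starts from $\arg\max_k w(P_k)$ and manipulates toward the squared-loss expression, while you start from the squared loss and reduce to $w(P_k)$. Both rely on exactly the same two facts---$M_{k,j}^2=1$ and the path--codeword correspondence---to strip away the $k$-independent terms.
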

%
%
We next build on the framework of \cite{Allwein2001_new} to design graphs with a better multiclass accuracy. 
\section{Wide-LTLS (W-LTLS)}
\label{sec:Wltls}
Allwein et al. \cite{Allwein2001_new} derived error bounds for loss-based decoding with any convex loss function $L$. They showed that the training multiclass error with loss-based decoding is upper bounded by:
\begin{align}
\label{bound}
\frac{\ell\times\varepsilon}{\rho\times{L(0)}}
\end{align}
 where $\rho$ is the minimum Hamming distance of the code and
 \begin{align}
\label{avgBinaryLoss}
\varepsilon=\frac{1}{m\ell}\sum_{i=1}^{m}{\sum_{j=1}^{\ell}{L\left(M_{y_{i},j}f_{j}{\left(x_{i}\right)}\right)}}
\end{align}
 is the average binary loss on the training set of the learned functions $\{f_{j}\}_{j=1}^\ell$ with respect to a coding matrix $M$ and a loss $L$.
 One approach to reduce the bound, and thus hopefully also the multiclass training error (and under some conditions also the test error) is to reduce the \textit{total error} of the binary problems $\ell\times\varepsilon$. We now show how to achieve this by generalizing the LTLS framework to a more flexible architecture which we call W-LTLS
 \footnote{Code is available online at https://github.com/ievron/wltls/}.


\begin{figure}[t]
\begin{center}
\centerline{\includegraphics[width=.9\linewidth]{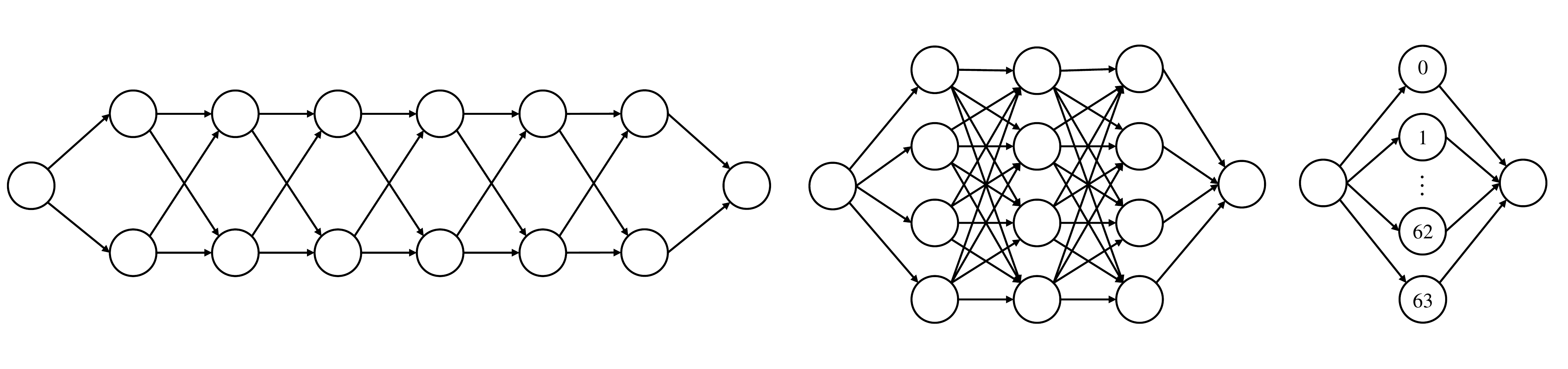}}
\vskip -0.15in
\caption{Different
graphs for $K=64$ classes. From left to right:
the LTLS graph with a slice width of $b=2$,
W-LTLS with $b=4$,
and the widest W-LTLS graph with $b=64$, corresponding to OVR.}
\label{fig:slice-width}
\end{center}
\vskip -0.2in
\end{figure}

Motivated by the error bound of \cite{Allwein2001_new},
we propose a generalization of the LTLS model.
By increasing the \emph{slice width} of the trellis graph,
and consequently increasing the number of edges between adjacent vertical slices,
the induced subproblems become less balanced and potentially easier to learn (see \remref{foot:OVR}).
For simplicity we choose a fixed slice width $b\in\left\{2,\dots,K\right\}$ for the entire graph
(e.g. see \figref{fig:slice-width}).
In such a graph, most of the induced subproblems are \textit{$\frac{1}{b^2}K$-vs-rest}
(corresponding to edges between adjacent slices)
and some are \textit{$\frac{1}{b}K$-vs-rest} (the ones connected to the source or to the sink).
As $b$ increases, the graph representation becomes less compact and requires more edges,
\revised{i.e. $\ell$ increases}.
However, the induced subproblems potentially become easier, improving the multiclass accuracy.
This suggests that our model allows an \textit{accuracy vs model size tradeoff}.

In the special case where $b=K$ we get the widest graph containing $2K$ edges
(see \figref{fig:slice-width}).
All the subproblems are now \textit{$1$-vs-rest}:
the $k$th path from the source to the sink contains two edges
(one from the source and one to the sink)
which are not a part of any other path.
Thus, the corresponding two columns in the underlying coding matrix are identical --
having $1$ at their $k$th entry and $\left(-1\right)$ at the rest.
This implies that the distinct columns of the matrix
could be rearranged as the diagonal coding matrix corresponding to OVR,
 making our model when $b=K$ an implementation of OVR.

In \secref{sec:experiments} we show empirically that W-LTLS improves the multiclass accuracy of LTLS.
\revised{In \appref{supp:avgBinaryLoss} we show that the binary subproblems indeed become easier,
i.e. we observe a decrease in the average binary loss $\varepsilon$,
lowering the bound in \eqref{bound}}.
Note that the denominator $\rho\times L\left(0\right)$ is left untouched --
the minimum distance of the coding matrices corresponding to different architectures of W-LTLS is still 4,
like in the original LTLS model (see \secref{sec:limitations}).

\subsection{Time and space complexity analysis}
\label{complexityAnalysis}

W-LTLS requires training and storing a binary learner for every edge.
For most linear classifiers (with $d$ parameters each)
we get\footnote{
Clearly, when $b\approx \sqrt{K}$ our method cannot be regarded as sublinear in $K$ anymore.
However, our empirical study shows that high accuracy
can be achieved using much smaller values of $b$.
} a total \emph{model size complexity}
and an \emph{inference time complexity}
of $\bigO{d\left|E\right|}=\bigO{d\frac{b^2}{\log{b}}\log{K}}$
(see \appref{supp:complexity} for further details).
Moreover, many extreme classification datasets are sparse --
the average number of non-zero features in a sample is $d_e\ll d$.
The inference time complexity thus decreases to $\bigO{d_e\frac{b^2}{\log{b}}\log{K}}$.

This is a significant advantage:
while inference with loss-based decoding for general matrices requires $\bigO{d_e\ell+K\ell}$ time,
our model performs it in only $\bigO{d_e\ell+\ell}=\bigO{d_e\ell}$.

\revised{
Since training requires learning $\ell$ binary subproblems,
the \emph{training time complexity} is also sublinear in $K$.
These subproblems can be learned separately on $\ell$ cores,
leading to major speedups.
}

\subsection{\revised{Wider graphs induce sparse models}}
\label{sec:sparsity}

The high sparsity typical to extreme classification datasets
(e.g. the {\tt Dmoz} dataset has $d=833,484$ features,
but on average only $d_e=174$ of them are non-zero),
is heavily exploited by previous works
such as PD-Sparse \cite{En-HsuYen2016},
PPDSparse \cite{Yen:2017:PPP:3097983.3098083},
and DiSMEC \cite{Babbar2017}, which all learn sparse models.

Indeed, we find that for sparse datasets,
our algorithm typically learns a model with a low percentage of non-zero weights.
Moreover, the percentage of non-zero decreases significantly as the slice width $b$ is increased
(see \appref{app:sparsityExperimental}).
This allows us to employ a simple post-pruning of the learned weights.
For some threshold value $\lambda$,
we set to zero all learned weights in $\left[-\lambda, \lambda\right]$,
yielding a sparse model.
Similar approaches were taken by \cite{Babbar2017, Jasinska2016, En-HsuYen2016}
either explicitly or implicitly.

In \secref{sec:sparsityExperiment} we show that
the above scheme successfully yields highly sparse models.

\section{Experiments}
\label{sec:experiments}

We test our algorithms on 5 extreme multiclass datasets previously used in
\cite{En-HsuYen2016}, having approximately $10^2$, $10^3$, and $10^4$
classes (see \tabref{datasets_table} in \appref{supp:datasets}).
We use AROW \cite{Crammer2009} to train the binary functions $\left\{f_j\right\}_{j=1}^{\ell}$ of W-LTLS.
Its online updates are based on the squared hinge loss $L_{SH}\left(z\right)\triangleq\left(\max\left(0,1-z\right)\right)^2$.
For each dataset, we build wide graphs with multiple slice widths.
For each configuration (dataset and graph) we perform five runs using
random sample shuffling on every epoch,
and a random path assignment (as explained in \secref{pathAssignment},
unlike the greedy policy used in \cite{Jasinska2016}),
and report averages over these five runs.
%
%
Unlike \cite{Jasinska2016}, we train the $\ell$ binary learners
{\em independently} rather than in a \revised{joint (structured)} manner.
This allows parallel independent training,
as common for training binary learners for ECOC,
with no need to perform full multiclass inference during training.

\subsection{Loss-based decoding}
\label{lbd-simulations}

\begin{figure*}[t]
\vskip 0.2in
\begin{center}

\includegraphics[width=.2\linewidth]{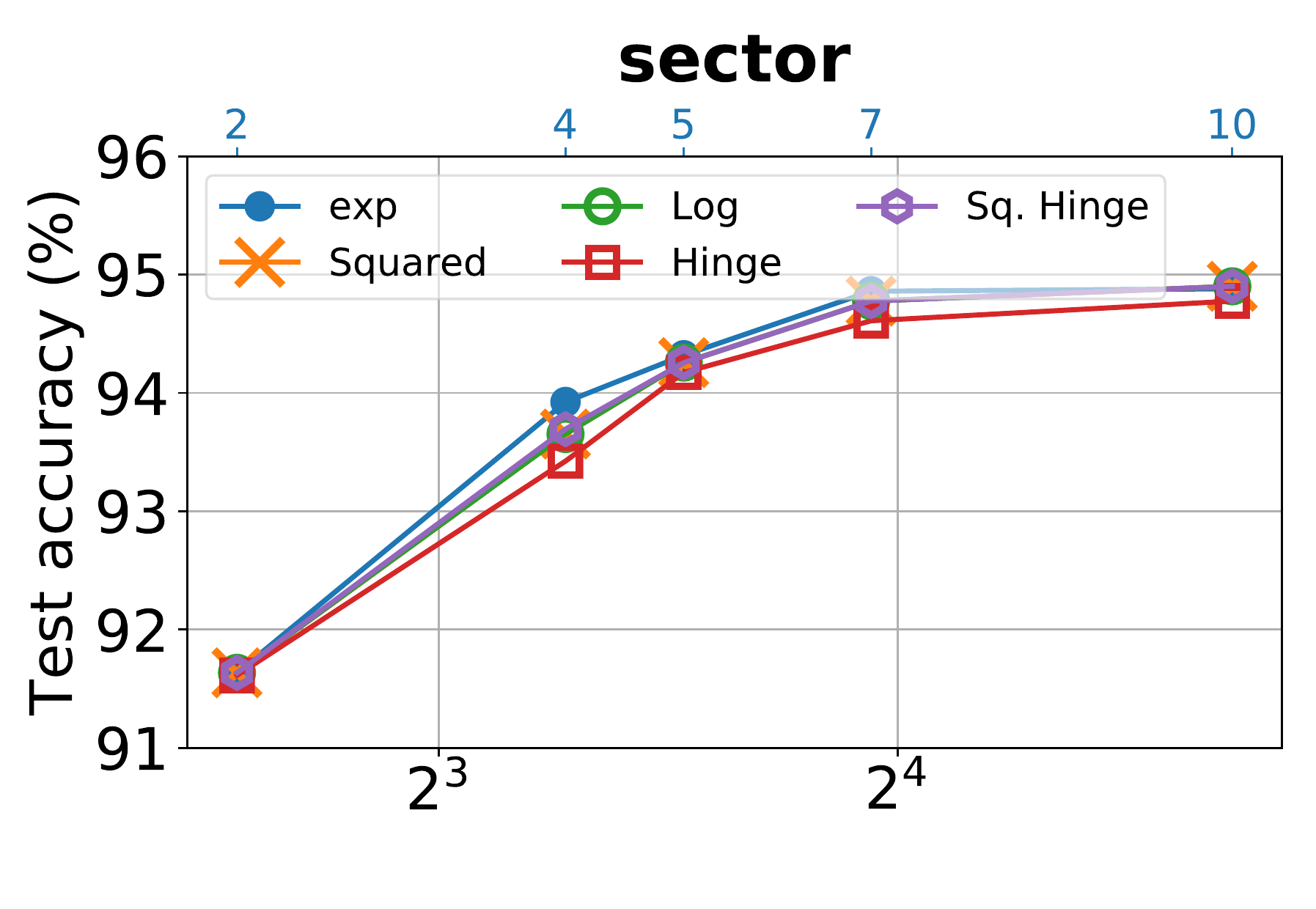}
\includegraphics[width=.1905\linewidth]{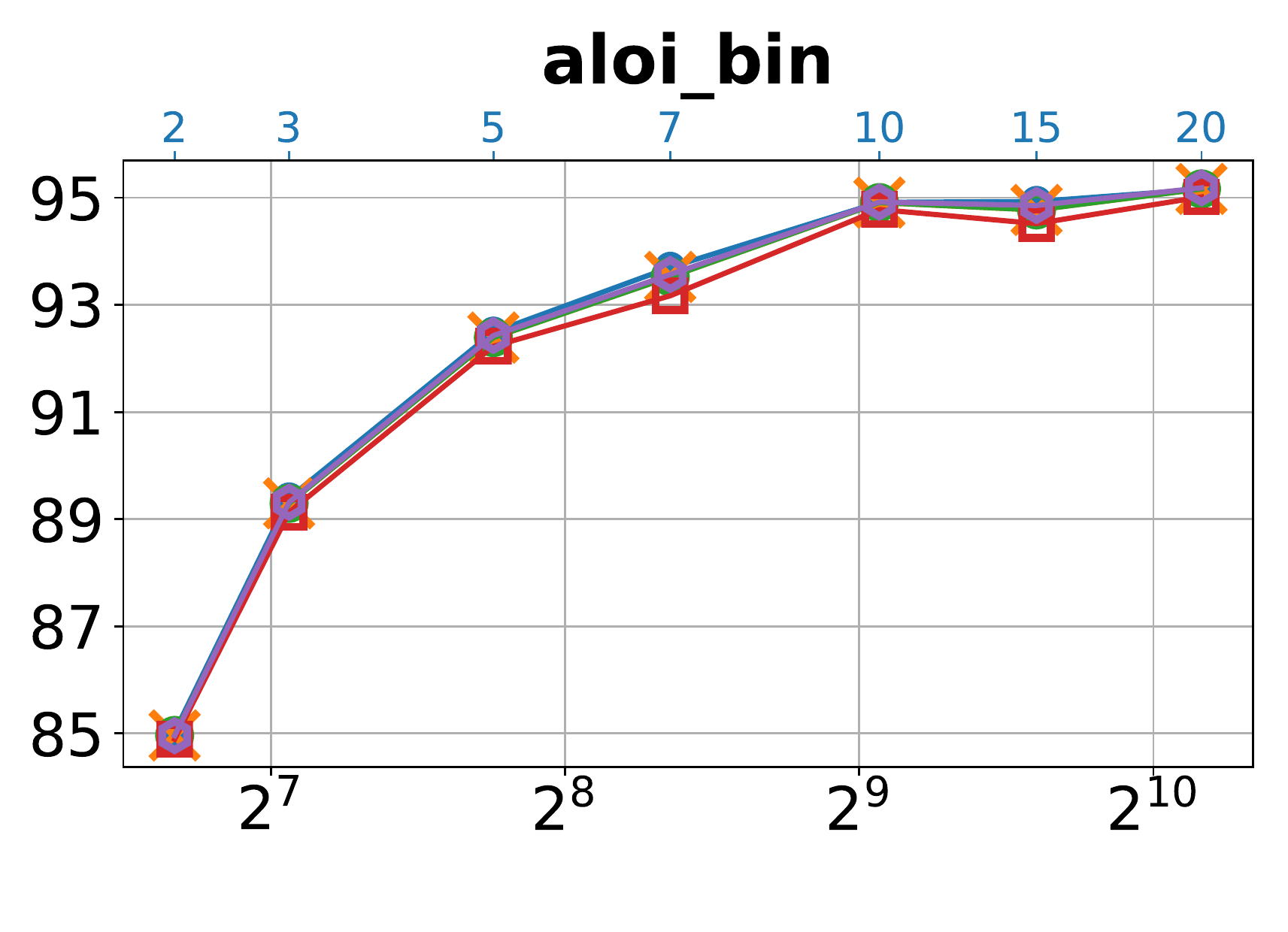}
\includegraphics[width=.1905\linewidth]{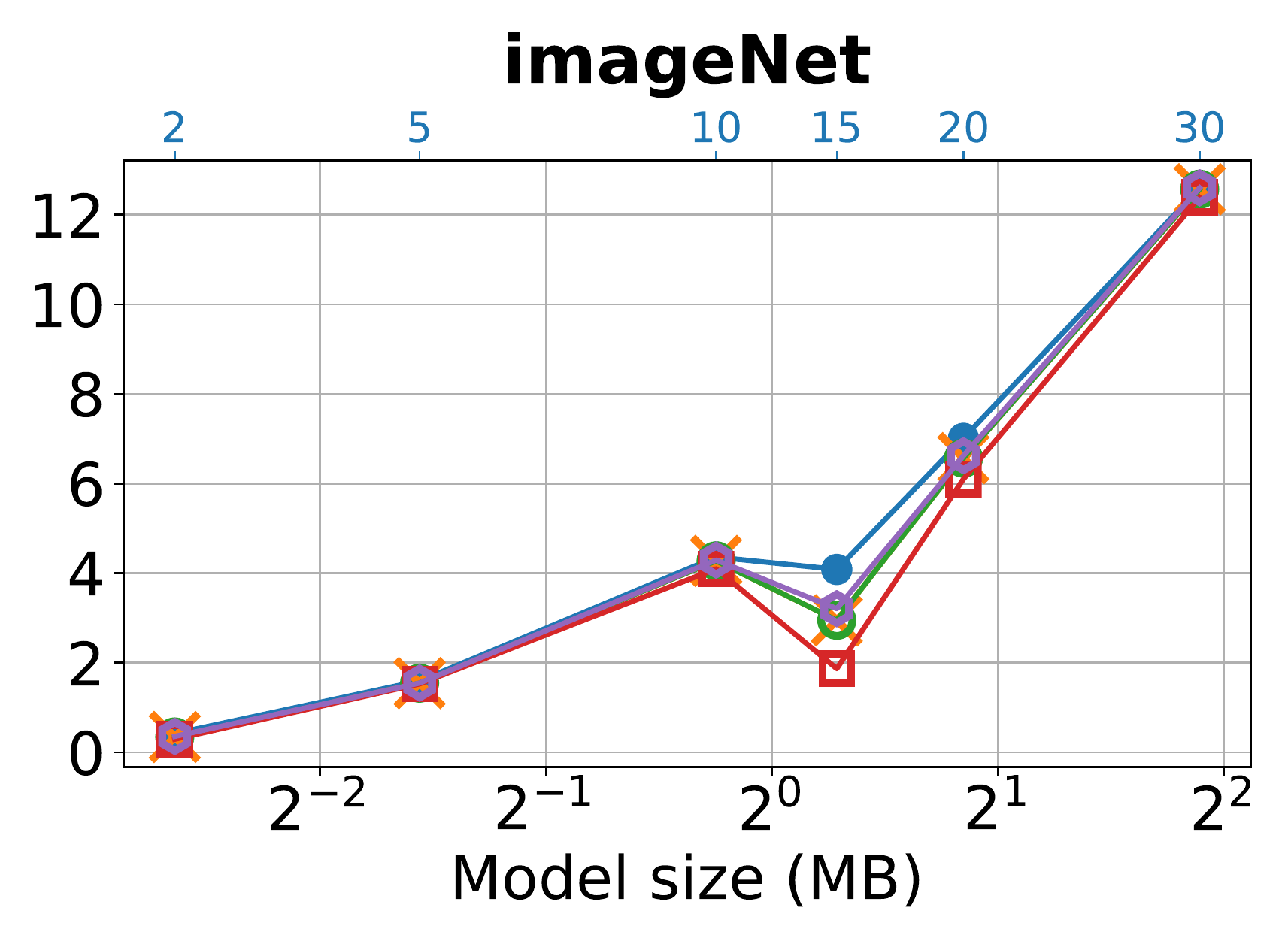}
\includegraphics[width=.1905\linewidth]{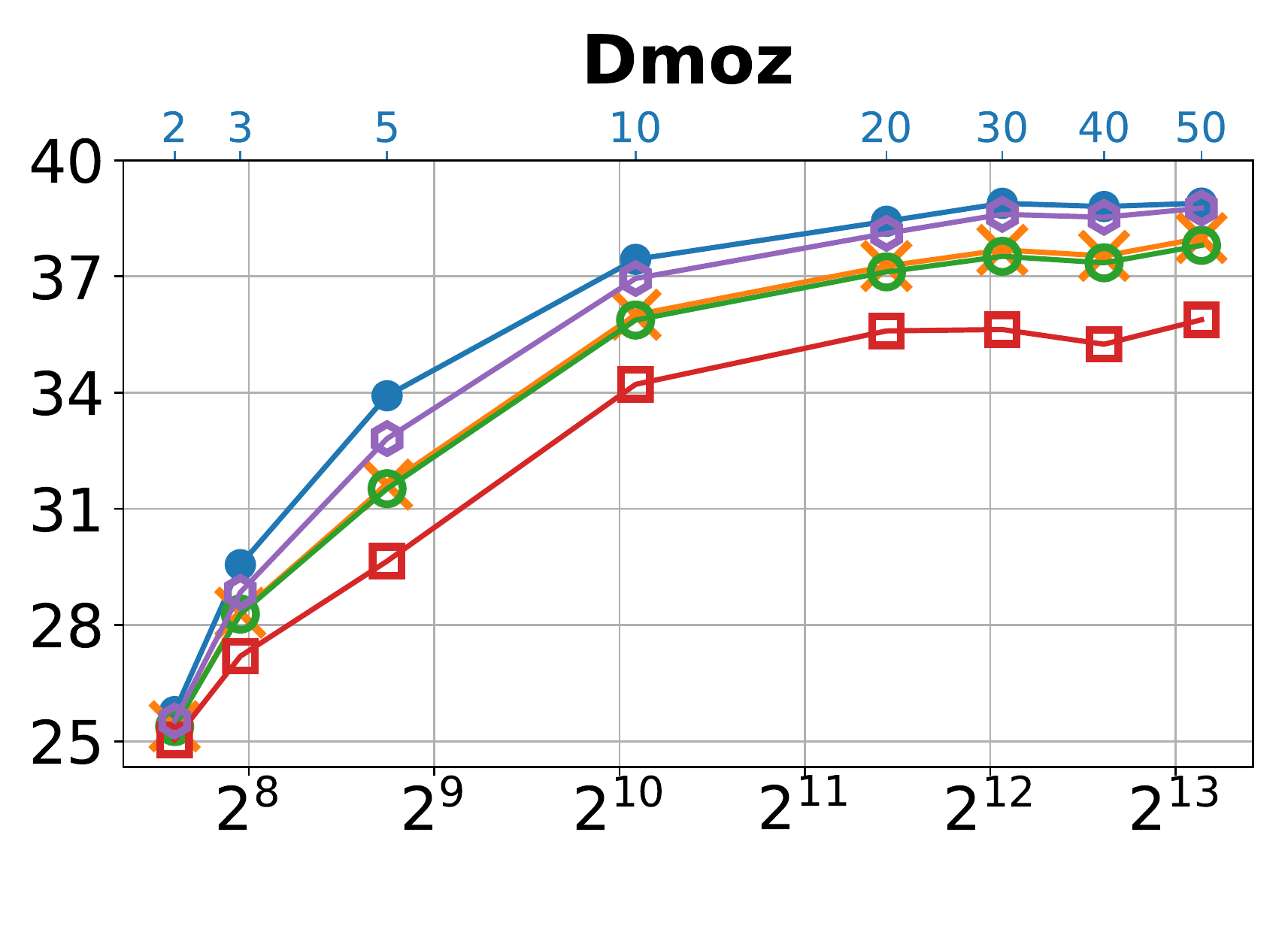}
\includegraphics[width=.1905\linewidth]{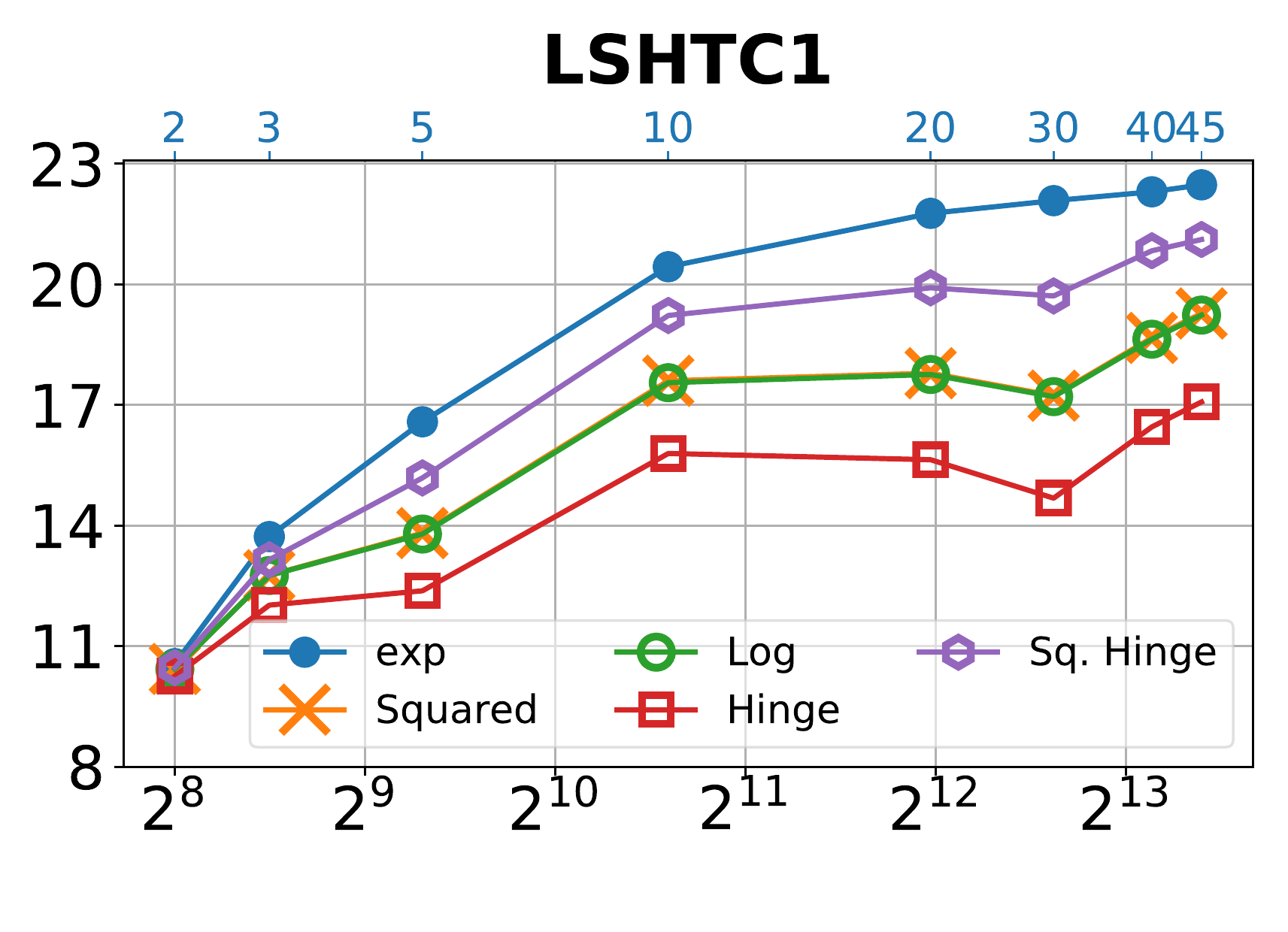}

\vskip -0.03in

\includegraphics[width=.2\linewidth]{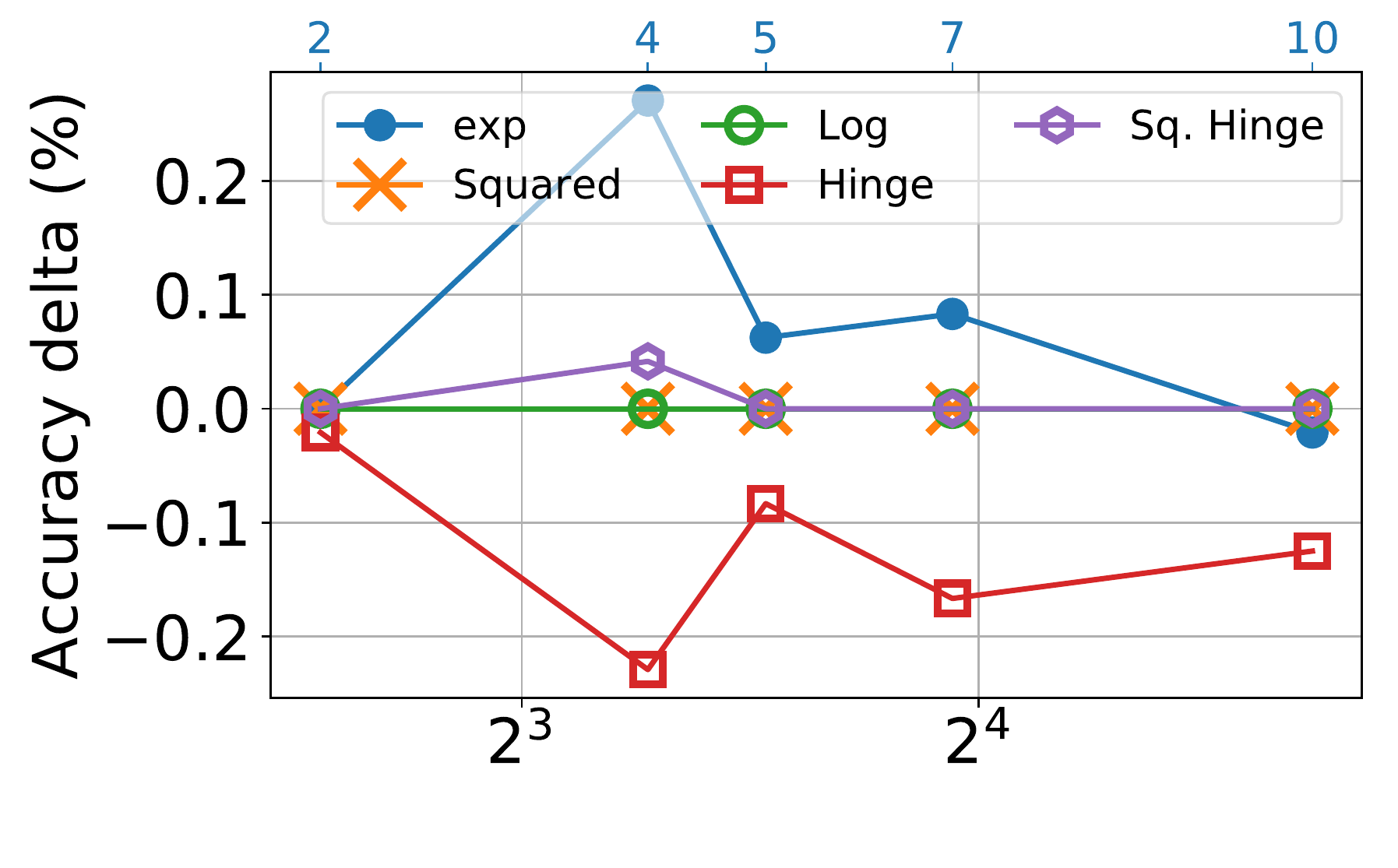}
\includegraphics[width=.1905\linewidth]{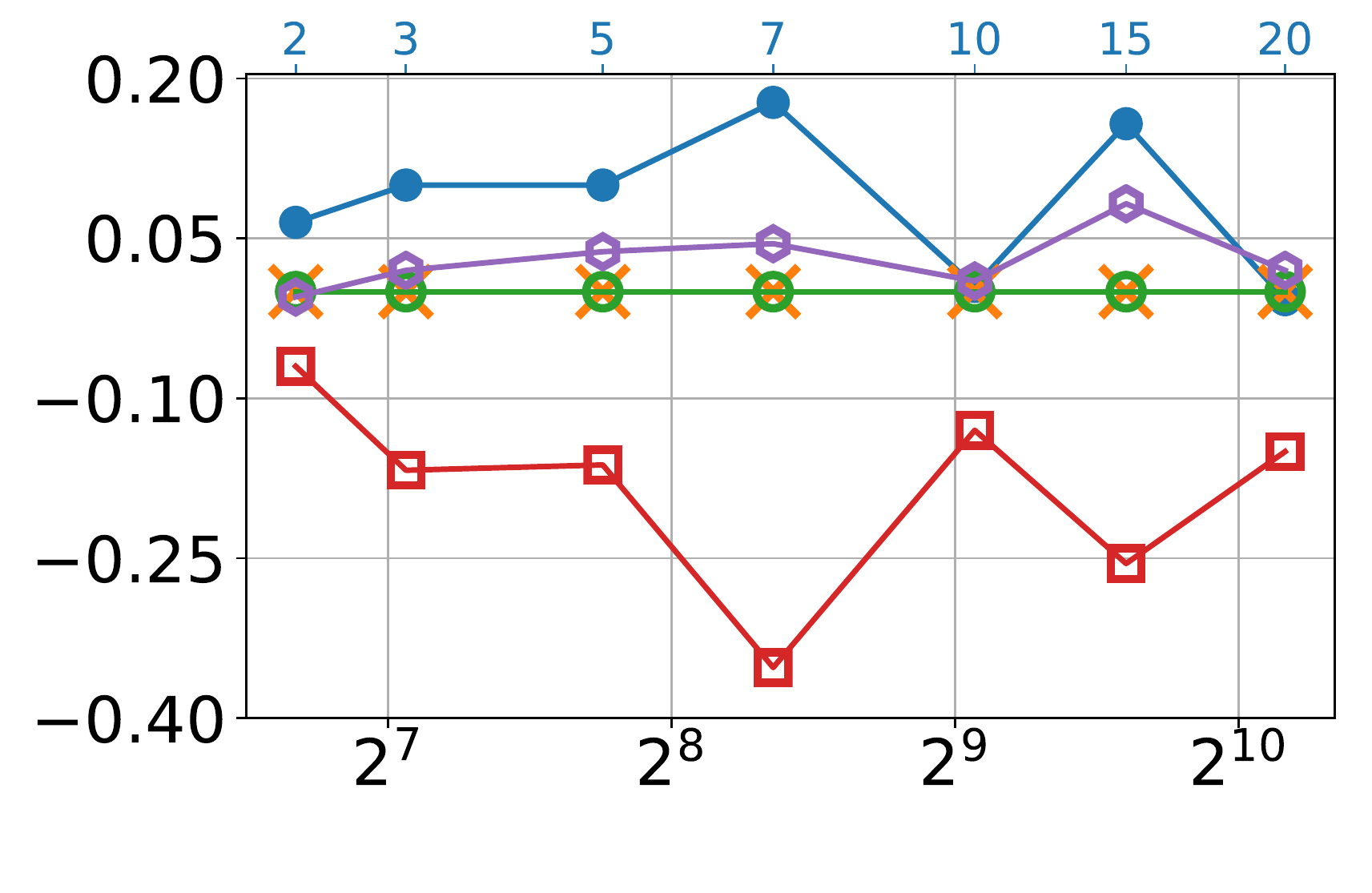}
\includegraphics[width=.1905\linewidth]{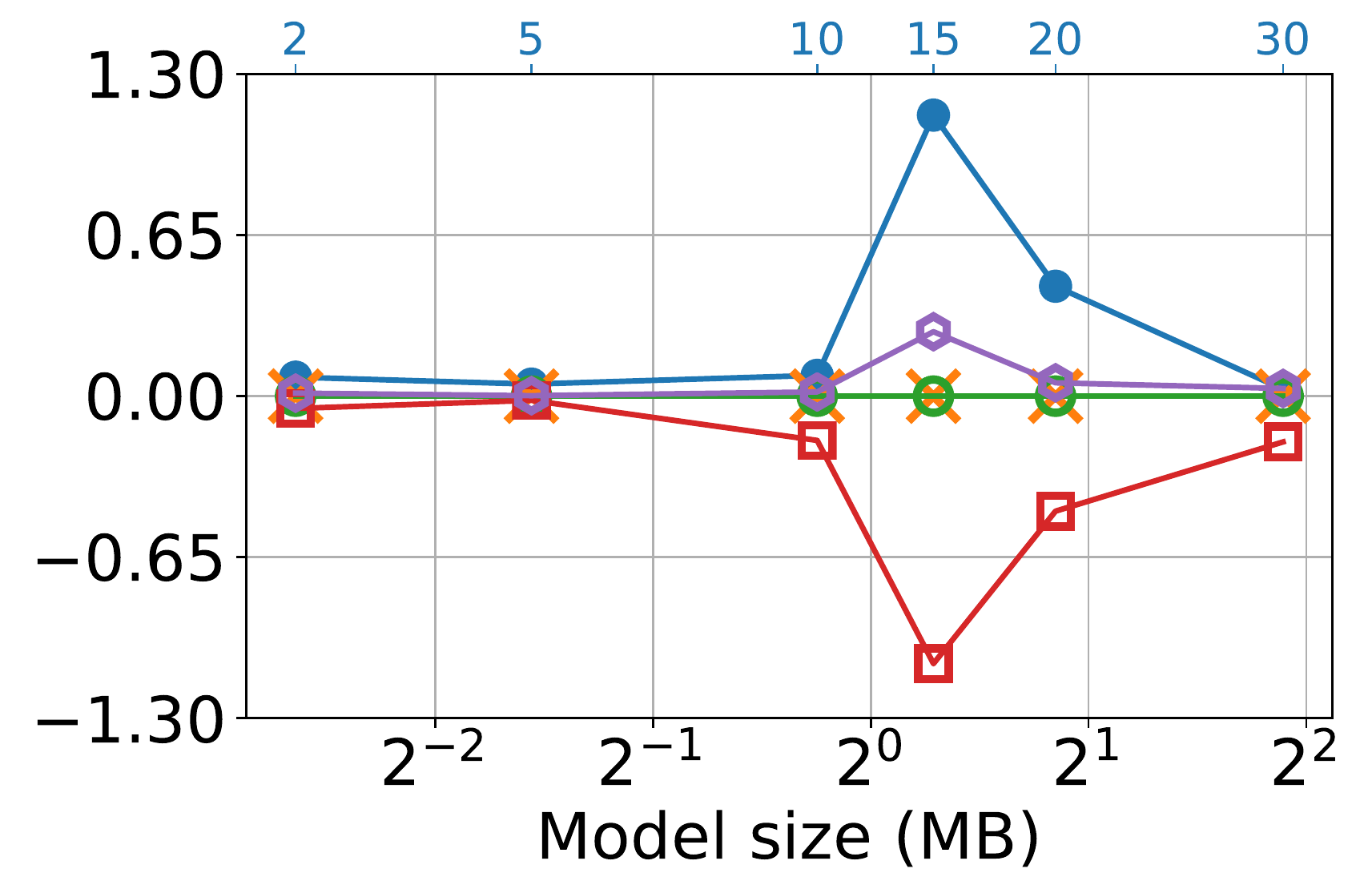}
\includegraphics[width=.1905\linewidth]{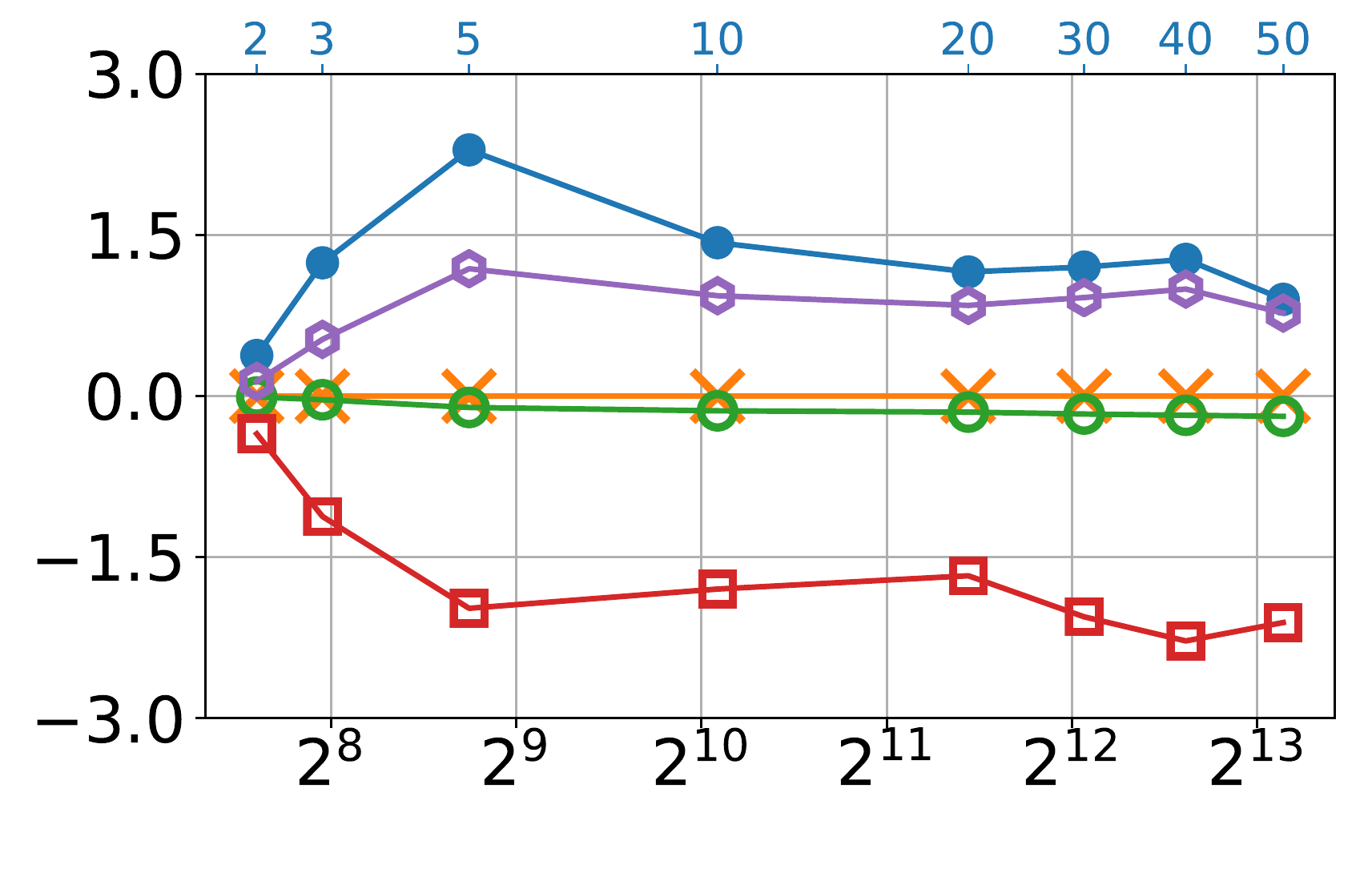}
\includegraphics[width=.1905\linewidth]{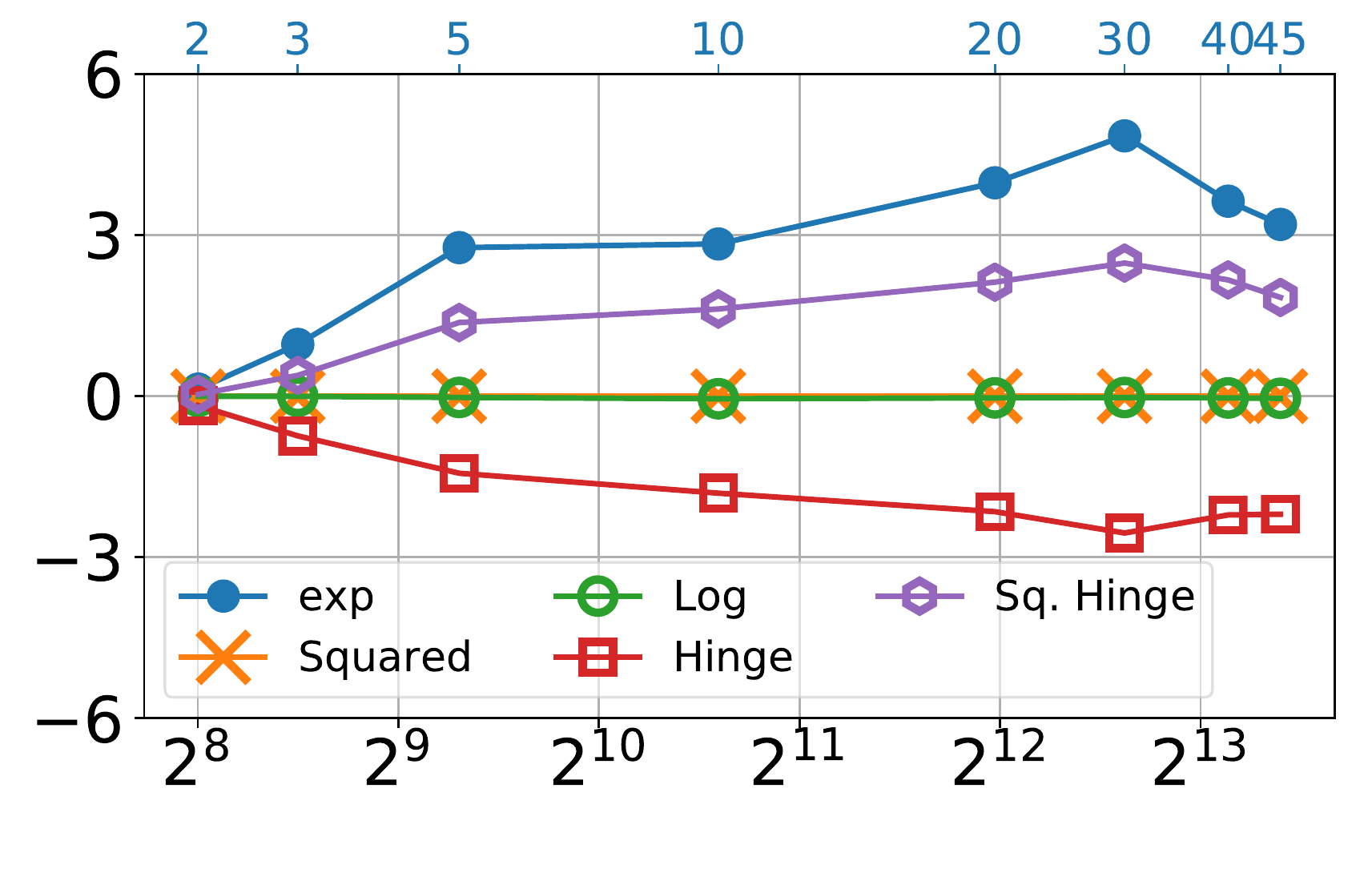}

\vskip -0.15in

\caption{
First row:
Multiclass test accuracy as a function of the model size (MBytes)
for loss-based decoding with different loss functions.
Second row:
Relative increase in multiclass test accuracy
compared to decoding with the squared loss used implicitly in LTLS.
The secondary x-axes (top axes, \textcolor[rgb]{0.17,0.50,0.72}{blue}) indicate the slice widths ($b$)
used for the W-LTLS trellis graphs.
}
\label{fig:loss_based_decoding_losses}
\end{center}
\vskip -0.1in
\end{figure*}

We run W-LTLS with different loss functions for loss-based decoding:
the exponential loss, the squared loss (used by LTLS, see \lemref{lemma:ltls_decoding}),
the log loss, the hinge loss, and the squared hinge loss.

The results appear in \figref{fig:loss_based_decoding_losses}.
We observe that decoding with the exponential loss works the best on all five datasets.
For the two largest datasets ({\tt Dmoz} and {\tt LSHTC1})
we report significant accuracy improvement when using the exponential loss
for decoding in graphs with large slice widths ($b$),
over the squared loss used implicitly by LTLS.
Indeed, for these larger values of $b$,
the subproblems are easier (see \appref{supp:avgBinaryLoss} for detailed analysis).
This should result in larger prediction margins $\left|f_j\left(x\right)\right|$,
as we indeed observe empirically (shown in \appref{averageMargin}).
The various loss functions $L\left(z\right)$ differ significantly for $z\ll0$,
potentially explaining why we find larger accuracy differences as $b$ increases
when decoding with different loss functions.

\subsection{Multiclass test accuracy}
\label{WLTLS-simulations}

\begin{figure*}[t]
\begin{center}

\includegraphics[width=.2\linewidth]{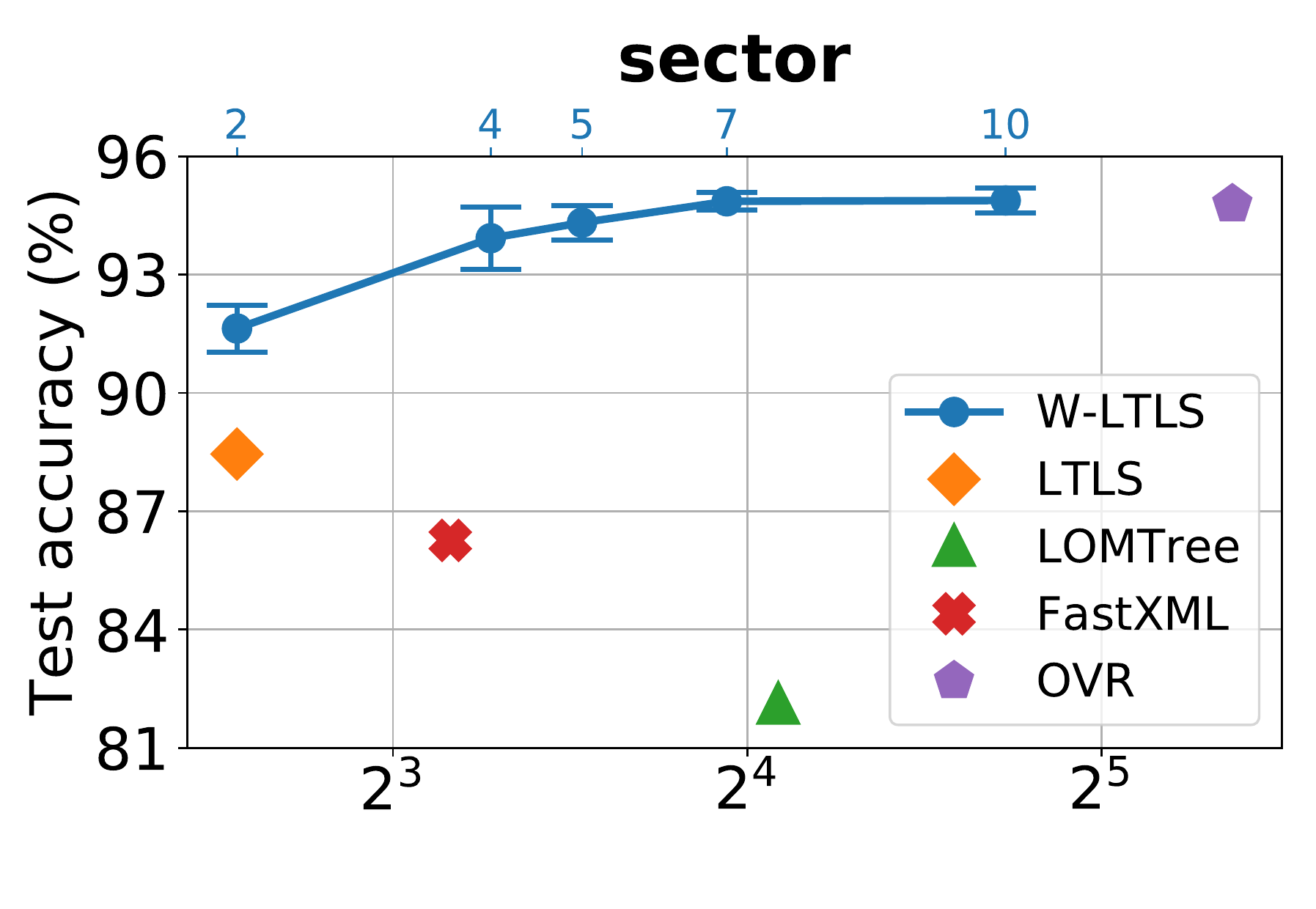}
\includegraphics[width=.1905\linewidth]{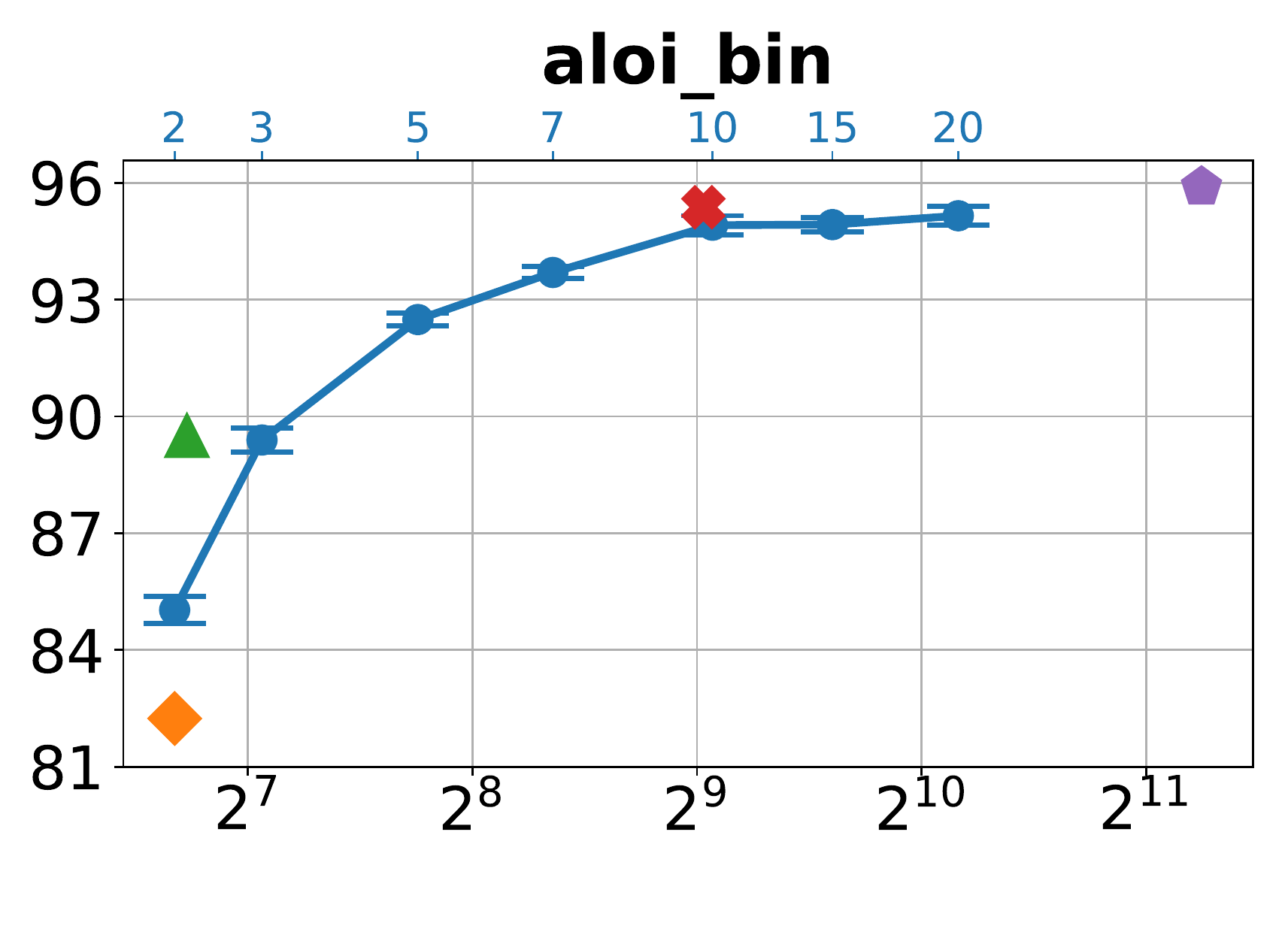}
\includegraphics[width=.1905\linewidth]{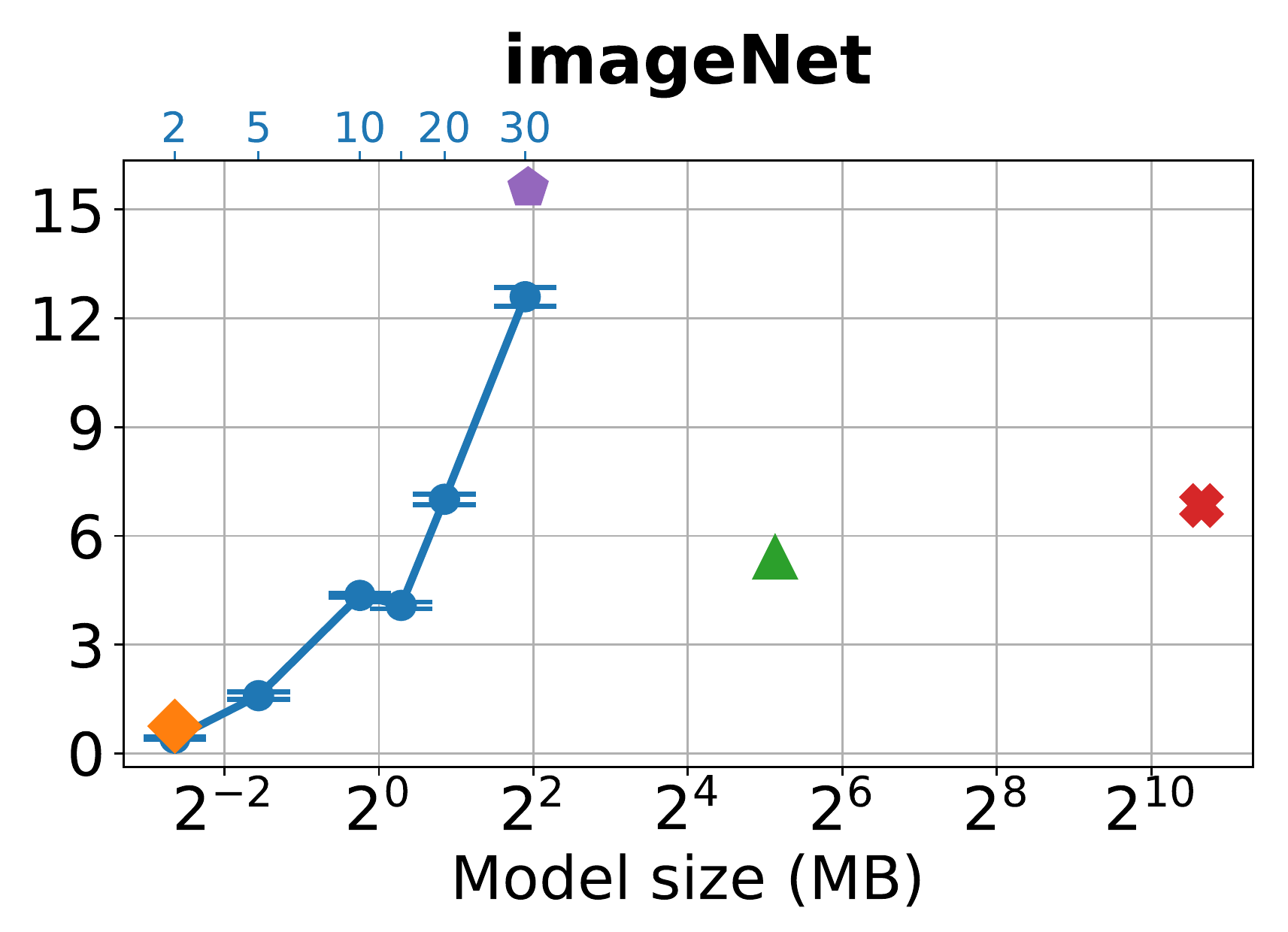}
\includegraphics[width=.1905\linewidth]{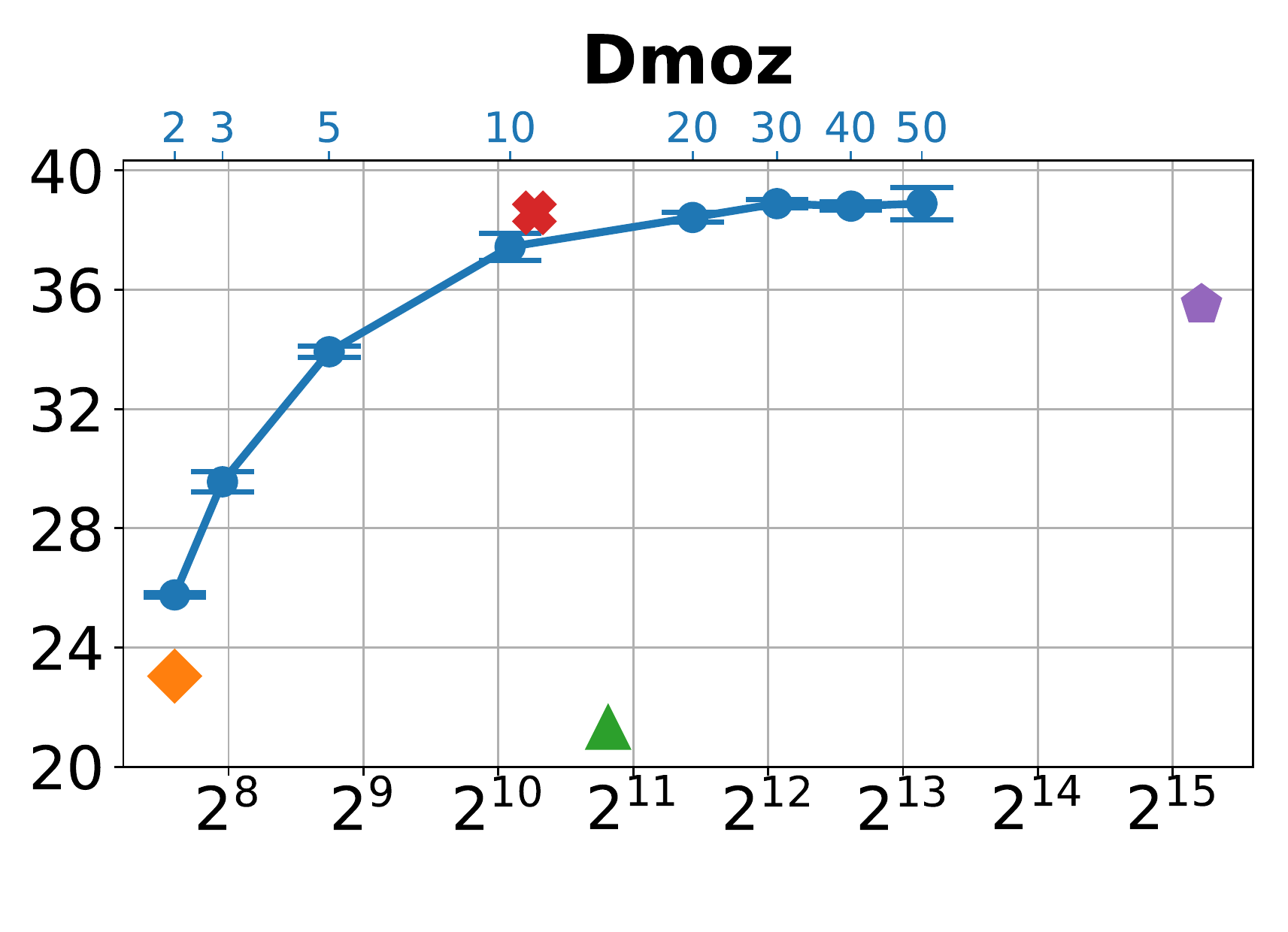}
\includegraphics[width=.1905\linewidth]{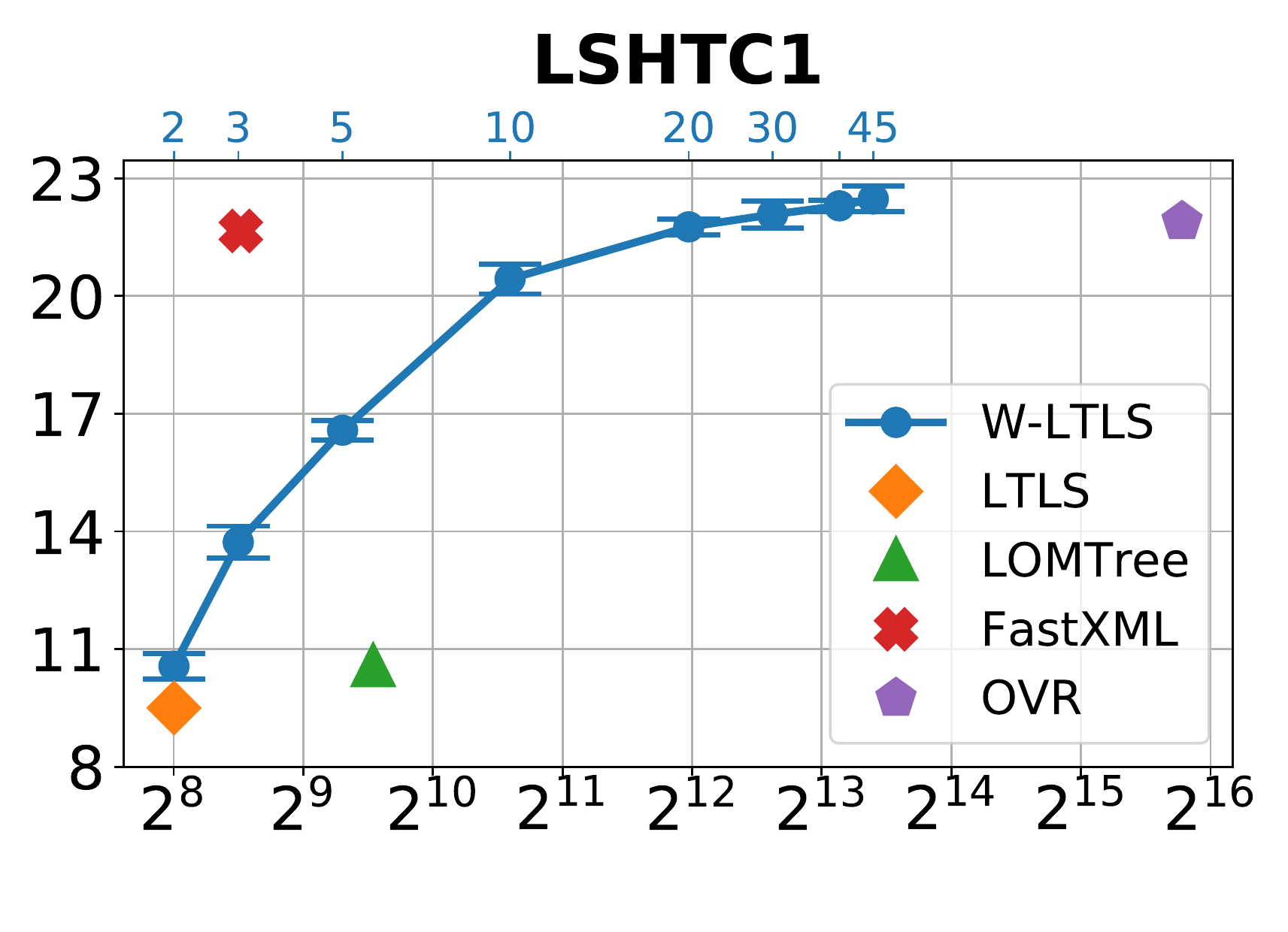}

\vskip -0.03in

\includegraphics[width=.2\linewidth]{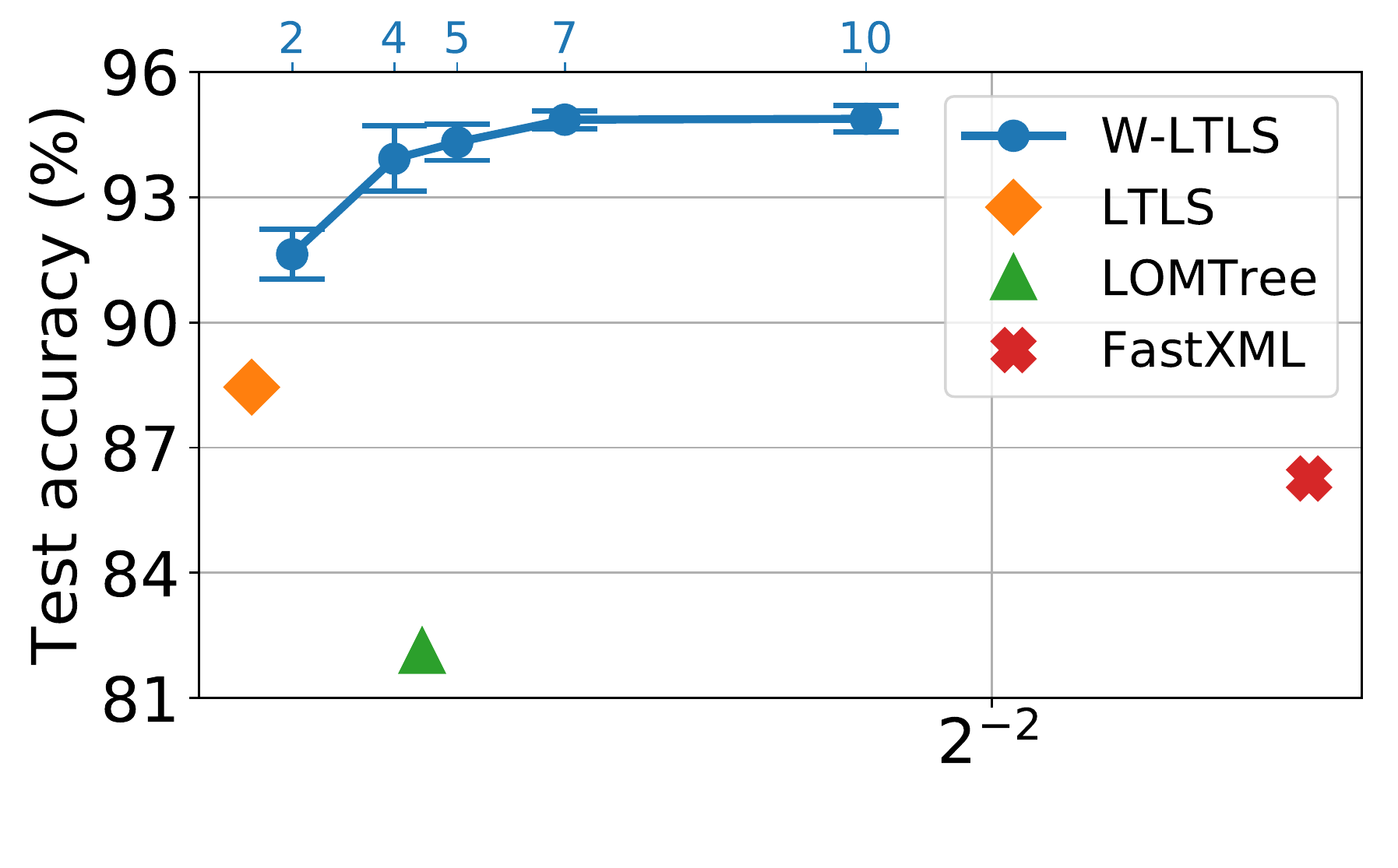}
\includegraphics[width=.1905\linewidth]{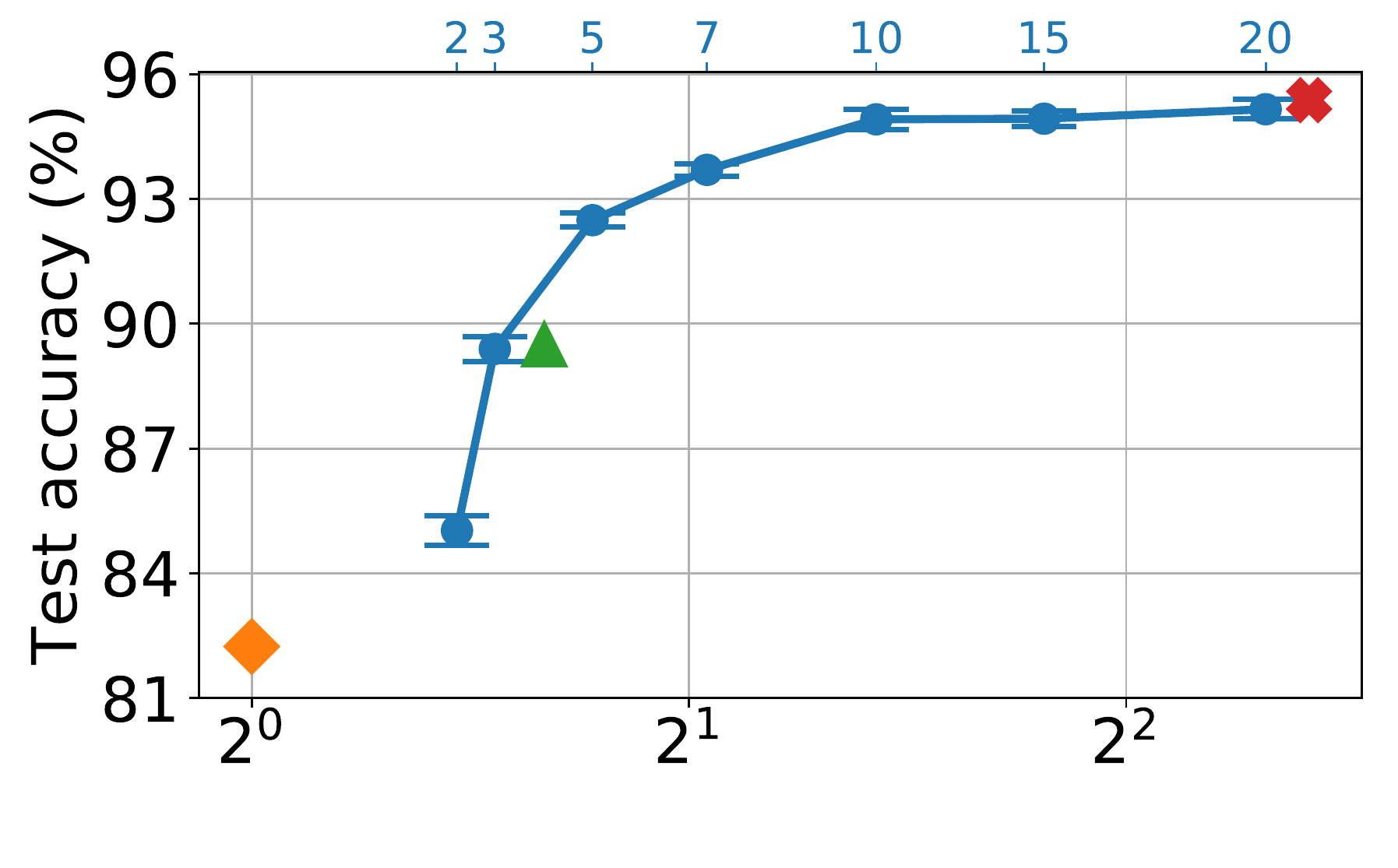}
\includegraphics[width=.1905\linewidth]{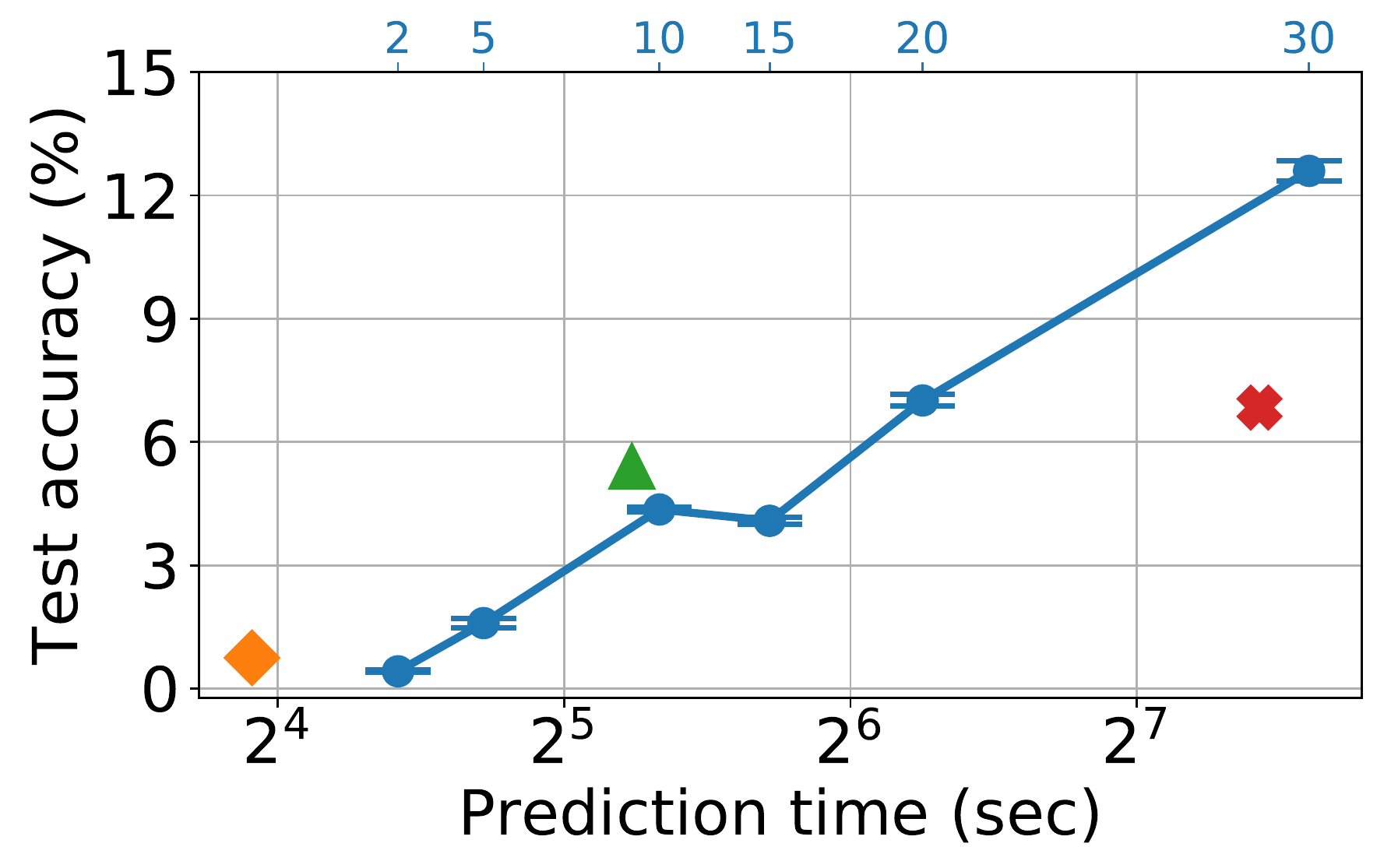}
\includegraphics[width=.1905\linewidth]{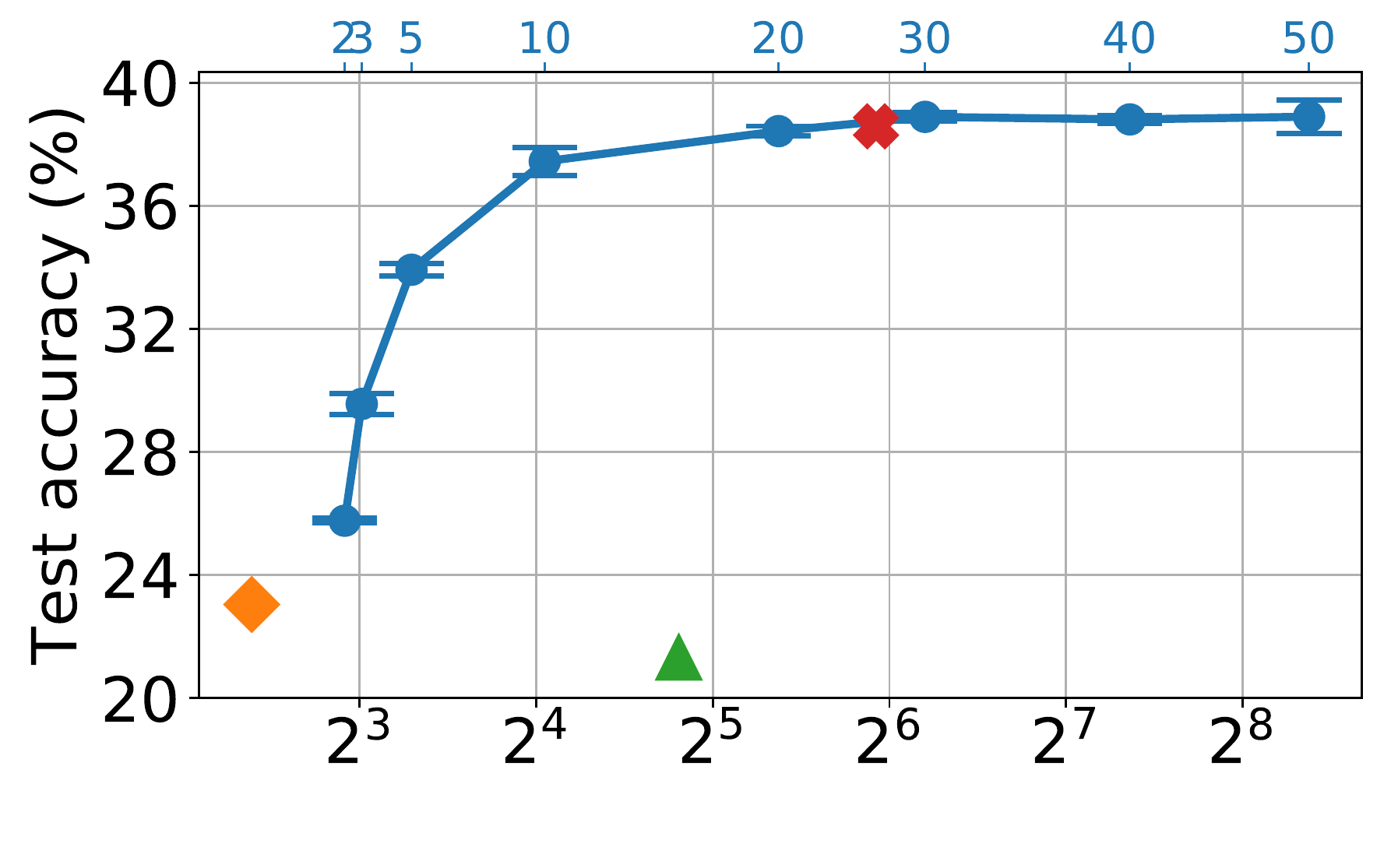}
\includegraphics[width=.1905\linewidth]{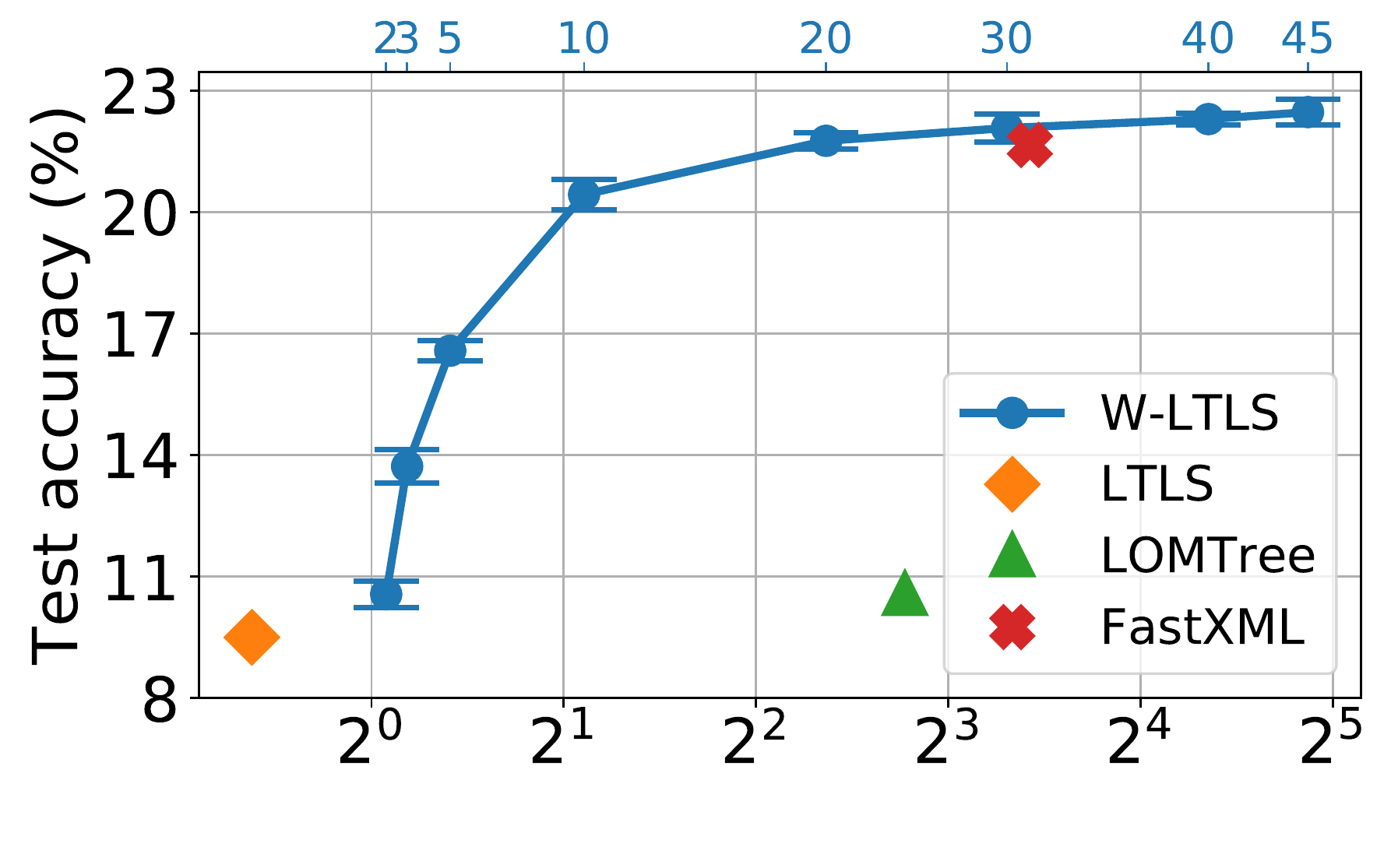}
\vskip -0.1in
\caption{
First row: Multiclass test accuracy vs model size.
Second row: Multiclass test accuracy vs prediction time.
A 95\% confidence interval is shown for the results of W-LTLS.
}
\label{fig:wltls}
\end{center}
\vskip -0.15in
\end{figure*}

We compare the multiclass test accuracy of W-LTLS
(using the exponential loss for decoding) to the same baselines presented in \cite{Jasinska2016}.
Namely we compare to
LTLS \cite{Jasinska2016}, LOMTree
\cite{Choromanska2014} (results quoted from \cite{Jasinska2016}),
FastXML \cite{Prabhu:2014:FFA:2623330.2623651}
(\revised{run with the default parameters on the same computer as our model}),
and OVR (binary learners trained using AROW).
\revised{For convenience, the results are also presented in a tabular form in \appref{sec:experimentalDenseResults}.}



\subsubsection{Accuracy vs Model size}

The first row of \figref{fig:wltls} (best seen in color) summarizes the
multiclass accuracies vs model size.

Among the four competitors, LTLS enjoys the smallest model size,
LOMTree and FastXML have larger model sizes, and OVR is the largest.
LTLS achieves lower accuracies than LOMTree on two datasets,
and higher ones on the other two.
OVR enjoys the best accuracy, yet with a price of model size.
For example, in {\tt Dmoz},
LTLS achieves $23\%$ accuracy vs
\revised{$35.5\%$ of OVR, though the model size of the latter is $\times200$ larger than of the former.}

In all five datasets,
 an increase in the slice width of W-LTLS (and consequently in the model size)
 translates almost always to an increase in accuracy.
Our model is often better or competitive with the other algorithms
that have logarithmic inference time complexity (LTLS, LOMTree, FastXML),
and also competitive with OVR in terms of accuracy,
while we still enjoy much smaller model sizes.

For the smallest model sizes of W-LTLS (corresponding to $b=2$), our
trellis graph falls back to the one of LTLS. The accuracies gaps
 between these two models may be explained by the
different binary learners the experiments were run with -- LTLS
used averaged Perceptron as the binary learner whilst we used
AROW. Also, LTLS was trained in a structured manner with a
greedy path assignment policy while we trained every
binary function independently with a random path assignment policy (see \secref{complexityAnalysis}).
In our runs we observed that independent training
achieves accuracy competitive with to structured online
training, while usually converging much faster.
It is interesting to note that for the {\tt imageNet} dataset the LTLS model cannot fit the data,
i.e the training error is close to 1 and the test accuracy is close to 0.
The reason is that the binary subproblems are very hard, as was also noted by \cite{Jasinska2016}.
By increasing the slice width ($b$), the W-LTLS model mitigates this underfitting problem,
still with logarithmic time and space complexity.

We also observe in the first row of \figref{fig:wltls}
that there is a point where the multiclass test accuracy of W-LTLS
starts to saturate (except for {\tt imageNet}).
Our experiments show that this point can be found by looking at the training error and its bound only.
We thus have an
effective way to choose the optimal model size for the dataset and
space/time budget at hand by performing model selection (width of the
graph in our case) using the training error bound only
(see detailed analysis in \appref{supp:avgBinaryLoss} and \appref{averageBinaryLossVsBalance}).

\subsubsection{Accuracy vs Prediction time}
\label{WLTLS-time}
In the second row of \figref{fig:wltls} we compare prediction (inference) time
 of W-LTLS to other methods.
 LTLS enjoys the fastest prediction time, but suffers from low accuracy.
 LOMTree runs slower than LTLS, but sometimes achieves better accuracy.
 \revised{Despite being implemented in Python,
 W-LTLS is competitive with FastXML, which is implemented in C++.}


\subsection{\revised{Exploiting the sparsity of the datasets}}
\label{sec:sparsityExperiment}

We now demonstrate that the post-pruning proposed in \secref{sec:sparsity},
which zeroes the weights in $\left[-\lambda, \lambda\right]$,
is highly beneficial.
Since {\tt imageNet} is not sparse at all,
we do not consider it in this section.

We tune the threshold $\lambda$
so that the degradation in the multiclass validation accuracy is at most $1\%$
(tuning the threshold is done after the cumbersome learning of the weights,
and does not require much time).

In \figref{fig:sparse} we plot the multiclass test accuracy versus model size
for the non-sparse W-LTLS,
as well as the sparse W-LTLS after pruning the weights as explained above.
We compare ourselves to the aforementioned sparse competitors:
DiSMEC, PD-Sparse, and PPDSparse
(all results quoted from \cite{Yen:2017:PPP:3097983.3098083}).
Since the aforementioned FastXML \cite{Prabhu:2014:FFA:2623330.2623651}
also exploits sparsity to reduce the size of the learned trees,
we consider it here as well
(we run the code supplied by the authors for various numbers of trees).
For convenience,
all the results are also presented in a tabular form in \appref{app:sparsityExperimental}.


We observe that our method can induce very sparse binary learners with a small degradation in accuracy.
In addition, as expected,
the wider the graphs (larger $b$),
the more beneficial is the pruning.
Interestingly, while the number of parameters increases as the graphs become wider,
the actual storage space for the pruned sparse models may even decrease.
This phenomenon is observed for the \texttt{sector} and \texttt{aloi.bin} datasets.

Finally, we note that although PD-Sparse \cite{En-HsuYen2016} and DiSMEC \cite{Babbar2017}
perform better on some model size regions of the datasets,
their worse case space requirement during training is linear in the number of classes $K$,
whereas our approach guarantees (adjustable) logarithmic space for training.

\begin{figure*}[t]
\begin{center}
\includegraphics[width=.25\linewidth]{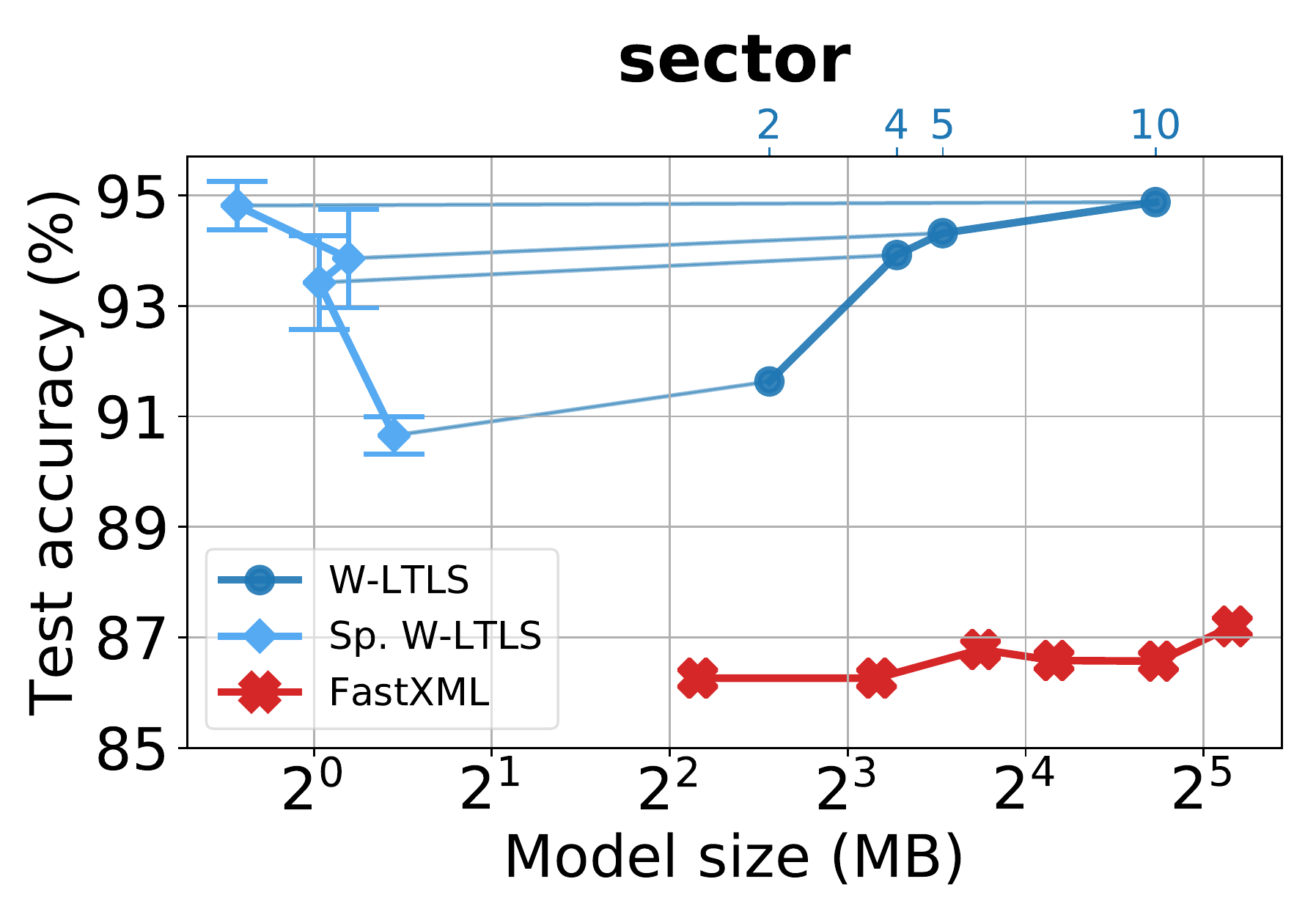}
\includegraphics[width=.24\linewidth]{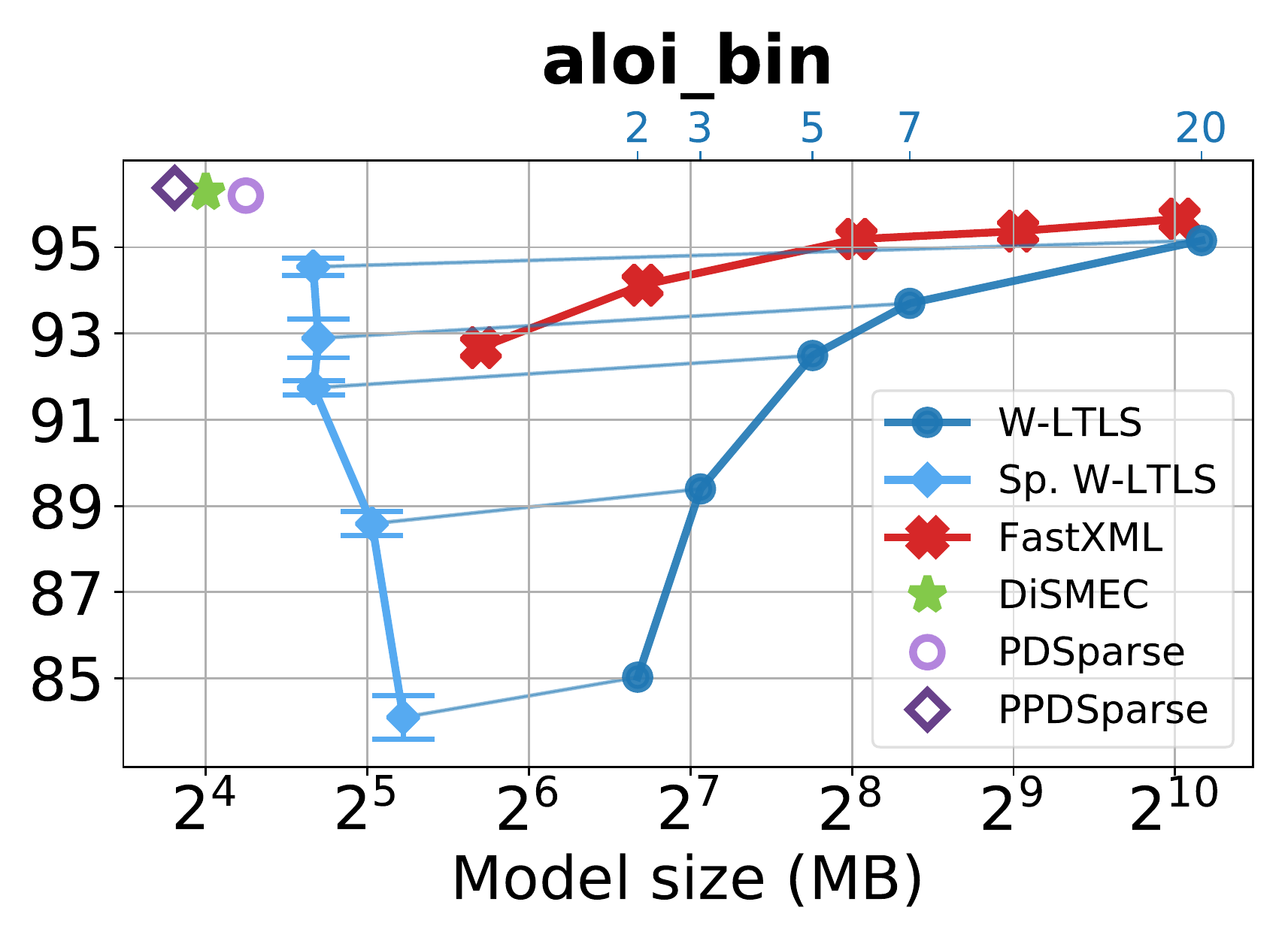}
\includegraphics[width=.24\linewidth]{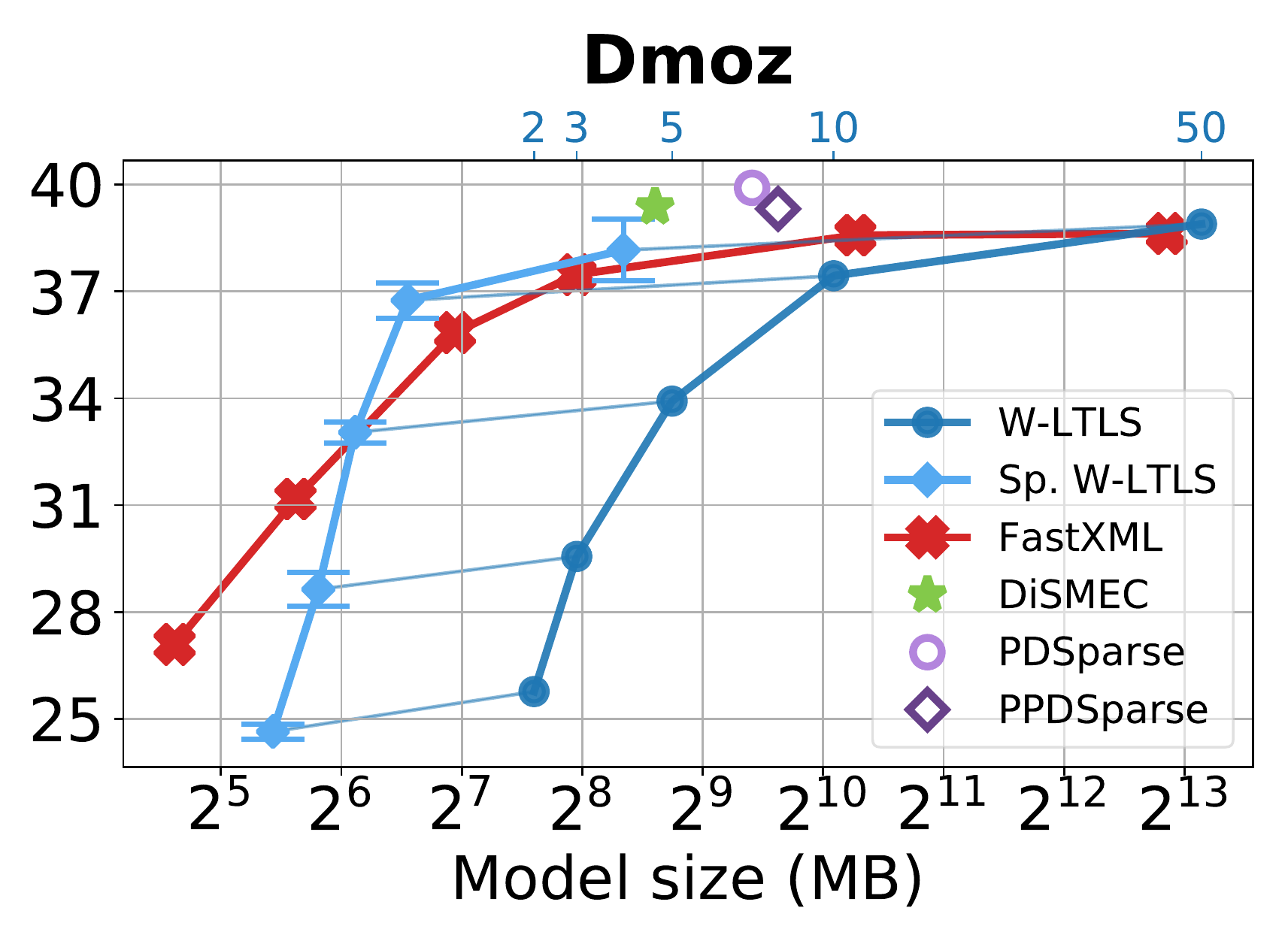}
\includegraphics[width=.24\linewidth]{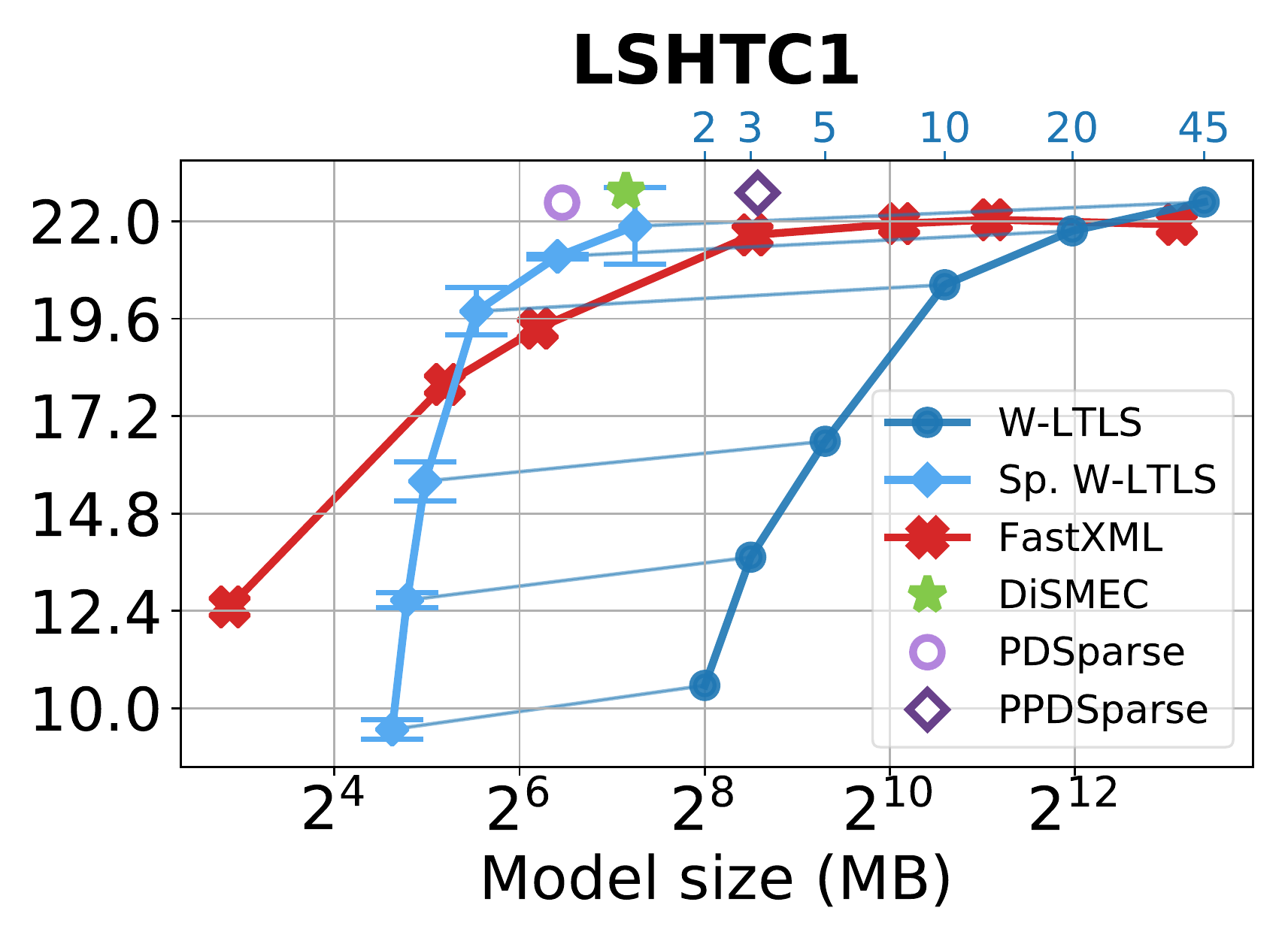}
\caption{
Multiclass test accuracy vs model size for sparse models.
Lines between two W-LTLS plots
connect the same models before and after the pruning.
The secondary x-axes (top axes, \textcolor[rgb]{0.17,0.50,0.72}{blue}) indicate the slice widths ($b$)
used for the (unpruned) W-LTLS trellis graphs.
}
\label{fig:sparse}
\end{center}
\end{figure*}

\section{Related work}
\label{sec:related_work}

Extreme classification was studied extensively in the past decade.
It faces unique challenges, amongst which is the model size of its designated learning algorithms.
An extremely large model size often implies long training and test times,
as well as excessive space requirements.
Also, when the number of classes $K$ is extremely large,
the inference time complexity should be sublinear in $K$ for the classifier to be useful.

The \textit{Error Correcting Output Coding (ECOC)} (see \secref{sec:ecoc}) approach
 seems promising for extreme classification,
 as it potentially allows a very compact representation
 of the label space with $K$ codewords of length $\ell=\bigO{\log{K}}$.
Indeed, many works concentrated on utilizing ECOC for extreme classification.
Some formulate dedicated optimization problems to find ECOC matrices suitable for extreme classification \cite{Cisse2012a} and others focus on learning better binary learners \cite{Liu2016}.

However, very little attention has been given to the decoding time complexity.
In the multiclass regime where only one class is the correct class,
many of these works are forced to use exact (i.e. not approximated)
decoding algorithms which often require $\bigO{K\ell}$ time \cite{Kibriya2007} in the worst-case.
Norouzi et al. \cite{Norouzi2014} proposed a fast exact search nearest neighbor algorithm in the Hamming space,
which for coding matrices suitable for extreme classification can achieve $o\left(K\right)$ time complexity,
but not $\bigO{\log{K}}$.
These algorithms are often limited to binary (dense) matrices and hard decoding.
Some approaches \cite{DLiA15} utilize graphical processing units in order to find the nearest neighbor in Euclidean space, which can be useful for soft decoding,
but might be too demanding for weaker devices.
In our work we keep the time complexity of any loss-based decoding logarithmic in $K$.

\revised{
Moreover, most existing ECOC methods employ coding matrices with higher minimum distance $\rho$,
but with balanced binary subproblems. 
In \secref{sec:Wltls} we explain how our ability of inducing less balanced subproblems
is beneficial both for the learnability of these subproblems, 
and for the post-pruning of learned weights to create sparse models.
}

It is also worth mentioning that many of the ECOC-based works
(like randomized or learned codes \cite{Cisse2012a,Zhao2013})
require storing the entire coding matrix even during inference time.
Hence, the additional space complexity needed only for decoding during inference is $\bigO{K\log{K}}$,
rather than $\bigO{K}$ as in LTLS and W-LTLS 
\revised{which do not directly use the coding matrix for decoding the binary predictions
and only require a mapping from code to label (e.g. a binary tree)}.

\revised{Naturally, hierarchical classification approaches are very popular for extreme classification tasks.
Many of these approaches employ tree based models \cite{Bengio2010,Prabhu:2014:FFA:2623330.2623651,Prabhu:2018:PPL:3178876.3185998,Jain:2016:EML:2939672.2939756,DBLP:conf/icml/JerniteCS17,Choromanska2014,Daume2016,Beygelzimer:2009:CPT:1795114.1795121,Morin+al-2005,10.1007/978-3-319-46227-1_32}.
}
Such models can be seen as decision trees allowing inference time complexity linear in the tree height,
that is $\bigO{\log{K}}$ if the tree is (approximately) balanced.
A few models even achieve logarithmic training time, e.g. \cite{DBLP:conf/icml/JerniteCS17}.
Despite having a sublinear time complexity,
these models require storing $\bigO{K}$ classifiers.

Another line of research focused on label-embedding methods \cite{DBLP:conf/nips/BhatiaJKVJ15, Tagami:2017:AAN:3097983.3097987,Weston2011,pmlr-v32-yu14}.
These methods try to exploit label correlations and project the labels onto a low-dimensional space,
reducing training and prediction time.
However, the low-rank assumption usually leads to an accuracy degradation.

Linear methods were also
the focus of some recent works \cite{Babbar2017,En-HsuYen2016,Yen:2017:PPP:3097983.3098083}.
They learn a linear classifier per label and incorporate sparsity assumptions 
or \revised{perform distributed computations}.
However, 
\revised{the training and prediction complexities of} 
these methods do not scale gracefully to datasets 
with a very large number of labels.
\revised{Using a similar post-pruning approach 
and independent (i.e. not joint) learning of the subproblems,
W-LTLS is also capable of exploiting sparsity and learn in parallel.}

\section{Conclusions and Future work}
\label{sec:conclusions}

We propose a new efficient loss-based decoding algorithm that works for any loss function.
Motivated by a general error bound for loss-based decoding \cite{Allwein2001_new},
 we show how to build on the log-time log-space (LTLS) framework \cite{Jasinska2016}
 and employ a more general type of trellis graph architectures.
 Our method offers a tradeoff between multiclass accuracy, model size and prediction time,
 and achieves better multiclass accuracies under logarithmic time and space guarantees.

Many intriguing directions remain uncovered, suggesting a variety of possible future work.
One could try to improve the restrictively low minimum code distance of W-LTLS
discussed in \secref{sec:limitations}
\revised{Regularization terms could also be introduced, 
to try and further improve the learned sparse models.}
Moreover, it may be interesting to consider weighing every entry of the coding matrix
(in the spirit of Escalera et al. \cite{Escalera2008}) in the context of trellis graphs.
Finally,
many ideas in this paper
can be extended for other types of graphs and graph algorithms.

\subsection*{Acknowledgements}
We would like to thank Eyal Bairey for the fruitful discussions.
This research was supported in part by The Israel Science Foundation, grant No. 2030/16. 

\bibliography{library,custom}

\begin{thebibliography}{10}

\bibitem{Allwein2001_new}
Erin~L. Allwein, Robert~E. Schapire, and Yoram Singer.
\newblock {Reducing multiclass to binary: A unifying approach for margin
  classifiers}.
\newblock {\em Journal of Machine Learning Research}, 1:113--141, 2000.

\bibitem{Babbar2017}
Rohit Babbar and Bernhard Sch\"{o}lkopf.
\newblock Dismec: Distributed sparse machines for extreme multi-label
  classification.
\newblock In {\em Proceedings of the Tenth ACM International Conference on Web
  Search and Data Mining}, WSDM '17, pages 721--729, New York, NY, USA, 2017.
  ACM.

\bibitem{Bengio2010}
Samy Bengio, Jason Weston, and David Grangier.
\newblock {Label embedding trees for large multi-class tasks}.
\newblock {\em Advances in Neural Information Processing Systems},
  23(1):163--171, 2010.

\bibitem{Beygelzimer:2009:CPT:1795114.1795121}
Alina Beygelzimer, John Langford, Yuri Lifshits, Gregory Sorkin, and Alex
  Strehl.
\newblock Conditional probability tree estimation analysis and algorithms.
\newblock In {\em Proceedings of the Twenty-Fifth Conference on Uncertainty in
  Artificial Intelligence}, UAI '09, pages 51--58, Arlington, Virginia, United
  States, 2009. AUAI Press.

\bibitem{DBLP:conf/nips/BhatiaJKVJ15}
Kush Bhatia, Himanshu Jain, Purushottam Kar, Manik Varma, and Prateek Jain.
\newblock Sparse local embeddings for extreme multi-label classification.
\newblock In {\em Advances in Neural Information Processing Systems 28: Annual
  Conference on Neural Information Processing Systems 2015, December 7-12,
  2015, Montreal, Quebec, Canada}, pages 730--738, 2015.

\bibitem{BredensteinerBe99}
Erin~J. Bredensteiner and Kristin~P. Bennett.
\newblock Multicategory classification by support vector machines.
\newblock {\em Computational Optimizations and Applications}, 12:53--79, 1999.

\bibitem{Choromanska2014}
Anna Choromanska and John Langford.
\newblock Logarithmic time online multiclass prediction.
\newblock In {\em Proceedings of the 28th International Conference on Neural
  Information Processing Systems - Volume 1}, NIPS'15, pages 55--63, Cambridge,
  MA, USA, 2015. MIT Press.

\bibitem{Cisse2012a}
Moustapha Cisse, Thierry Artieres, and Patrick Gallinari.
\newblock {Learning compact class codes for fast inference in large multi class
  classification}.
\newblock {\em Lecture Notes in Computer Science (including subseries Lecture
  Notes in Artificial Intelligence and Lecture Notes in Bioinformatics)}, 7523
  LNAI(PART 1):506--520, 2012.

\bibitem{CortesVa95}
Corinna Cortes and Vladimir Vapnik.
\newblock Support-vector networks.
\newblock {\em Machine learning}, 20(3):273--297, 1995.

\bibitem{Crammer2009}
Koby Crammer, Alex Kulesza, and Mark Dredze.
\newblock Adaptive regularization of weight vectors.
\newblock In {\em Advances in Neural Information Processing Systems 22}, pages
  414--422. Curran Associates, Inc., 2009.

\bibitem{CrammerSi01a}
Koby Crammer and Yoram Singer.
\newblock On the algorithmic implementation of multiclass kernel-based vector
  machines.
\newblock {\em Jornal of Machine Learning Research}, 2:265--292, 2001.

\bibitem{Daume2016}
Hal Daum{\'e}, III, Nikos Karampatziakis, John Langford, and Paul Mineiro.
\newblock Logarithmic time one-against-some.
\newblock In {\em Proceedings of the 34th International Conference on Machine
  Learning}, volume~70 of {\em Proceedings of Machine Learning Research}, pages
  923--932, International Convention Centre, Sydney, Australia, 06--11 Aug
  2017. PMLR.

\bibitem{10.1007/978-3-319-46227-1_32}
Krzysztof Dembczy{\'{n}}ski, Wojciech Kot{\l}owski, Willem Waegeman, R{\'o}bert
  Busa-Fekete, and Eyke H{\"u}llermeier.
\newblock Consistency of probabilistic classifier trees.
\newblock In {\em Machine Learning and Knowledge Discovery in Databases}, pages
  511--526, Cham, 2016. Springer International Publishing.

\bibitem{Dietterich1995}
Thomas~G. Dietterich and Ghulum Bakiri.
\newblock {Solving Multiclass Learning Problems via Error-Correcting Output
  Codes}.
\newblock {\em Jouranal of Artifical Intelligence Research}, 2:263--286, 1995.

\bibitem{En-HsuYen2016}
Ian {En-Hsu Yen}, Xiangru Huang, Pradeep Ravikumar, Kai Zhong, and Inderjit~S.
  Dhillon.
\newblock {PD-Sparse : A Primal and Dual Sparse Approach to Extreme Multiclass
  and Multilabel Classification}.
\newblock {\em Proceedings of The 33rd International Conference on Machine
  Learning}, 48:3069--3077, 2016.

\bibitem{Escalera2008}
Sergio Escalera, Oriol Pujol, and Petia Radeva.
\newblock {Loss-Weighted Decoding for Error-Correcting Output Coding.}
\newblock {\em Visapp (2)}, pages 117--122, 2008.

\bibitem{Furnkranz:2002:RRC:944790.944803}
Johannes F\"{u}rnkranz.
\newblock Round robin classification.
\newblock {\em Jornal of Machine Learning Research}, 2:721--747, March 2002.

\bibitem{Jain:2016:EML:2939672.2939756}
Himanshu Jain, Yashoteja Prabhu, and Manik Varma.
\newblock Extreme multi-label loss functions for recommendation, tagging,
  ranking \& other missing label applications.
\newblock In {\em Proceedings of the 22Nd ACM SIGKDD International Conference
  on Knowledge Discovery and Data Mining}, KDD '16, pages 935--944, New York,
  NY, USA, 2016. ACM.

\bibitem{Jasinska2016}
Kalina Jasinska and Nikos Karampatziakis.
\newblock Log-time and log-space extreme classification.
\newblock {\em arXiv preprint arXiv:1611.01964}, 2016.

\bibitem{DBLP:conf/icml/JerniteCS17}
Yacine Jernite, Anna Choromanska, and David Sontag.
\newblock Simultaneous learning of trees and representations for extreme
  classification and density estimation.
\newblock In {\em Proceedings of the 34th International Conference on Machine
  Learning, {ICML} 2017, Sydney, NSW, Australia, 6-11 August 2017}, pages
  1665--1674, 2017.

\bibitem{Kibriya2007}
Ashraf~M. Kibriya and Eibe Frank.
\newblock {An Empirical Comparison of Exact Nearest Neighbour Algorithms}.
\newblock {\em Knowledge Discovery in Databases: PKDD 2007}, pages 140--151,
  2007.

\bibitem{DLiA15}
Shengren Li and Nina Amenta.
\newblock Brute-force k-nearest neighbors search on the {GPU}.
\newblock In {\em Similarity Search and Applications - 8th International
  Conference, {SISAP} 2015, Glasgow, UK, October 12-14, 2015, Proceedings},
  pages 259--270, 2015.

\bibitem{Lin_2004}
Shu Lin and Daniel~J. Costello.
\newblock {\em Error Control Coding, Second Edition}.
\newblock Prentice-Hall, Inc., Upper Saddle River, NJ, USA, 2004.

\bibitem{Liu2016}
Mingxia Liu, Daoqiang Zhang, Songcan Chen, and Hui Xue.
\newblock Joint binary classifier learning for ecoc-based multi-class
  classification.
\newblock {\em IEEE Transactions on Pattern Analysis and Machine Intelligence},
  38(11):2335--2341, Nov. 2016.

\bibitem{Mesterharm99}
Chris Mesterharm.
\newblock A multi-class linear learning algorithm related to winnow.
\newblock In {\em Advances in Neural Information Processing Systems 13}, 1999.

\bibitem{Morin+al-2005}
Frederic Morin and Yoshua Bengio.
\newblock Hierarchical probabilistic neural network language model.
\newblock In {\em Proceedings of the Tenth International Workshop on Artificial
  Intelligence and Statistics}, pages 246--252. Society for Artificial
  Intelligence and Statistics, 2005.

\bibitem{Norouzi2014}
Mohammad Norouzi, Ali Punjani, and David~J. Fleet.
\newblock {Fast exact search in hamming space with multi-index hashing}.
\newblock {\em IEEE Transactions on Pattern Analysis and Machine Intelligence},
  36(6):1107--1119, 2014.

\bibitem{Prabhu:2018:PPL:3178876.3185998}
Yashoteja Prabhu, Anil Kag, Shrutendra Harsola, Rahul Agrawal, and Manik Varma.
\newblock Parabel: Partitioned label trees for extreme classification with
  application to dynamic search advertising.
\newblock In {\em Proceedings of the 2018 World Wide Web Conference}, WWW '18,
  2018.

\bibitem{Prabhu:2014:FFA:2623330.2623651}
Yashoteja Prabhu and Manik Varma.
\newblock Fastxml: A fast, accurate and stable tree-classifier for extreme
  multi-label learning.
\newblock In {\em Proceedings of the 20th ACM SIGKDD International Conference
  on Knowledge Discovery and Data Mining}, KDD '14, pages 263--272, 2014.

\bibitem{DBLP:conf/birthday/Schapire13}
Robert~E. Schapire.
\newblock Explaining adaboost.
\newblock In {\em Empirical Inference - Festschrift in Honor of Vladimir N.
  Vapnik}, pages 37--52, 2013.

\bibitem{Tagami:2017:AAN:3097983.3097987}
Yukihiro Tagami.
\newblock Annexml: Approximate nearest neighbor search for extreme multi-label
  classification.
\newblock In {\em Proceedings of the 23rd ACM SIGKDD International Conference
  on Knowledge Discovery and Data Mining}, KDD '17, pages 455--464, New York,
  NY, USA, 2017. ACM.

\bibitem{Viterbi67}
Andrew~J. Viterbi.
\newblock Error bounds for convolutional codes and an asymptotically optimum
  decoding algorithm.
\newblock {\em {IEEE} Trans. Information Theory}, 13(2):260--269, 1967.

\bibitem{Weston2011}
Jason Weston, Samy Bengio, and Nicolas Usunier.
\newblock {WSABIE: Scaling up to large vocabulary image annotation}.
\newblock {\em IJCAI International Joint Conference on Artificial
  Intelligence}, pages 2764--2770, 2011.

\bibitem{WestonWa99}
Jason Weston and Chris Watkins.
\newblock Support vector machines for multi-class pattern recognition.
\newblock In {\em Esann}, volume~99, pages 219--224, 1999.

\bibitem{Yen:2017:PPP:3097983.3098083}
Ian~E.H. Yen, Xiangru Huang, Wei Dai, Pradeep Ravikumar, Inderjit Dhillon, and
  Eric Xing.
\newblock Ppdsparse: A parallel primal-dual sparse method for extreme
  classification.
\newblock In {\em Proceedings of the 23rd ACM SIGKDD International Conference
  on Knowledge Discovery and Data Mining}, KDD '17, pages 545--553, New York,
  NY, USA, 2017. ACM.

\bibitem{pmlr-v32-yu14}
Hsiang-Fu Yu, Prateek Jain, Purushottam Kar, and Inderjit Dhillon.
\newblock Large-scale multi-label learning with missing labels.
\newblock In {\em International conference on machine learning}, pages
  593--601, 2014.

\bibitem{Zhao2013}
Bin Zhao and Eric~P. Xing.
\newblock {Sparse Output Coding for Scalable Visual Recognition}.
\newblock {\em International Journal of Computer Vision}, 119(1):60--75, 2013.

\end{thebibliography}
\bibliographystyle{plain}

\newpage
\appendix
\section*{SUPPLEMENTARY MATERIAL}
\label{sec:supp_material}

\section{Proof of \thmref{thm:loss_based_decoding}}
\label{supp:proof_thm_loss_based_decoding}
\begin{proof}
For any class $k$ we have,
\begin{align*}
 w\left(P_k\right)  &=  \sum_{j:e_j\in P_k}w_{j}\left(x\right)
   =  \sum_{j:e_j\in P_k}\left[L\left(1\times f_j\left(x\right)\right) + \sum_{j':e_{j'}\in S\left(e_j\right)\backslash \left\{e_j\right\}}L\left((-1)\times f_{j'}(x)\right)\right] \\
  & = \sum_{j:e_j\in P_k}\left[\sum_{j':e_{j'}\in S(e_j)}L\left(M_{k,{j'}}\times f_{j'}(x)\right)\right]
   = \sum_{j=1}^{\ell} L\left(M_{k,j}f_j(x)\right) ~.
\end{align*}
The third equality in the proof follows from the path codeword representation, see for example \figref{fig:path-to-code}.
\QED
\end{proof}

\section{Proof of \lemref{lemma:ltls_decoding}}
\label{supp:proof_lemma_ltls_decoding}
\begin{proof}
Indeed, for LTLS we have,
\begin{align*}
& \arg\max_{k}w\left(P_{k}\right) = \arg\max_{k}\left\{ \sum_{j:e_j\in P_k}w_j\left(x\right)\right\}
= \arg\max_{k}\left\{ \sum_{j:M_{k,j}=1}f_{j}\left(x\right)\right\} \\
= &\arg\max_{k}\left\{ 2\sum_{j:M_{k,j}=1}f_{j}\left(x\right)-\underbrace{\sum_{j}f_{j}\left(x\right)}_{\text{constant in }k}\right\}
=\arg\max_{k}\left\{ {\sum_{j:M_{k,j}=1}f_{j}\left(x\right)-\sum_{j:M_{k,j}=-1}f_{j}\left(x\right)} \right\} \\
=&\arg\max_{k}\left\{ \sum_{j}M_{k,j}f_{j}\left(x\right)\right\}
 = \arg\max_{k}\left\{ \underbrace{\cancel{\sum_{j}M_{k,j}^2}}_{=\ell} + \underbrace{\cancel{\sum_{j}f_{j}^2(x)}}_{\text{constant in }k} - \sum_{j}\left(M_{k,j}-f_j\left(x\right)\right)^2 \right\} \\
 = &\arg\min_{k}\left\{ \sum_{j}\left(M_{k,j}-f_j\left(x\right)\right)^2\underbrace{\left(M_{k,j}\right)^2}_{=1} \right\}
 = \arg\min_{k}\left\{ \sum_{j}\left(1-M_{k,j}f_j\left(x\right)\right)^2 \right\} \\
 = &\arg\min_{k}\left\{ \sum_{j}L_{sq}\left(M_{k,j}f_j\left(x\right)\right) \right\} ~.
\end{align*}
Note that we used the fact that $M_{k,j}\in \{-1,+1\}$.
\QED
\end{proof}

\newpage
\section{Extension to arbitrary K}

\subsection{Graph construction}
\label{subsec:graphConstruction}

The following algorithm construct graphs with any number of paths,
and is not restricted to powers of $2$. See \figref{fig:arbitraryK_fig} for examples.

\begin{algorithm}
    \SetKwInOut{Input}{Input}
    \SetKwInOut{Output}{Output}

    \Input{Number of labels $K$ and slice width $b$.}
    Convert $K$ to a base-$b$ representation.

    Store the \emph{reverse} representation in an array $A$ of size $\left\lfloor {\log_b{K}}\right\rfloor + 1$
    (i.e. the least significant $b$-ary \emph{digit} is in $A\left[0\right]$).

    Build a trellis graph with $\left\lfloor {\log_b{K}}\right\rfloor+1$ inner slices, each with $b$ vertices.

    Add a source vertex $s$ and connect it to the vertices of the first inner slice.

    Add a sink vertex $t$.

    For every inner slice $i=0 \dots \left\lfloor {\log_b{K}}\right\rfloor$,
    connect $A\left[i\right]$ vertices of the slice to the sink.

    Delete any vertices from which the sink is unreachable.

    \caption{Graph construction with an arbitrary $K$}
    \label{alg:graphConstruction}
\end{algorithm}

\begin{figure*}[h]
\vskip -0.2in
\begin{center}
\includegraphics[width=\linewidth]{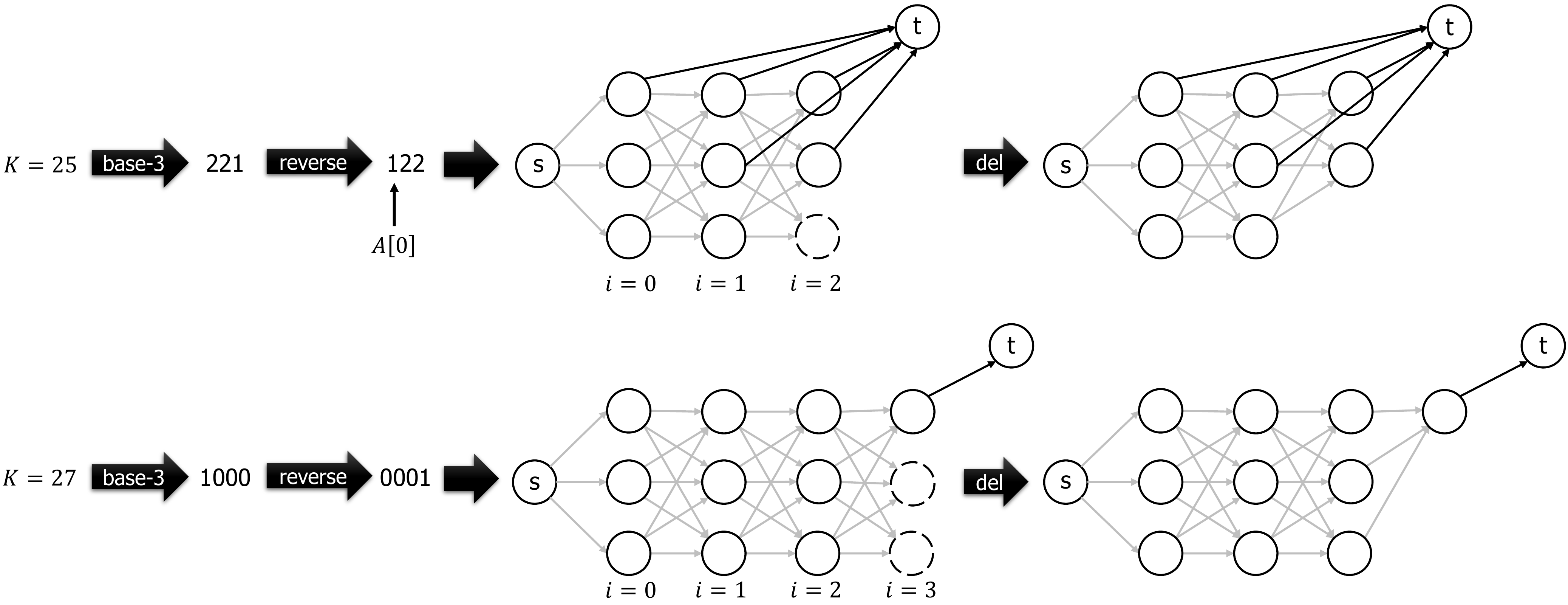}
\caption{An illustration of the graph construction algorithm for two different values of $K$, using $b=3$. The darker edges are the ones created on stage 6 of \algoref{alg:graphConstruction}.
Note that in the upper construction,
the minimum distance between any two path is actually $\rho=3$
and not $\rho=4$ like previously noted.
This sometimes holds minor accuracy implications.
}
\label{fig:arbitraryK_fig}
\end{center}
\end{figure*}

Below we show that this construction indeed produces a graph with exactly $K$ paths from source to sink. We start with a technical lemma.

\begin{lemma}
\label{lemma:pathsToSlice}
Let $v\in V\setminus{\left\{t\right\}}$ be a vertex in the $i$th inner slice
(where $i\in\left\{0 \dots \left\lfloor {\log_b{K}}\right\rfloor \right\}$),
and let $\mathcal{N}\left(v\right)$ be the number of paths from the source to $v$.
Then $\mathcal{N}\left(v\right)=b^i$.
\end{lemma}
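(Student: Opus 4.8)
The plan is to prove $\mathcal{N}(v) = b^i$ by induction on the slice index $i$, exploiting the recursive structure of the trellis graph: a vertex in slice $i$ receives edges only from the vertices of slice $i-1$, so the number of source-to-$v$ paths is determined by the number of source paths reaching the previous slice.

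First I would establish the base case $i=0$. The source $s$ connects to every vertex of the first inner slice, and since $s$ is the unique start, there is exactly one path (the single edge $s \to v$) from the source to any vertex $v$ in slice $0$. Hence $\mathcal{N}(v) = 1 = b^0$, as required. I would need to note that the deletion step (stage~7 of the algorithm) only removes vertices from which the sink is unreachable, and such deletions do not affect the \emph{source-to-$v$} path counts for the surviving vertices, so the count formula applies to every remaining $v \notin \{t\}$.

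For the inductive step, assume every (surviving) vertex $u$ in slice $i-1$ satisfies $\mathcal{N}(u) = b^{i-1}$. The key structural fact I would invoke is that in the trellis construction every inner slice has $b$ vertices, and each vertex $v$ in slice $i$ is connected by an incoming edge from \emph{each} of the $b$ vertices in slice $i-1$ (this is precisely the ``full bipartite'' connection between adjacent inner slices that defines the slice width $b$). Since every source-to-$v$ path must pass through exactly one vertex $u$ of slice $i-1$ and then traverse the unique edge $u \to v$, the paths to $v$ partition according to their penultimate vertex, giving
\begin{align*}
\mathcal{N}(v) = \sum_{u \in \text{slice } i-1} \mathcal{N}(u) = b \cdot b^{i-1} = b^i~.
\end{align*}
This closes the induction.

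The main obstacle I anticipate is handling the interaction between the inductive argument and the \emph{arbitrary-}$K$ modifications to the graph, namely stages~6 and~7 of \algoref{alg:graphConstruction}, where some slice vertices are connected to the sink and unreachable vertices are pruned. I would argue carefully that connecting a vertex to the sink adds only outgoing edges and therefore cannot change any source-to-$v$ count, and that a vertex $v \neq t$ in an inner slice is deleted only when the sink is unreachable from it; crucially, the full inter-slice connectivity among the \emph{surviving} inner vertices is preserved, so the recursion $\mathcal{N}(v) = \sum_u \mathcal{N}(u)$ still ranges over a full complement of $b$ predecessors with the claimed counts. Verifying that the pruning never removes a predecessor needed by the sum — i.e. that every inner-slice vertex used as a stepping stone toward the sink retains all $b$ of its slice-$(i-1)$ predecessors — is the delicate bookkeeping step, and I would lean on the observation that deletions propagate only \emph{backward} from sink-unreachability and hence leave the source-side path structure of any retained vertex intact.
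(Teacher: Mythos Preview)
Your proposal is correct and follows essentially the same inductive argument as the paper: base case $i=0$ via the single source edge, and the inductive step via the $b$ full-bipartite predecessors summing to $b\cdot b^{i-1}$. Your additional bookkeeping about stage~7 pruning is more careful than the paper (which simply asserts ``$v$ has $b$ incoming edges'' without comment), and your justification is sound---any surviving vertex $v$ in slice $i$ reaches the sink, so each of its $b$ predecessors in slice $i-1$ also reaches the sink through $v$ and therefore survives.
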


\begin{proof}
We show this by induction on the slice index $i=0 \dots \left\lfloor {\log_b{K}}\right\rfloor$.
In the base case $i=0$ and $v$ is in the first inner slice (see \figref{fig:arbitraryK_fig}).
It has only one incoming edge from the source $s$,
hence the statement holds, $\mathcal{N}\left(v\right)=b^0=1$.

Next, denote $in\left(v\right)\triangleq \left\{ u: \exists\left(u,v\right)\in E \right\}$ and
suppose the theorem holds for all slices until the $k$th slice. Let $v$ be a vertex in the $\left(k+1\right)$th slice.
Following the graph construction, $v$ has $b$ incoming edges from the vertices in $in\left(v\right)$, all of which are in the $k$th slice.
According to the inductive hypothesis, $\forall{u\in in\left(v\right)}: \mathcal{N}\left(u\right)=b^k$.
Every path from $s$ to $u\in in\left(v\right)$,
could be extended to a path from $s$ to $v$ by concatenating the corresponding edge,
and so we get that
$\mathcal{N}\left(v\right)=\sum_{u\in in\left(v\right)}{\mathcal{N}\left(u\right)}
=\sum_{u\in in\left(v\right)}{b^k}
=b\cdot{b^k}=b^{k+1}$.
\QED
\end{proof}

\newpage

\begin{theorem}
The number of paths from the source to the sink is $K$, i.e. $\mathcal{N}\left(t\right)=K$.
\end{theorem}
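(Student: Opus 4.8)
The plan is to count the source-to-sink paths by conditioning on the last inner vertex a path visits before reaching the sink. Every path from $s$ to $t$ ends with a single edge $(v,t)$ for some inner vertex $v$, since $t$ is a sink and has no outgoing edges; conversely, appending the edge $(v,t)$ to any source-to-$v$ path yields a distinct source-to-sink path. Hence the source-to-sink paths are in bijection with pairs consisting of a source-to-$v$ path together with an edge $(v,t)$, and I would organize the count according to which inner slice contains $v$.

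First I would record that, by the construction, the number of vertices in inner slice $i$ that are connected to the sink is exactly $A[i]$, and that edges leaving distinct vertices (in the same or different slices) give rise to disjoint families of paths. I would also argue that the vertex-deletion step does not affect the count: a deleted vertex cannot reach $t$, so it cannot be a predecessor of any surviving vertex (otherwise it would reach $t$ through that successor), and therefore removing it leaves $\mathcal{N}(v)$ unchanged for every surviving $v$. This justifies applying \lemref{lemma:pathsToSlice} to the graph that remains after deletion.

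Next I would invoke \lemref{lemma:pathsToSlice}, which gives $\mathcal{N}(v)=b^i$ for each vertex $v$ in inner slice $i$. Summing the number of source-to-$v$ paths over the $A[i]$ sink-connected vertices in each slice, and then over all slices, yields
\begin{align*}
\mathcal{N}\left(t\right)=\sum_{i=0}^{\left\lfloor {\log_b K}\right\rfloor} A\left[i\right]\, b^i ~.
\end{align*}
Finally, since $A$ stores the \emph{reversed} base-$b$ representation of $K$ (so that $A[i]$ is precisely the coefficient of $b^i$), the right-hand side is exactly the base-$b$ expansion of $K$, and therefore $\mathcal{N}\left(t\right)=K$.

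The argument is essentially a bookkeeping computation once \lemref{lemma:pathsToSlice} is available, so I do not expect a genuine obstacle. The one point that needs more than a calculation is the interaction with the deletion step: I must make sure that deleting vertices from which the sink is unreachable neither changes the path counts guaranteed by the lemma nor discards any path that should be counted. Verifying that every deleted vertex lies on no source-to-sink path, and that deleted vertices are never predecessors of surviving vertices, is the only place where a short combinatorial argument rather than a routine summation is required.
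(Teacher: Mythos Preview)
Your proposal is correct and follows essentially the same approach as the paper: both group the predecessors of $t$ by slice, apply \lemref{lemma:pathsToSlice} to get $b^i$ paths into each, note that slice $i$ contributes $A[i]$ such predecessors, and recognize the resulting sum $\sum_i A[i]\,b^i$ as the base-$b$ expansion of $K$. Your discussion of the deletion step is an extra bit of care the paper omits, but it does not change the structure of the argument.
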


\begin{proof}
  Let $A\in{\left\{{0 \dots b-1}\right\}^{\left\lfloor {\log_b{K}}\right\rfloor+1}}$
  be the array of the reverse base-$b$ representation of $K$.
  Using the decomposition of the $b$-ary representation,
  i.e. $K=\sum_{i=0}^{\left\lfloor {\log_b{K}}\right\rfloor}{A\left[i\right]\cdot b^i}$, and \lemref{lemma:pathsToSlice}, we get:

  \begin{align*}
      \mathcal{N}\left(t\right) & =
      \sum_{v\in in\left(t\right)}{\mathcal{N}\left(v\right)} =
      \sum_{i=0}^{\left\lfloor {\log_b{K}}\right\rfloor} {
        \left[\sum_{
            v\in\text{slice}_i \cap in\left(t\right)
        }{\underbrace{\mathcal{N}\left(v\right)}_{=b^i}}\right]
      } =
      \sum_{i=0}^{\left\lfloor {\log_b{K}}\right\rfloor} {
        \underbrace{\left|{\text{slice}_i\cap in\left(t\right)}\right|}_{=A\left[i\right]}\cdot{b^i}
      }\\
      & =
      \sum_{i=0}^{\left\lfloor {\log_b{K}}\right\rfloor} {
        A\left[i\right]\cdot{b^i}
      } = K ~.
  \end{align*}
\QED
\end{proof}

\subsection{Loss-based decoding generalization}
\label{subsec:arbitraryLBDGeneralization}

We now show how to adjust the generalization of loss-based decoding
for graphs with an arbitrary number of paths $K$,
constructed using \algoref{alg:graphConstruction}.

The idea of the reduction in \eqref{w_i} is that going through an edge $e_j$ inflicts the loss of turning its corresponding bit on (i.e. $L\left(1\times f_j\left(x\right)\right)$),
but also the loss of turning off the bits corresponding to other edges between its slices
(i.e. $\sum_{j':e_{j'}\in S\left(e_j\right)\setminus\left\{e_j\right\}}{L\left(\left(-1\right)\times f_{j'}\left(x\right)\right)}$),
which cannot coappear with $e_j$ in the same path (i.e. a codeword).

The only change in the general case is that
an edge $e_j=\left(u_j,t\right)$ that is connected to the sink $t$
cannot coappear with \emph{any}
other edge outgoing from a vertex in the same vertical slice as $u_j$,
\textbf{or} that is reachable from $u_j$.

Let $\delta\left(v\right)$ be the shortest distance from the source to $v$ (in terms of number of edges).

We define an updated $S$ function (which is a generalization of the one defined in \secref{sec:loss_based_decoding}),
where for every edge ${e_j=\left(u_j,v_j\right) \in E}$ we set:
\begin{align}
S\left(e_j\right)=
\begin{cases}
    \left\{\left(u,u'\right): \delta\left(u\right)=\delta\left(u_j\right)\right\} & v_j \neq t\\
    \left\{\left(u,u'\right): \delta\left(u\right)\ge\delta\left(u_j\right)\right\}  & v_j = t
\end{cases}~,
\label{new_s_function}
\end{align}

and the weights are set as in \eqref{w_i} but with the new $S$ function defined in \eqref{new_s_function},
\begin{align}
w_j\left(x\right)= L\left(1\times f_j(x)\right) +
\sum_{j{'}:e_{j'}\in S\left(e_j\right)\backslash \left\{e_j\right\}} L\left((-1)\times f_{j'}(x)\right)~.
\label{general_w}
\end{align}

For example, in the following figure we have,

\begin{minipage}[t]{.3\textwidth}
\vskip -0.2in
\begin{align*}
    S\left(e_6\right)&=\left\{e_6,e_7,e_8,e_9\right\}\\
    S\left(e_2\right)&=\left\{e_2,e_3,e_4,e_5,e_{13}\right\}\\
\end{align*}
\end{minipage}
\begin{minipage}[t]{.3\textwidth}
\vskip -0.2in
\begin{align*}
    S\left(e_{11}\right)&=\left\{e_{10},e_{11}\right\}\\
    S\left(e_{12}\right)&=\left\{e_{12}\right\}\\
\end{align*}
\end{minipage}
\begin{minipage}[t]{.3\textwidth}
\vskip -0.2in
\begin{align*}
    S\left(e_{13}\right)&=\left\{e_2,\dots,e_{13}\right\}~.
\end{align*}
\end{minipage}

\begin{figure*}[b]
\vskip -0.3in
\begin{center}
\label{fig:lbdExplanation}
\includegraphics[width=.5\linewidth]{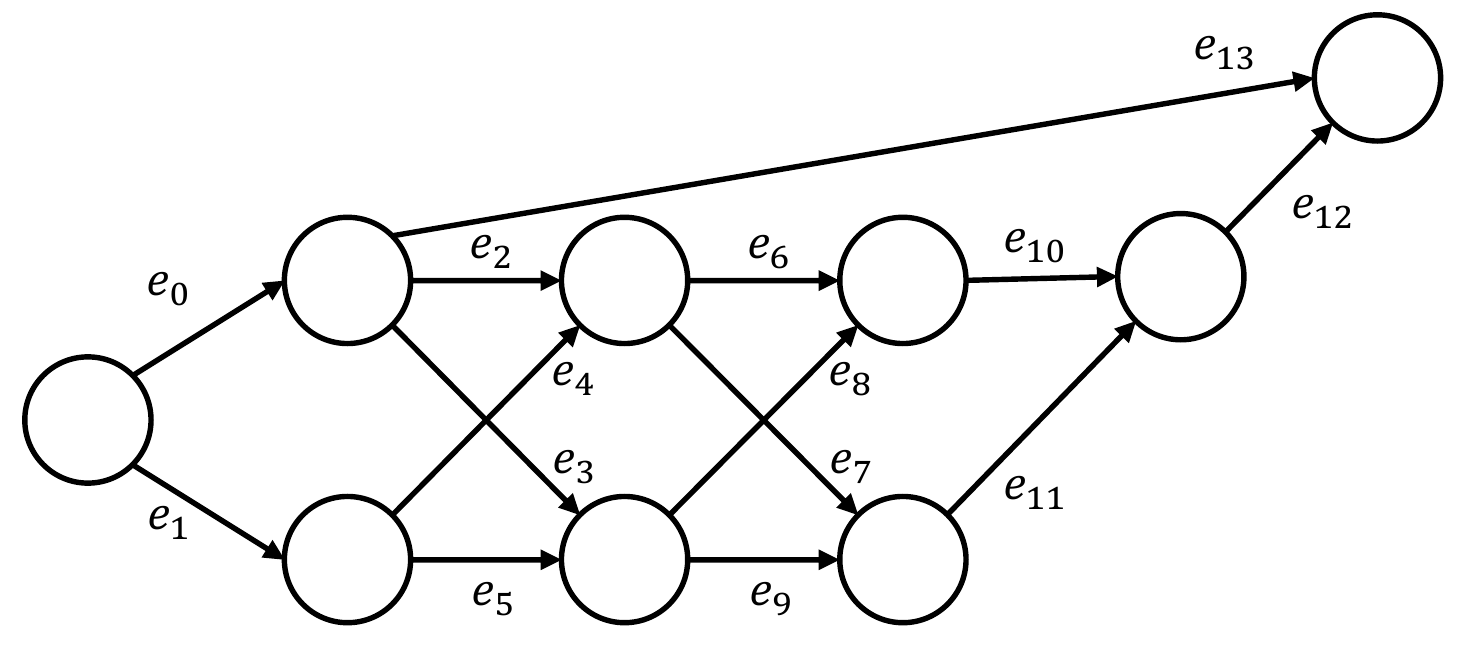}
\caption{An illustration of a graph with $K=9$ and $b=2$.}
\end{center}
\end{figure*}

\newpage

Below we show that \thmref{thm:loss_based_decoding} is correct for any $K$.
Let $P$ be a path on the trellis graph from the source $s$ to the sink $t$.
We start will the following lemma.
\begin{lemma}
\label{lem:incrementalDeltas}
    Let $e_q=\left(u_q,t\right)$ be the last edge in $P$.
    For every edge $e_j=\left(u_j,v_j\right)\in P\setminus\left\{e_q\right\}$ we have $\delta\left(v_j\right)= \delta\left(u_j\right)+1$.
\end{lemma}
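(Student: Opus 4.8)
The plan is to first pin down the shortest-distance function $\delta$ on the inner-slice vertices, and then read off the lemma from the trellis structure together with the fact that a non-final edge never ends at the sink.

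First I would prove the auxiliary claim that every (surviving) vertex $v$ lying in the $i$th inner slice satisfies $\delta(v) = i+1$, by induction on $i$. For the base case $i=0$, the only incoming edges to a first-slice vertex come from the source $s$ (which has $\delta(s)=0$), so $\delta(v)=1$. For the inductive step, the trellis structure guarantees that the incoming edges of a slice-$i$ vertex ($i\ge 1$) originate only from slice-$(i-1)$ vertices -- this is exactly the property invoked in the proof of \lemref{lemma:pathsToSlice}. By the inductive hypothesis all such neighbors have $\delta=i$, and since no edges into slice $i$ come from the source or from any other slice, every shortest path to $v$ must pass through a slice-$(i-1)$ vertex, giving $\delta(v)=i+1$. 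I would add a one-line remark that the deletion step of \algoref{alg:graphConstruction} preserves this picture: if a kept vertex $v$ has an incoming edge from $u$, then $u$ reaches $t$ through $v$ and is therefore kept as well, so no incoming neighbor of a surviving vertex is ever removed.

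Next I would deduce the lemma. Fix a non-final edge $e_j=(u_j,v_j)\in P\setminus\{e_q\}$. Since $t$ is a sink and $e_q$ is the last edge of $P$, the edge $e_q$ is the unique edge of $P$ whose head is $t$; hence $v_j\neq t$, so $v_j$ is an inner-slice vertex, say in slice $i$, and the auxiliary claim gives $\delta(v_j)=i+1$. Its predecessor $u_j$ is either the source (possible only when $i=0$, with $\delta(u_j)=0=i$) or a slice-$(i-1)$ vertex (with $\delta(u_j)=(i-1)+1=i$); in both cases $\delta(u_j)=i$. Therefore $\delta(v_j)=i+1=\delta(u_j)+1$, as claimed.

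The only genuinely delicate point is the inductive claim about $\delta$, and specifically the appeal to the trellis structure that incoming edges to slice $i$ come exclusively from slice $i-1$, so that no shortcut can lower the distance; everything after that is bookkeeping. It is also worth emphasizing in the write-up why the last edge $e_q$ must be excluded: its head is $t$, whose distance $\delta(t)$ may be strictly smaller than $\delta(u_q)+1$ because the sink receives edges from several different slices, so the increment-by-one property genuinely fails precisely at $e_q$.
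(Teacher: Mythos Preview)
Your proof is correct and follows essentially the same approach as the paper's: both rest on the observation that, by the trellis construction, non-sink edges go only between adjacent slices, so the slice index determines $\delta$ and every non-final edge increments it by one. The paper states this in a single sentence, whereas you supply the explicit induction for $\delta$, the remark about the deletion step, and the justification that $v_j\neq t$; these additions make the argument more careful but do not change its substance.
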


\begin{proof}
 Following immediately from the graph construction --
 other than edges to the sink,
 there are only edges between adjacent slices (without cycles).
 Therefore, any vertex $v$ in a vertical slice $i$
 (where the leftmost slice containing $s$ is the first one, i.e. $i=0$)
 holds $\delta\left(v\right)=i$,
 and every edge $e_j=\left(u_j,v_j\right)\in P\setminus\left\{e_q\right\}$
 holds $\delta\left(v_j\right)= \delta\left(u_j\right)+1$.
  \QED
\end{proof}

The next corollaries follow immediately.
\begin{corollary}
    \label{cor:uniqueVerticesAlongPath}
    For every two different vertices $u,v\in V\setminus\left\{t\right\}$ along $P$
    we have $\delta\left(u\right) \neq \delta\left(v\right)$.
\end{corollary}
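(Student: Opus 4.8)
The plan is to prove Corollary~\ref{cor:uniqueVerticesAlongPath} as a direct consequence of Lemma~\ref{lem:incrementalDeltas}, which establishes that $\delta$ increases by exactly one along each non-final edge of the path $P$. My strategy is to show that $\delta$ is strictly monotonically increasing as we traverse the vertices of $P$ (excluding the sink $t$), so that distinct vertices necessarily receive distinct $\delta$ values.

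First I would order the vertices of $P\setminus\{t\}$ according to their position along the path, writing them as $v^{(0)}=s, v^{(1)}, \dots, v^{(r)}=u_q$, where $e_q=(u_q,t)$ is the final edge. By Lemma~\ref{lem:incrementalDeltas}, each edge $(v^{(p)}, v^{(p+1)})$ in $P$ other than $e_q$ satisfies $\delta(v^{(p+1)}) = \delta(v^{(p)}) + 1$. Hence, by a trivial induction (or telescoping) along the path, $\delta(v^{(p)}) = \delta(s) + p = p$ for each $p\in\{0,\dots,r\}$, since $\delta(s)=0$. This makes the sequence $\delta(v^{(0)}), \delta(v^{(1)}), \dots, \delta(v^{(r)})$ strictly increasing.

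Now, given any two distinct vertices $u, v \in V\setminus\{t\}$ lying on $P$, they occupy two distinct positions $p \neq p'$ in this ordering, so $\delta(u) = p \neq p' = \delta(v)$, which is exactly the claim. The argument is essentially immediate once the monotonicity is in place.

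I do not anticipate a genuine obstacle here, as the corollary is labeled as following immediately from the lemma. The only subtlety worth flagging is that the statement restricts to $V\setminus\{t\}$: the sink $t$ is excluded precisely because Lemma~\ref{lem:incrementalDeltas} does not control the increment across the final edge $e_q$ (an edge to the sink may jump over several slices per the construction in \eqref{new_s_function}), so $\delta(t)$ need not equal $\delta(u_q)+1$ and could in principle collide with some interior $\delta$ value. Restricting to $V\setminus\{t\}$ sidesteps this entirely, and no further care is needed.
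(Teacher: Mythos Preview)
Your proposal is correct and matches the paper's approach: the paper simply states that the corollary follows immediately from Lemma~\ref{lem:incrementalDeltas}, and your argument makes explicit exactly the monotonicity-via-telescoping that this appeal to the lemma entails. Your remark about why $t$ must be excluded is also accurate and on point.
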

\begin{corollary}
    \label{cor:uqIsMaximal}
    Let $e_q=\left(u_q,t\right)$ be the last edge in $P$.
    For every edge $e_j=\left(u_j,v_j\right)\in P\setminus\left\{e_q\right\}$ we have $\delta\left(u_j\right)<\delta\left(u_q\right)$.
\end{corollary}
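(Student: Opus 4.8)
The plan is to leverage \lemref{lem:incrementalDeltas} to show that the distances $\delta$ strictly increase as one traverses $P$, so that the tail $u_q$ of the final edge is the vertex of maximal distance along the path among all vertices in $V\setminus\{t\}$. First I would write $P$ as a sequence of consecutive edges $e_{j_0},e_{j_1},\dots,e_{j_{r-1}}$ with $e_{j_{r-1}}=e_q=(u_q,t)$ the last edge, and denote the visited vertices by $s=w_0,w_1,\dots,w_{r-1}=u_q$ (all in $V\setminus\{t\}$), followed by $w_r=t$.

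\lemref{lem:incrementalDeltas} states that every non-final edge satisfies $\delta(v_j)=\delta(u_j)+1$; since the head of each edge is the tail of the next, this gives $\delta(w_{p+1})=\delta(w_p)+1$ for $p=0,\dots,r-2$. Hence $\delta$ is strictly increasing along $w_0,\dots,w_{r-1}$, so in particular $\delta(w_{r-1})=\delta(u_q)$ is strictly larger than $\delta(w_i)$ for every $i<r-1$. Now any edge $e_j\in P\setminus\{e_q\}$ is some $e_{j_p}$ with $p\le r-2$, so its tail is $u_j=w_p$ with $p\le r-2<r-1$; therefore $\delta(u_j)<\delta(u_q)$, which is exactly the claim.

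Since \lemref{lem:incrementalDeltas} already carries the substantive content, I expect no genuine obstacle; the only care needed is the bookkeeping that $u_q$ is the tail of the \emph{last} edge of $P$ and that every other edge precedes $e_q$ in the traversal, so its tail appears strictly earlier and hence has a strictly smaller distance. Alternatively, one could phrase the argument through \corref{cor:uniqueVerticesAlongPath}: the distances of the path vertices in $V\setminus\{t\}$ are all distinct and, by the increment relation, increase monotonically up to $u_q$, so $u_q$ attains the maximum and the strict inequality follows.
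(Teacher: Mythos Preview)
Your proposal is correct and matches the paper's intent: the paper states that \corref{cor:uqIsMaximal} ``follows immediately'' from \lemref{lem:incrementalDeltas} without giving an explicit argument, and what you wrote is precisely the spelling-out of that immediate consequence. There is nothing to add.
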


Clearly, since the graph is directed and acyclic, we get the next lemma.

\begin{lemma}
\label{lem:vertexAppearsOnce}
Every vertex in $V$ can have at most one incoming edge and one outgoing edge in $P$.
\end{lemma}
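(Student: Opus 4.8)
The plan is to derive the lemma directly from the acyclicity of the trellis graph, using the distance function $\delta$ and the strict monotonicity already established in \lemref{lem:incrementalDeltas}. First I would unpack what the conclusion means: since $P$ is a walk from the source $s$ to the sink $t$, a vertex having two incoming edges (or two outgoing edges) in $P$ is exactly a vertex that $P$ enters (resp. leaves) twice, i.e. one that $P$ visits at least twice. So it suffices to prove that $P$ visits each vertex at most once; vertices not lying on $P$ trivially contribute no edges and need no attention.

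To show this, I would use the strict monotonicity of $\delta$ along $P$. By \lemref{lem:incrementalDeltas}, traversing any edge of $P$ other than the final edge into $t$ increases $\delta$ by exactly $1$. Listing the vertices of $P$ in order as $s = w_0, w_1, \dots, w_r, t$, this yields $\delta(w_i) = i$ for $i = 0, \dots, r$. Were some vertex $v \in V \setminus \{t\}$ visited at two distinct positions $i < j$ along $P$, we would simultaneously have $\delta(v) = i$ and $\delta(v) = j$, a contradiction. Hence no vertex of $V \setminus \{t\}$ repeats along $P$; this is precisely the content of \corref{cor:uniqueVerticesAlongPath}, namely that distinct positions along $P$ carry distinct vertices. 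A vertex that occurs at most once along $P$ clearly has at most one incoming and at most one outgoing edge in $P$.

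It remains to treat the sink $t$ separately, since the edges into $t$ are not governed by the incremental-$\delta$ relation of \lemref{lem:incrementalDeltas} and so fall outside the argument above. Here I would simply observe that $t$ is the unique sink, hence has no outgoing edges at all, and that a path ending at $t$ terminates there; therefore $t$ contributes exactly one incoming edge and zero outgoing edges to $P$. Combining the two cases, every vertex in $V$ has at most one incoming edge and at most one outgoing edge in $P$, as claimed.

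I do not expect a genuine obstacle, as the statement is essentially acyclicity restated: a repeated vertex would force a nontrivial closed sub-walk of $P$ and hence a directed cycle. The only point requiring mild care is the sink $t$, whose incoming edges must be handled directly rather than through $\delta$. If one prefers a route that avoids $\delta$ entirely, the same conclusion follows by assuming a vertex is visited twice, extracting the closed sub-walk of $P$ between the two visits, and noting it contains a directed cycle, contradicting the acyclicity of the graph.
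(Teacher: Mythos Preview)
Your proof is correct, but the paper's own justification is far shorter: the sentence immediately preceding the lemma reads ``Clearly, since the graph is directed and acyclic, we get the next lemma.'' That is the entirety of the paper's argument, and it coincides with the alternative route you sketch in your final paragraph---a repeated vertex would yield a closed sub-walk and hence a directed cycle, contradicting acyclicity.

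Your primary route is different: you invoke \lemref{lem:incrementalDeltas} to show $\delta$ is strictly increasing along $P$ up to the sink, conclude that no vertex of $V\setminus\{t\}$ can appear at two positions, and then handle $t$ separately. This works, but it is more machinery than the statement warrants, and it makes the lemma depend on \lemref{lem:incrementalDeltas} when in the paper's logical order the two are independent (both are used side by side to establish \corref{cor:edgesOfSlice}). One small remark: your aside that this is ``precisely the content of \corref{cor:uniqueVerticesAlongPath}'' slightly overstates things---that corollary asserts that distinct \emph{vertices} along $P$ have distinct $\delta$-values, not that distinct \emph{positions} carry distinct vertices---though this does not affect your argument, which you in fact derive directly from \lemref{lem:incrementalDeltas}.
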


By using \lemref{lem:vertexAppearsOnce}, \corref{cor:uniqueVerticesAlongPath} and \lemref{lem:incrementalDeltas} we have the next corollary.
\begin{corollary}
\label{cor:edgesOfSlice}
Set an edge $e_j=\left(u_j,v_j\right)$ along the path $P$.
Then, $\forall{e_{j{'}} \in S\left(e_j\right)\setminus\left\{e_j\right\}}$ we have
$e_{j{'}}\notin P$.
\end{corollary}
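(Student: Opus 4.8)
The plan is to prove the statement by contradiction, splitting on whether the fixed edge $e_j=(u_j,v_j)$ terminates at the sink, since the definition of $S(e_j)$ in \eqref{new_s_function} branches exactly on the condition $v_j=t$. In both cases I would take an arbitrary $e_{j'}=(u',v')\in S(e_j)\setminus\{e_j\}$, assume toward a contradiction that $e_{j'}\in P$, and derive a contradiction from the cited results. The key observation making everything go through is that the tail of any edge along $P$ lies in $V\setminus\{t\}$ (the sink has no outgoing edges), so the $\delta$-uniqueness statement of \corref{cor:uniqueVerticesAlongPath} applies to these tails.

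First I would treat the case $v_j\neq t$, where $S(e_j)=\left\{(u,u'):\delta(u)=\delta(u_j)\right\}$, so every $e_{j'}\in S(e_j)$ has a tail $u'$ with $\delta(u')=\delta(u_j)$. If $e_{j'}\in P$, there are two sub-cases. If $u'=u_j$, then $e_j$ and $e_{j'}$ are two distinct outgoing edges of the same vertex inside $P$, contradicting \lemref{lem:vertexAppearsOnce}. If $u'\neq u_j$, then $u'$ and $u_j$ are two distinct non-sink vertices along $P$ sharing the same $\delta$-value, contradicting \corref{cor:uniqueVerticesAlongPath}. Either way $e_{j'}\notin P$.

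Next I would treat the case $v_j=t$, in which $e_j$ is necessarily the unique last edge $e_q$ of $P$ and $S(e_j)=\left\{(u,u'):\delta(u)\geq\delta(u_q)\right\}$. Assuming $e_{j'}=(u',v')\in P$ with $\delta(u')\geq\delta(u_q)$, I would again split: if $v'=t$ then $e_j$ and $e_{j'}$ are two distinct incoming edges of the sink inside $P$, contradicting the at-most-one-incoming-edge part of \lemref{lem:vertexAppearsOnce}; if $v'\neq t$ then $e_{j'}\in P\setminus\{e_q\}$, so the strict maximality of $\delta(u_q)$ given by \corref{cor:uqIsMaximal} yields $\delta(u')<\delta(u_q)$, contradicting $\delta(u')\geq\delta(u_q)$.

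The argument is essentially a routine case analysis once the right invariants are in place, so I do not expect a serious obstacle; the only point requiring care is the sink case, where the inequality $\delta(u)\geq\delta(u_j)$ (rather than an equality) in the definition of $S$ forces me to invoke the \emph{strict} maximality of $\delta(u_q)$ from \corref{cor:uqIsMaximal} and to separately rule out a second edge into $t$ using \lemref{lem:vertexAppearsOnce}. Note that \corref{cor:uqIsMaximal} is itself a consequence of \lemref{lem:incrementalDeltas}, so all the work ultimately rests on the three results the corollary advertises.
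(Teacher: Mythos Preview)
Your proof is correct and follows essentially the same strategy as the paper: both argue by contradiction, split on whether $v_j=t$, and in the first case reduce to a clash with \corref{cor:uniqueVerticesAlongPath} and \lemref{lem:vertexAppearsOnce}. The only cosmetic difference is in the sink case: the paper derives $\delta(u_{j'})>\delta(u_j)$ via \corref{cor:uniqueVerticesAlongPath} and then invokes \lemref{lem:incrementalDeltas} to argue $u_{j'}$ would have to appear after $u_j$ in $P$, whereas you split on $v'=t$ and appeal to \corref{cor:uqIsMaximal} directly; both routes are equivalent since, as you note, \corref{cor:uqIsMaximal} is an immediate consequence of \lemref{lem:incrementalDeltas}.
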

\begin{proof}
  Let $e_{j{'}}=\left(u_{j{'}},v_{j{'}}\right)$ be any edge in $S\left(e_j\right)\setminus\left\{e_j\right\}$,
  and assume $e_{j{'}}\in P$.
  We consider two cases:
  \begin{enumerate}[leftmargin=*]
    \item $v_j \neq t$:\\
  By the definition of $S\left(e_j\right)$, we get
  $\delta\left(u_{j{'}}\right) = \delta\left(u_{j}\right)$.
  By negating \corref{cor:uniqueVerticesAlongPath} we get $u_{j{'}}=u_j$.
  Therefore $e_j,e_{j{'}}$ are two edges in $P$ outgoing from the same vertex $u_j$,
  in contradiction to \lemref{lem:vertexAppearsOnce}.
    \item $v_j=t$:\\
    By definition, $e_j\neq e_{j{'}}$.
    By \lemref{lem:vertexAppearsOnce} we get that $u_j\neq u_{j'}$.
    Also, since $t$ is a sink it has no outgoing edges, thus $u_{j{'}}\neq t$.

    By the definition of $S\left(e_j\right)$, we get
    $\delta\left(u_{j{'}}\right) \ge \delta\left(u_{j}\right)$.
    Since $u_{j{'}}\neq u_{j}$, we can use \corref{cor:uniqueVerticesAlongPath}
    to rule out equivalence and get
    $\delta\left(u_{j{'}}\right) > \delta\left(u_{j}\right)$.
      Following \lemref{lem:incrementalDeltas}, $u_{j{'}}$ must appear later than $u_j$ in $P$.
      However, $t$ is the only vertex in $P$ to appear after $u_j$ ($t$ is a sink),
      and since $u_{j{'}}\neq t$, we get a contradiction.
    \end{enumerate}  \QED
\end{proof}

We conclude with the main result of this section, which states that \thmref{thm:loss_based_decoding} is correct for any $K$.
\begin{theorem}
Following the notations of \thmref{thm:loss_based_decoding}, assume the weights of the edges are calculated as in Eq. \eqref{general_w} with the $S$ function defined in \eqref{new_s_function}.
Then, the weight of any path $P_k$ corresponding to class $k$ equals to the loss suffered by predicting class $k$,
i.e. $w(P_k)=\sum_{j=1}^{\ell} L\left(M_{k,j}f_j(x)\right)$.
\end{theorem}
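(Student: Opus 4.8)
The plan is to reduce this general-$K$ statement to the same telescoping argument used in the proof of \thmref{thm:loss_based_decoding}, isolating exactly what the arbitrary-$K$ construction changes. The core computation is identical: starting from $w(P_k)=\sum_{j:e_j\in P_k}w_j(x)$, I would expand each $w_j(x)$ using the definition in \eqref{general_w}, so that the path weight becomes a double sum, $\sum_{j:e_j\in P_k}\big[\sum_{j':e_{j'}\in S(e_j)}L(M_{k,j'}f_{j'}(x))\big]$, where the diagonal term $e_{j'}=e_j$ contributes $L(+1\cdot f_j(x))=L(M_{k,j}f_j(x))$ and the off-diagonal terms contribute $L(-1\cdot f_{j'}(x))=L(M_{k,j'}f_{j'}(x))$, since any edge $e_{j'}\in S(e_j)\setminus\{e_j\}$ is \emph{not} on the path $P_k$ and hence has codeword bit $M_{k,j'}=-1$.

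The whole argument therefore hinges on the claim that as $e_j$ ranges over the edges of $P_k$, the sets $S(e_j)$ partition the full edge index set $\{1,\dots,\ell\}$. This is precisely where the work of the preceding lemmas pays off, and it is the step I expect to be the main obstacle because the new sink-edge definition in \eqref{new_s_function} breaks the clean ``one slice per step'' picture of the power-of-two case. I would establish the partition in two parts. First, \emph{disjointness}: for two distinct edges $e_j,e_{j''}$ on the path, $S(e_j)\cap S(e_{j''})=\emptyset$. \corref{cor:edgesOfSlice} already guarantees that no $e_{j'}\in S(e_j)\setminus\{e_j\}$ lies on $P_k$, so the only members of $S(e_j)$ that are path edges are $e_j$ itself; combined with \corref{cor:uniqueVerticesAlongPath} and \lemref{lem:incrementalDeltas}, the slices (indexed by $\delta$-values) visited along the path are all distinct, which forces the $S(e_j)$ to be pairwise disjoint.

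Second, \emph{coverage}: every edge index $j'\in\{1,\dots,\ell\}$ belongs to $S(e_j)$ for exactly one path edge $e_j$. Here I would split on whether $e_{j'}$ is an inner edge or a sink edge. If $e_{j'}=(u_{j'},v_{j'})$ with $v_{j'}\neq t$, then letting $e_q=(u_q,t)$ be the last path edge, \corref{cor:uqIsMaximal} shows the path visits strictly increasing $\delta$-values up to $\delta(u_q)$, so there is a unique path edge $e_j$ with $\delta(u_j)=\delta(u_{j'})$ (provided $\delta(u_{j'})\le \delta(u_q)$), and the $v_j\neq t$ branch of \eqref{new_s_function} places $e_{j'}\in S(e_j)$; edges with $\delta(u_{j'})>\delta(u_q)$ or equal are instead captured by the sink edge $e_q$ via its $\delta(u)\ge\delta(u_q)$ branch. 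If $e_{j'}$ is itself a sink edge, the inclusive threshold in the sink-edge definition ensures it is swept up by $e_q$ as well. Once disjointness and coverage are verified, the double sum collapses exactly to $\sum_{j'=1}^{\ell}L(M_{k,j'}f_{j'}(x))$, completing the proof; I would present this final collapse as a one-line consequence of the partition rather than re-deriving each term.
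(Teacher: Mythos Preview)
Your proposal is correct and follows essentially the same route as the paper: both expand $w(P_k)$ into the double sum $\sum_{j:e_j\in P_k}\sum_{j':e_{j'}\in S(e_j)}L(M_{k,j'}f_{j'}(x))$ via \corref{cor:edgesOfSlice}, then show that the sets $\{S(e_j)\}_{e_j\in P_k}$ partition $E$ by separating the sink edge $e_q$ (whose $S$-set captures $\{e:\delta(u)\ge\delta(u_q)\}$ via \eqref{new_s_function}) from the remaining path edges (which, by \corref{cor:uqIsMaximal}, cover $\{e:\delta(u)<\delta(u_q)\}$). One small cleanup: your case split should be on $\delta(u_{j'})<\delta(u_q)$ versus $\delta(u_{j'})\ge\delta(u_q)$ rather than on whether $e_{j'}$ is a sink edge, since a non-sink edge can have $\delta(u_{j'})\ge\delta(u_q)$ and is then captured by $S(e_q)$, not by a non-sink path edge.
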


\begin{proof}

For any class $k$, we denote the last edge in $P_k$ by $e_q=\left(u_{q},t\right)$.
    We have,
\begin{align*}
 w\left(P_k\right)  &=
    \sum_{j:e_j\in P_k}{ w_{j}\left(x\right) }
   \\
   & =
   \sum_{j:e_j\in P_k}\left[L\left(\underbrace{1}_{=M_{k,j}}\times f_j\left(x\right)\right) + \sum_{j':e_{j'}\in S\left(e_j\right)\backslash \left\{e_j\right\}}L\left(\underbrace{(-1)}_{
    \substack{
        =M_{k,j{'}} \\
        \text{(\corref{cor:edgesOfSlice})}
    }
   }\times f_{j'}(x)\right)\right] \\
   & =
   \sum_{j:e_j\in P_k}\left[\sum_{j':e_{j'}\in S\left(e_j\right)}L\left(M_{k,j{'}}\times f_{j'}(x)\right)\right] \\
   & =
   \sum_{j:e_{j}\in S(e_q)}L\left(M_{k,{j}}\times f_{j}(x)\right) +
   \sum_{j:e_j\in P_k\setminus\left\{e_{q}\right\}}
        \left[\sum_{j':e_{j'}\in S(e_j)}L\left(M_{k,{j'}}\times f_{j'}(x)\right)\right] \\
   & =
   \sum_{\substack{
    j: e_{j}=\left(u_{j},v_{j}\right), \\
            \delta\left(u_{j}\right)\ge\delta\left(u_q\right)
   }}
        L\left(M_{k,j}\times f_{j}(x)\right)
    +
   \underbrace{\sum_{j:e_j=\left(u_{j},v_{j}\right)\in P_k\setminus\left\{e_{q}\right\}}
        \left[\sum_{
            \substack{
                j{'}: e_{j{'}}=\left(u_{j{'}},v_{j{'}}\right), \\
                \delta\left(u_{j{'}}\right)=\delta\left(u_j\right)
            }
        }
        L\left(M_{k,{j'}}\times f_{j'}(x)\right)\right]}_{
            \text{ (\corref{cor:uqIsMaximal}) ~~~~}
                =\sum_{
                    \substack{
                        j: e_{j}=\left(u_{j},v_{j}\right), \\
                        \delta\left(u_{j}\right)<\delta\left(u_q\right)
                    }
                }
                L\left(M_{k,j}\times f_j(x)\right)
        } \\
    &= \sum_{\substack{
            j: e_{j}=\left(u_{j},v_{j}\right), \\
                    \delta\left(u_{j}\right)\ge\delta\left(u_q\right)
           }
        }
        L\left(M_{k,j}\times f_{j}(x)\right) +
        \sum_{
            \substack{
            j: e_{j}=\left(u_{j},v_{j}\right), \\
                    \delta\left(u_{j}\right)<\delta\left(u_q\right)
           }
        }
        L\left(M_{k,{j}}\times f_{j}(x)\right)
         \\
     &= \sum_{j: e_{j}\in E} L\left(M_{k,{j}}\times f_{j}(x)\right) \\
     & = \sum_{j=1}^{\ell} L\left(M_{k,j}f_j(x)\right) ~.\\
\end{align*}
\QED
\end{proof}

\newpage
\section{Details for complexity analysis in \secref{complexityAnalysis}}
\label{supp:complexity}

W-LTLS requires training and storing a binary function or model for every edge.
Hence, we first turn to analyze the number of edges.

\begin{lemma}
\vskip 0.1in
\label{lemma:innerSlicesVertices}
The number of vertices in the inner slices is at most $\left({\left\lfloor {\log_b{K}}\right\rfloor+1}\right)b$.
\end{lemma}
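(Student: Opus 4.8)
The final statement to prove is \lemref{lemma:innerSlicesVertices}: the number of vertices in the inner slices is at most $\left({\left\lfloor {\log_b{K}}\right\rfloor+1}\right)b$.

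\textbf{My plan.}
The plan is to read off the bound directly from the graph construction in \algoref{alg:graphConstruction}, which is the only place where the inner-slice structure is defined. First I would recall that step 3 of the construction explicitly builds a trellis graph with $\left\lfloor {\log_b{K}}\right\rfloor+1$ inner slices, each of which is created with exactly $b$ vertices. At this stage the total number of vertices lying in the inner slices is therefore exactly $\left({\left\lfloor {\log_b{K}}\right\rfloor+1}\right)b$.

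The second ingredient is that the only subsequent step affecting the vertex set is step 7, which \emph{deletes} any vertex from which the sink is unreachable (the source and sink themselves are not inner-slice vertices, so steps 4 and 5 do not change the count). Since deletion can only remove inner-slice vertices and never add any, the number of inner-slice vertices after the full construction is at most the number created in step 3. Combining the two observations gives the claimed upper bound
\begin{align*}
\#\{\text{inner-slice vertices}\} \;\le\; \left({\left\lfloor {\log_b{K}}\right\rfloor+1}\right)b~,
\end{align*}
with equality precisely when no vertex is pruned (e.g.\ when $K$ is a power of $b$).

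\textbf{Main obstacle.} There is essentially no computational difficulty here; the statement is an \emph{at most} bound, so I deliberately avoid having to characterize exactly how many vertices get pruned in step 7 (a count that depends on the base-$b$ digits of $K$, as in \lemref{lemma:pathsToSlice} and the surrounding analysis). The only point needing a little care is bookkeeping: I must be explicit that ``inner slices'' excludes the source $s$ and sink $t$ so that steps 4 and 5 are irrelevant to the count, and that step 7 is monotone (deletion-only) so the pre-pruning count is a valid upper bound. Once those two facts are stated, the inequality is immediate.
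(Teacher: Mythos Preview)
Your proposal is correct and matches the paper's own proof, which is a one-line ``Follows immediately from the construction in \appref{subsec:graphConstruction}.'' You have simply unpacked that sentence: step~3 creates exactly $\left({\left\lfloor {\log_b{K}}\right\rfloor+1}\right)b$ inner-slice vertices, steps~4--5 do not touch them, and step~7 can only delete, so the pre-pruning count is an upper bound.
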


\begin{proof}
Follows immediately from the construction in \appref{subsec:graphConstruction}.
\QED
\end{proof}

\begin{corollary}
\label{edgesNumberBound}
The number of edges is upper bounded:

$$\left|E\right| \le
\left(b+1\right)\left({\left\lfloor {\log_b{K}}\right\rfloor+1}\right)b + b =
\bigO{\frac{b^2}{\log{b}}\log{K}}~.$$
\end{corollary}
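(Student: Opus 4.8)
The plan is to bound $\left|E\right|$ by counting edges according to their type, using \lemref{lemma:innerSlicesVertices} to control the number of vertices that can serve as edge sources. First I would partition the edges of the trellis graph into three groups: the edge from the source $s$ into the first inner slice, the edges running between two adjacent inner slices, and the edges running into the sink $t$. By the construction in \appref{subsec:graphConstruction}, every inner vertex (other than those in the final slice) emits exactly $b$ edges to the next slice, so the count of inter-slice edges is at most $b$ times the number of inner vertices. The edges into the sink are at most one per inner vertex, and the source contributes a bounded number of edges as well.

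The key quantitative input is \lemref{lemma:innerSlicesVertices}, which states that the number of inner-slice vertices is at most $\left({\left\lfloor {\log_b{K}}\right\rfloor+1}\right)b$. Multiplying this by $b$ bounds the inter-slice edges by $\left({\left\lfloor {\log_b{K}}\right\rfloor+1}\right)b^2$; adding the sink edges (at most one per inner vertex, giving another $\left({\left\lfloor {\log_b{K}}\right\rfloor+1}\right)b$) and absorbing the source edges into an additive constant of order $b$, I would arrive at
\[
\left|E\right| \le \left(b+1\right)\left({\left\lfloor {\log_b{K}}\right\rfloor+1}\right)b + b,
\]
matching the claimed explicit bound.

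The final step is to simplify this expression asymptotically. The leading factor $\left(b+1\right)b$ is $\bigO{b^2}$, and the factor $\left\lfloor {\log_b{K}}\right\rfloor+1$ is the place to be careful: using the change-of-base identity $\log_b K = \frac{\log K}{\log b}$, this factor is $\bigO{\frac{\log K}{\log b}}$. Multiplying the two pieces and noting that the trailing $+b$ term is lower order yields the stated $\bigO{\frac{b^2}{\log{b}}\log{K}}$.

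I expect the only mild subtlety to be the asymptotic simplification rather than the combinatorial count: one must treat $b$ as a parameter (not a constant) when converting $\left\lfloor {\log_b{K}}\right\rfloor+1$ into $\bigO{\frac{\log K}{\log b}}$, since otherwise the $\frac{1}{\log b}$ savings that distinguishes W-LTLS from a naive $\bigO{b^2\log K}$ bound would be lost. The edge-counting itself is routine once the three edge types are separated, and the source/sink boundary edges are safely absorbed into the additive and constant terms.
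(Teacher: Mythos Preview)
Your proposal is correct and essentially identical to the paper's argument. The paper's proof simply observes that each inner-slice vertex has at most $b+1$ outgoing edges (your $b$ inter-slice edges plus your one sink edge, combined into a single count), multiplies by the bound from \lemref{lemma:innerSlicesVertices}, and adds the $b$ edges leaving the source; your three-way partition is the same count with the $b+1$ factor unpacked.
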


\begin{proof}
Each vertex in the inner slices can have at most $b+1$ outgoing edges.
We use \lemref{lemma:innerSlicesVertices} and count also the $b$ edges outgoing from the source.
\QED
\end{proof}


For most linear classifiers (with $d$ parameters each) we get a total \emph{model size complexity} of $\bigO{d\frac{b^2}{\log{b}}\log{K}}$.

Inference consists of four steps:
\begin{enumerate}
  \item Computing the value (margin) of all binary functions on the input $x$. This requires $\bigO{d\left|E\right|}=\bigO{d\frac{b^2}{\log{b}}\log{K}}$ time.
  \item Computing the edge weights $\left\{w_{i}\left(x\right)\right\}_{i=1}^{\ell}$ as explained in \secref{sec:loss_based_decoding}.
    This can be performed in $\bigO{\left|V\right|+\left|E\right|}=\bigO{\frac{b^2}{\log{b}}\log{K}}$ time using a simple dynamic programming algorithm (e.g. implementing back recursion).
  \item Finding the shortest path in the trellis graph with respect to $\left\{w_i\left(x\right) \right\}_{i=1}^{\ell}$ using the Viterbi algorithm in $\bigO{\left|V\right|+\left|E\right|}=\bigO{\frac{b^2}{\log{b}}\log{K}}$ time.
  \item Decoding the shortest path to a class.
        As explained in \secref{pathAssignment}, 
        the inference requires a mapping function from path to code. 
        Using data structures such as a binary tree, this can be performed in a $\bigO{\left|E\right|}=\bigO{\frac{b^2}{\log{b}}\log{K}}$ time complexity.

\end{enumerate}

We get that the total \emph{inference time complexity} is $\bigO{d\left|E\right|}=\bigO{d\frac{b^2}{\log{b}}\log{K}}$.

\newpage

\section{Experiments appendix}

\subsection{Datasets used in experiments}
\label{supp:datasets}

\begin{table}[h]
\begin{center}
\begin{small}
\begin{sc}
\begin{tabular}{lrrr}
\toprule
Dataset     & Classes & Features    & Train Samples   \\
            & $K$     & $d$        & $m$        \\
\midrule
sector      & 105       & 55,197    & 7,793     \\
aloi\_bin   & 1,000     & 636,911   & 90,000    \\
imageNet    & 1,000     & 1,000     & 1,125,264 \\
Dmoz        & 11,947    & 833,484   & 335,068   \\
LSHTC1      & 12,294    & 1,199,856 & 83,805    \\
\bottomrule
\end{tabular}
\end{sc}
\end{small}
\end{center}
\caption{Datasets used in the experiments.}
\label{datasets_table}
\end{table}

\subsection{Average binary training loss}
\label{supp:avgBinaryLoss}

We validate our hypothesis that wider graphs lead to easier
binary problems. In the first row of \figref{fig:avgBinaryLossVsPredictorsNumber} we
plot the average binary training loss $\varepsilon$ as a function of model size.
The average is both over the induced binary
subproblems and over the five runs.

In all datasets we observe a decrease of the average error as the slice width $b$ grows.
 The decrease is sharp for low values of $b$ and then
practically almost converges (to zero).
 These plots validate our claim -- as the
subproblems become more unbalanced they also become easier.

\begin{figure*}[h]
\vskip 0.2in
\begin{center}
\includegraphics[width=.196\linewidth]{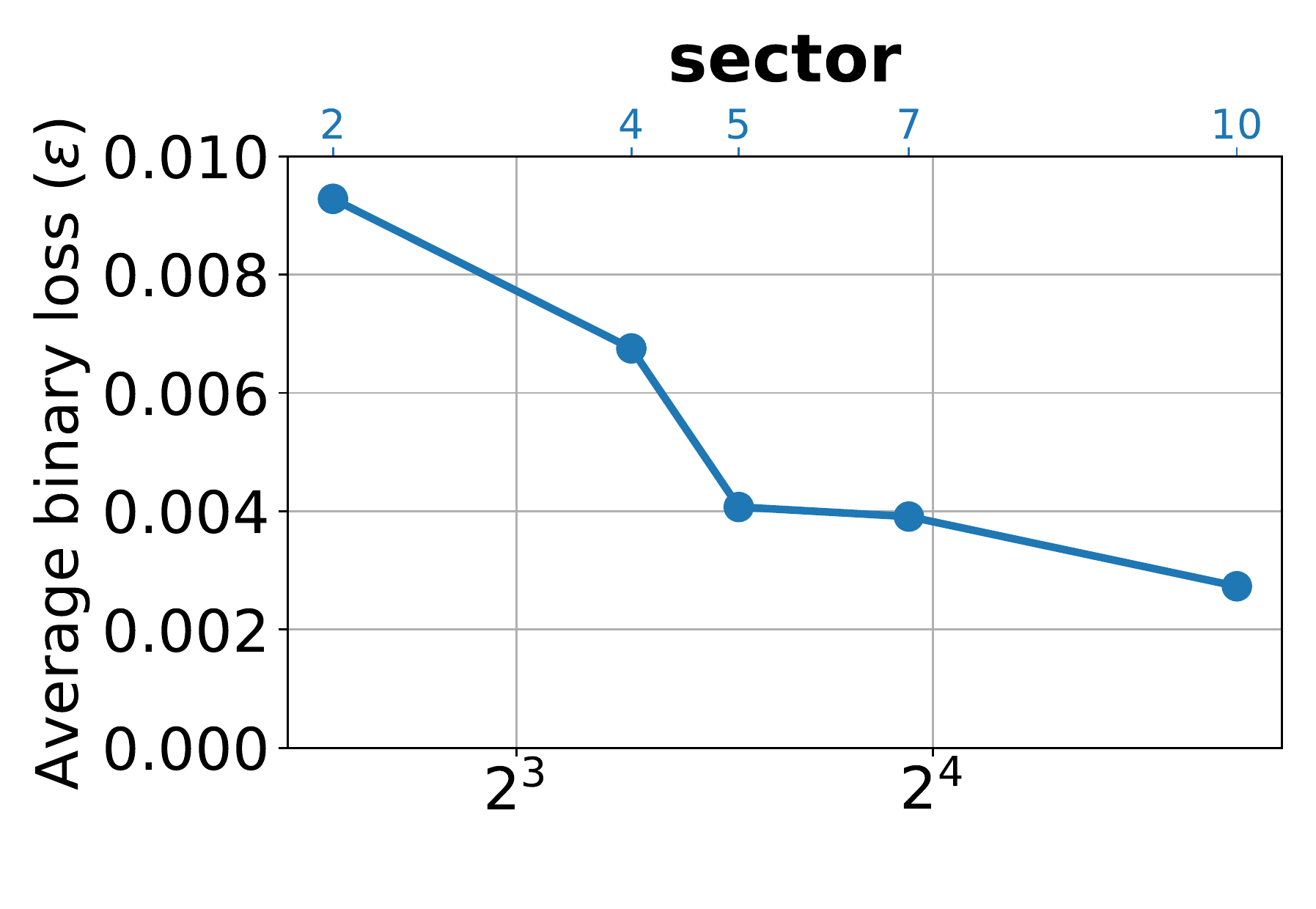}
\includegraphics[width=.191\linewidth]{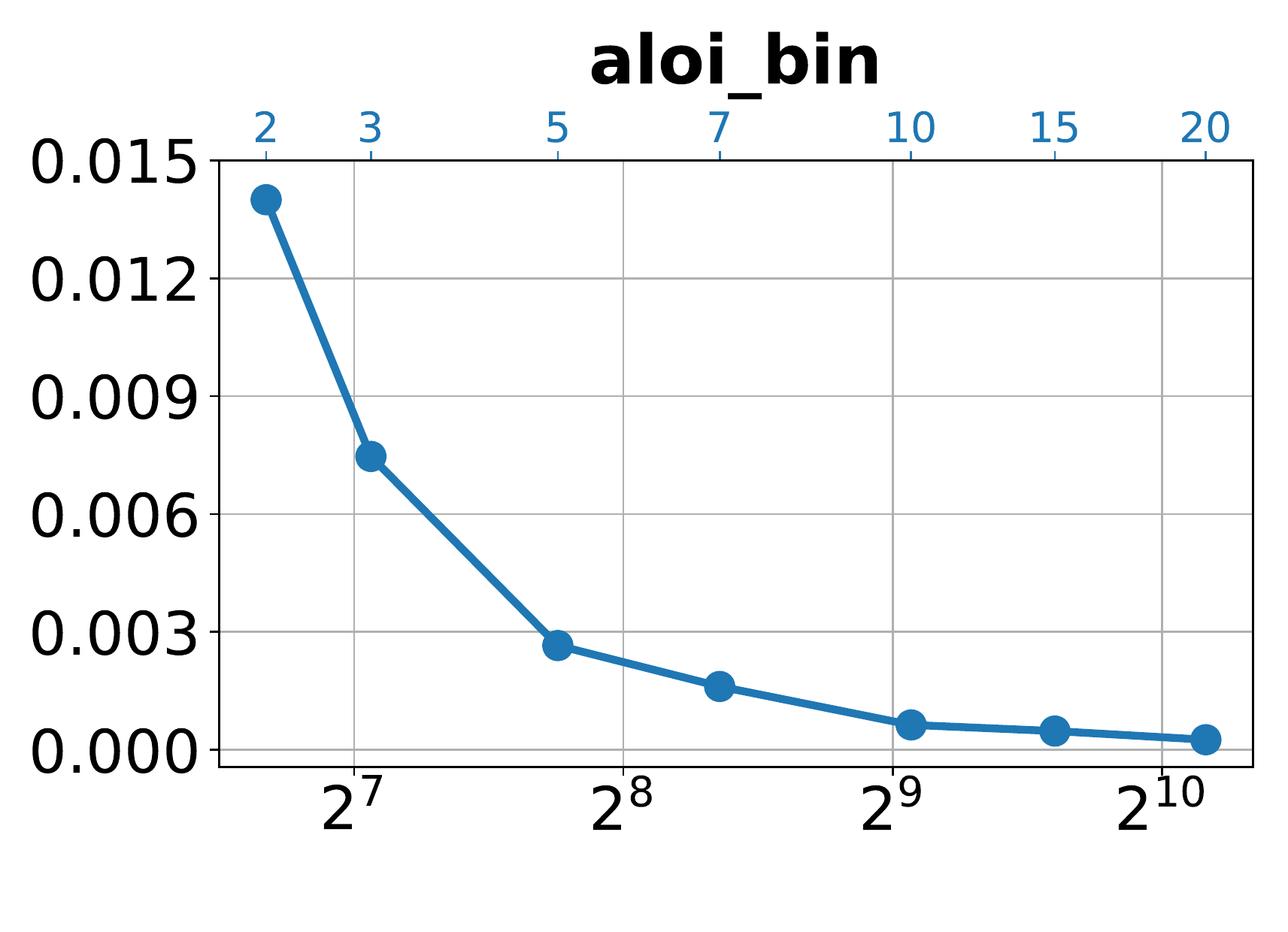}
\includegraphics[width=.191\linewidth]{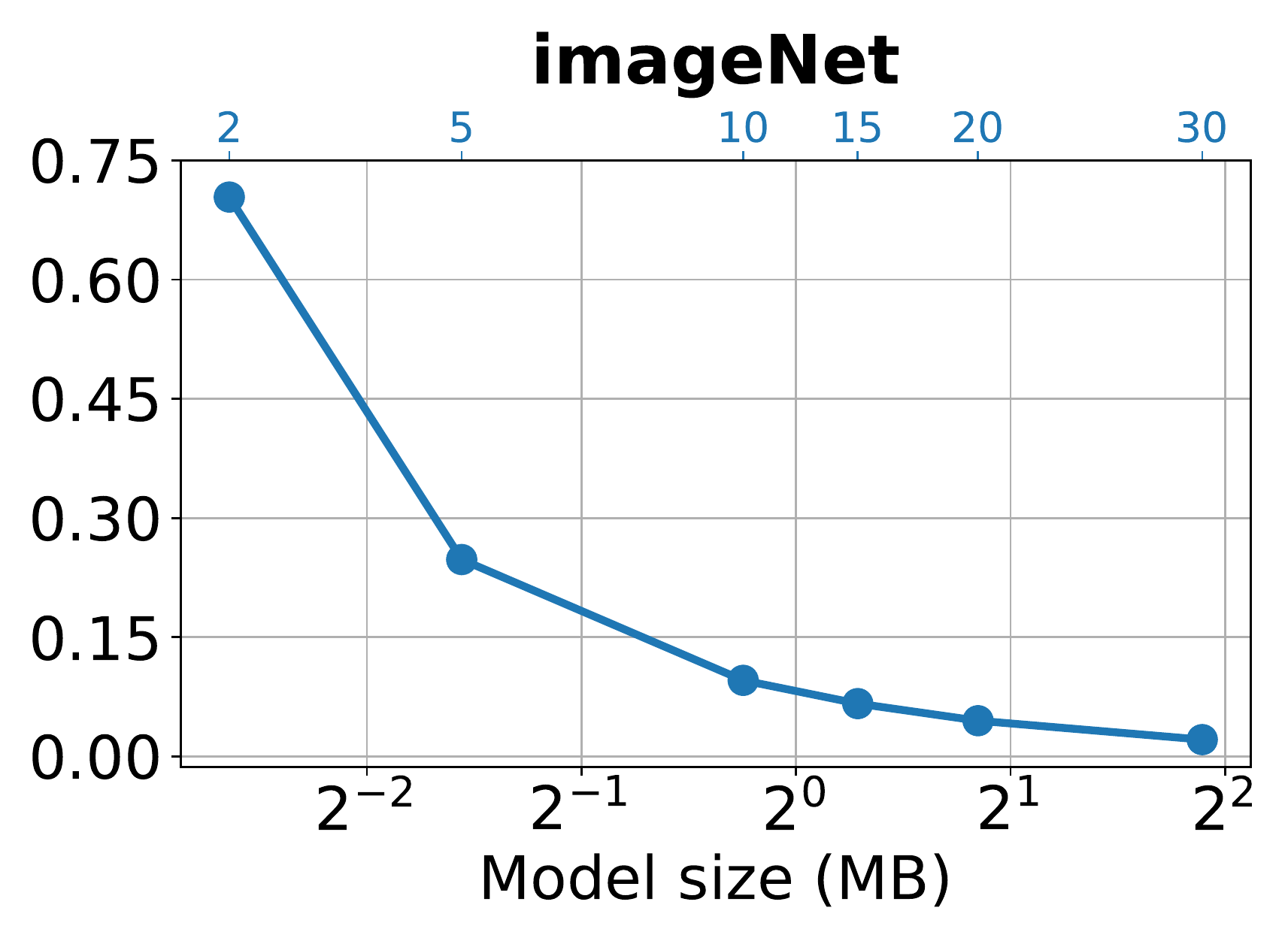}
\includegraphics[width=.191\linewidth]{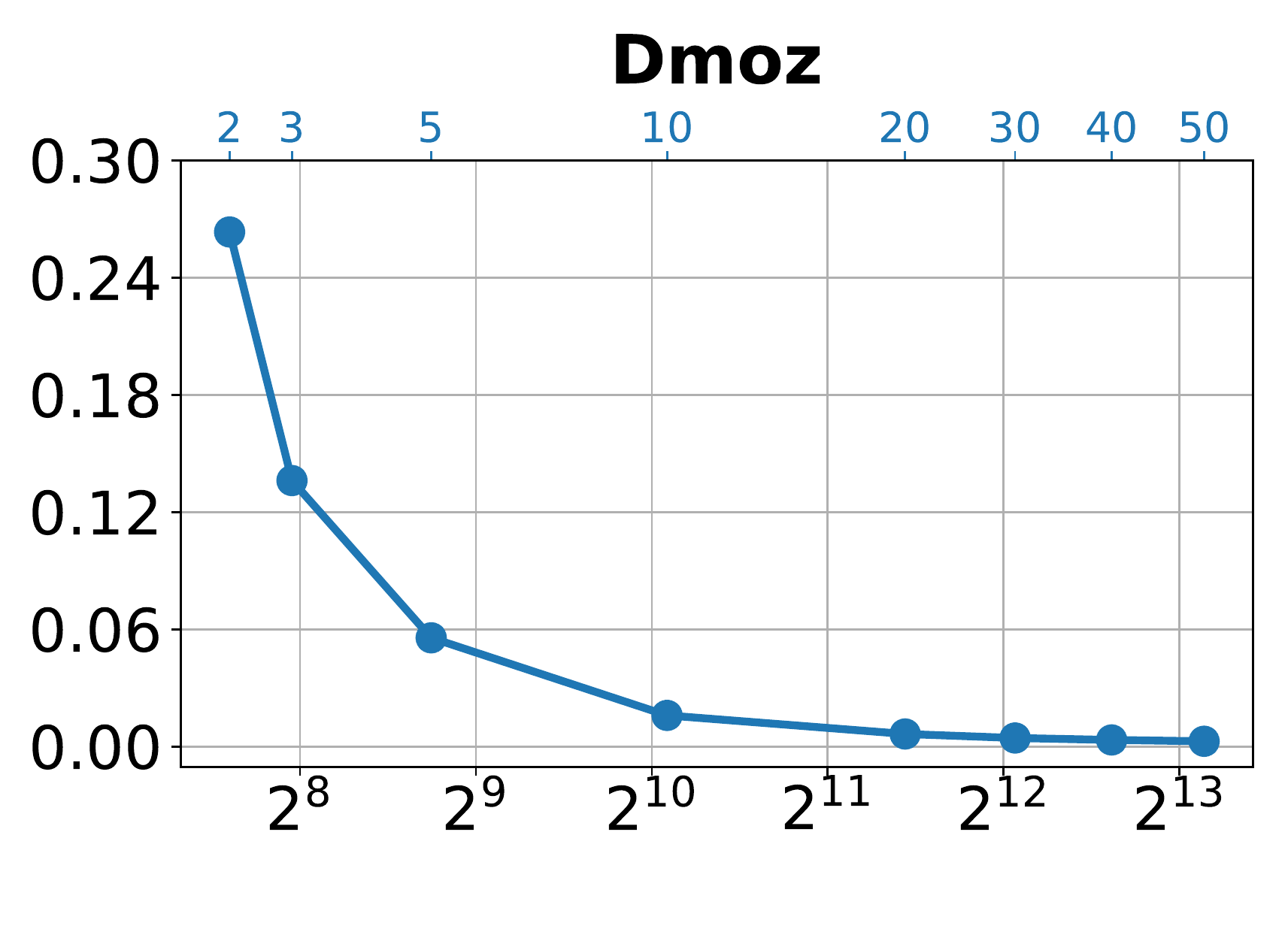}
\includegraphics[width=.191\linewidth]{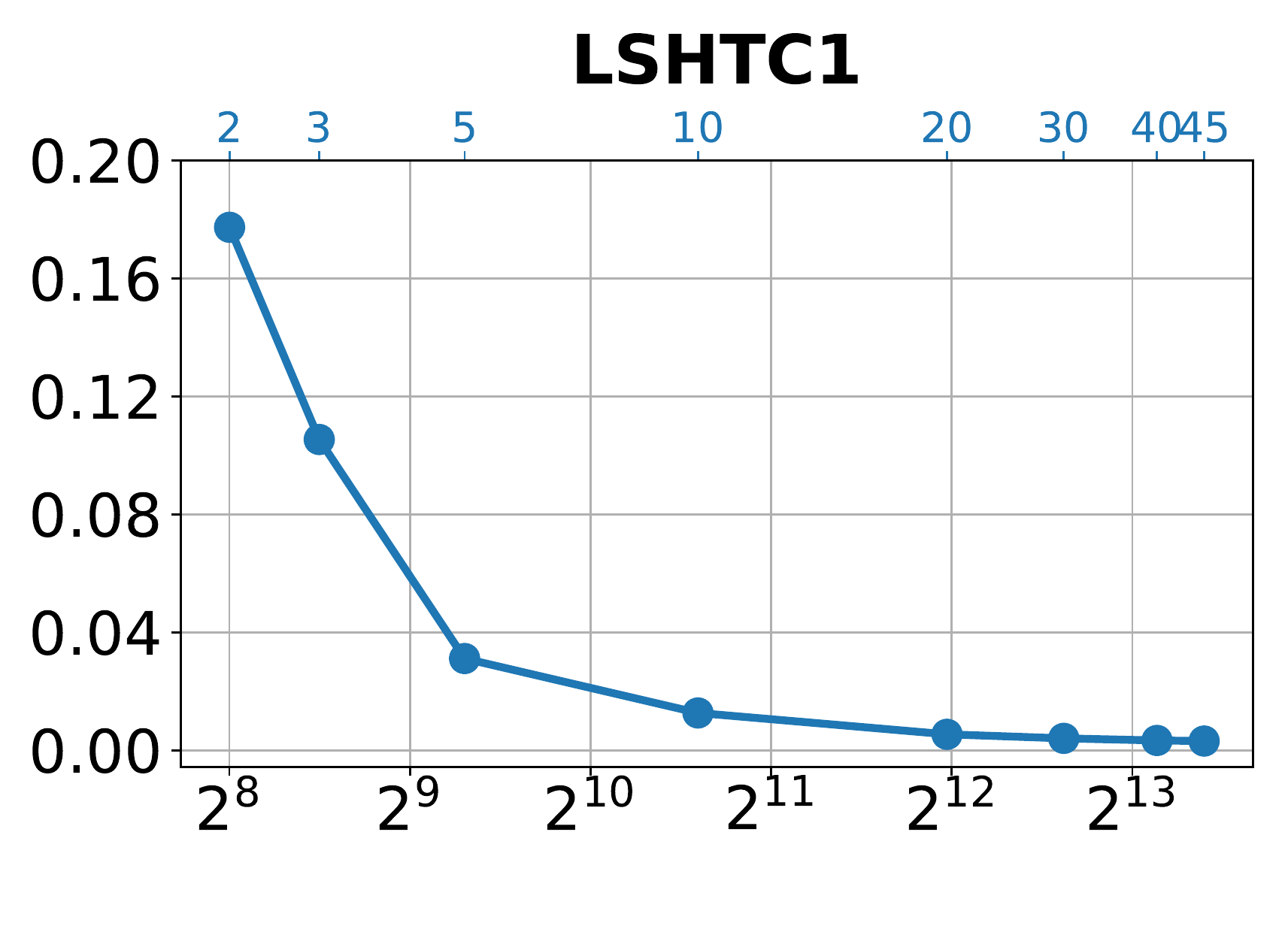}

\vskip -0.03in

\includegraphics[width=.196\linewidth]{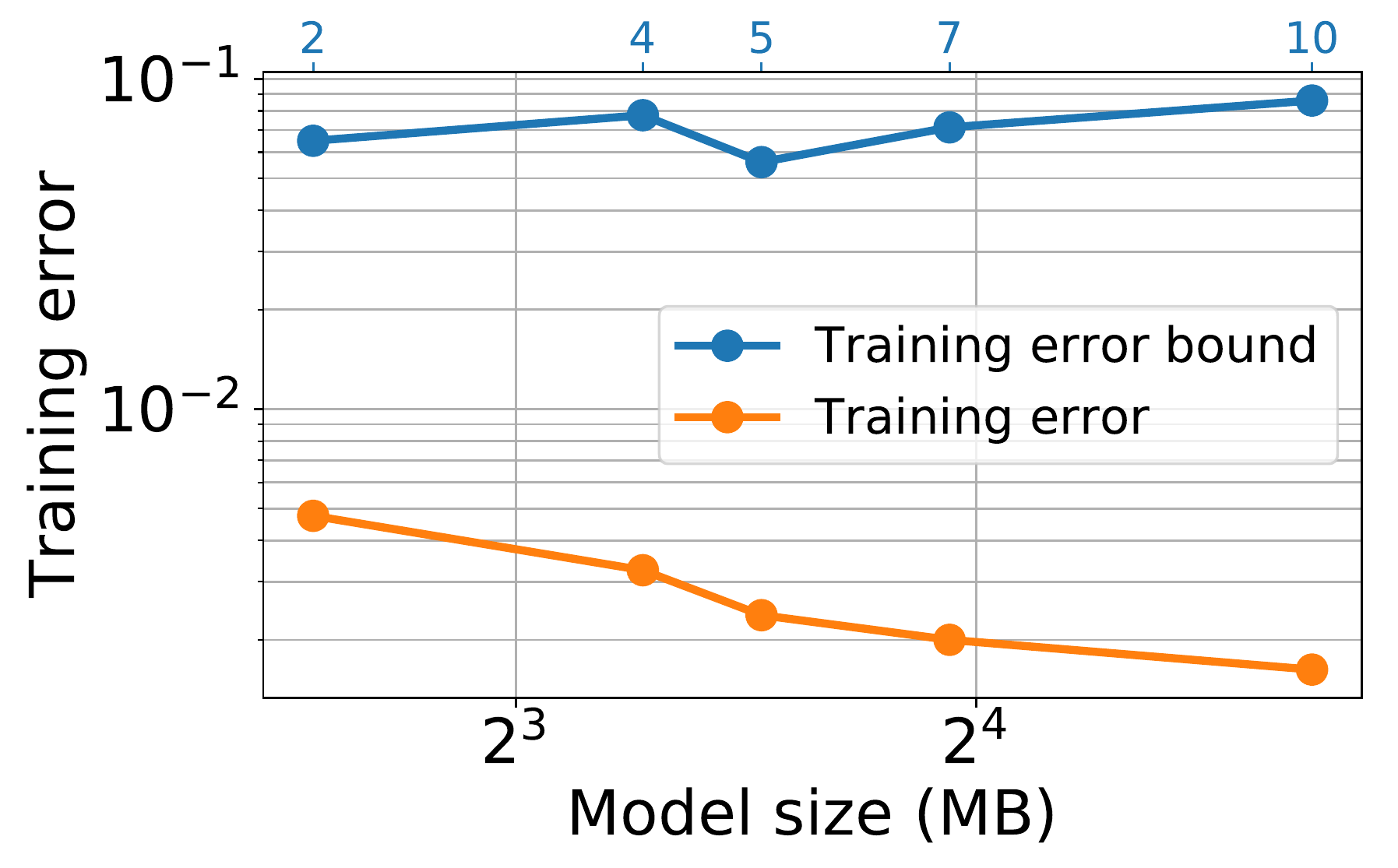}
\includegraphics[width=.191\linewidth]{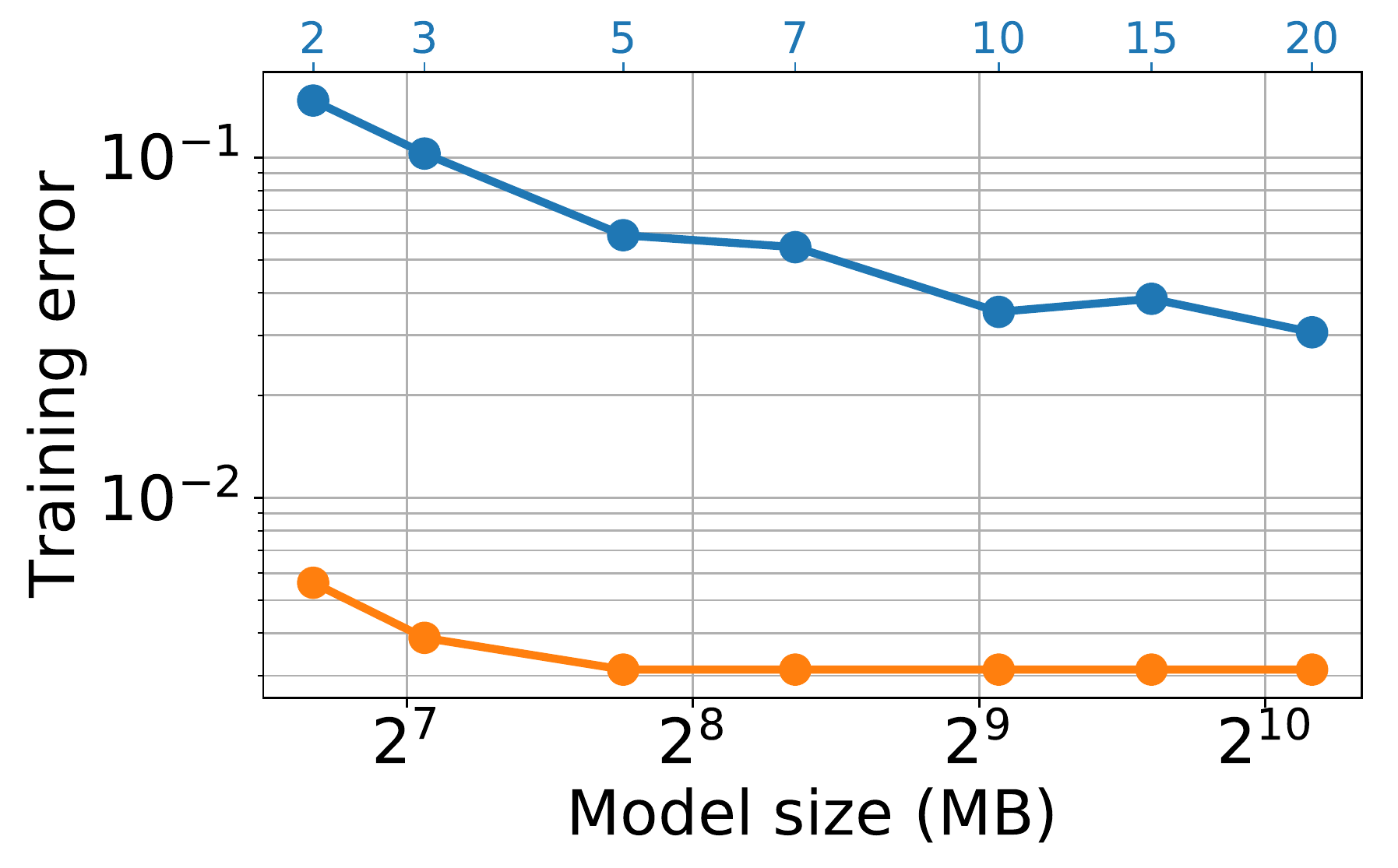}
\includegraphics[width=.191\linewidth]{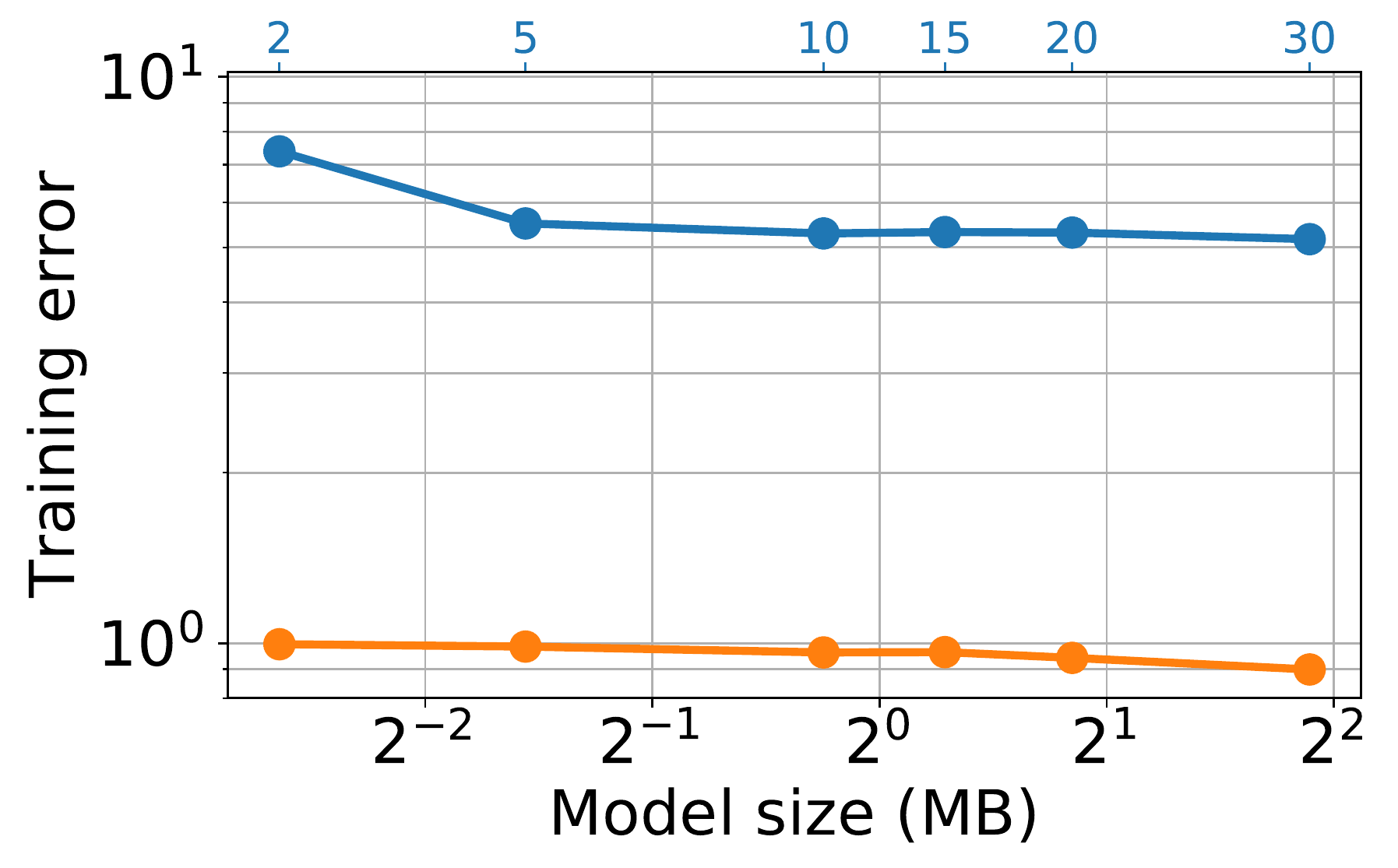}
\includegraphics[width=.191\linewidth]{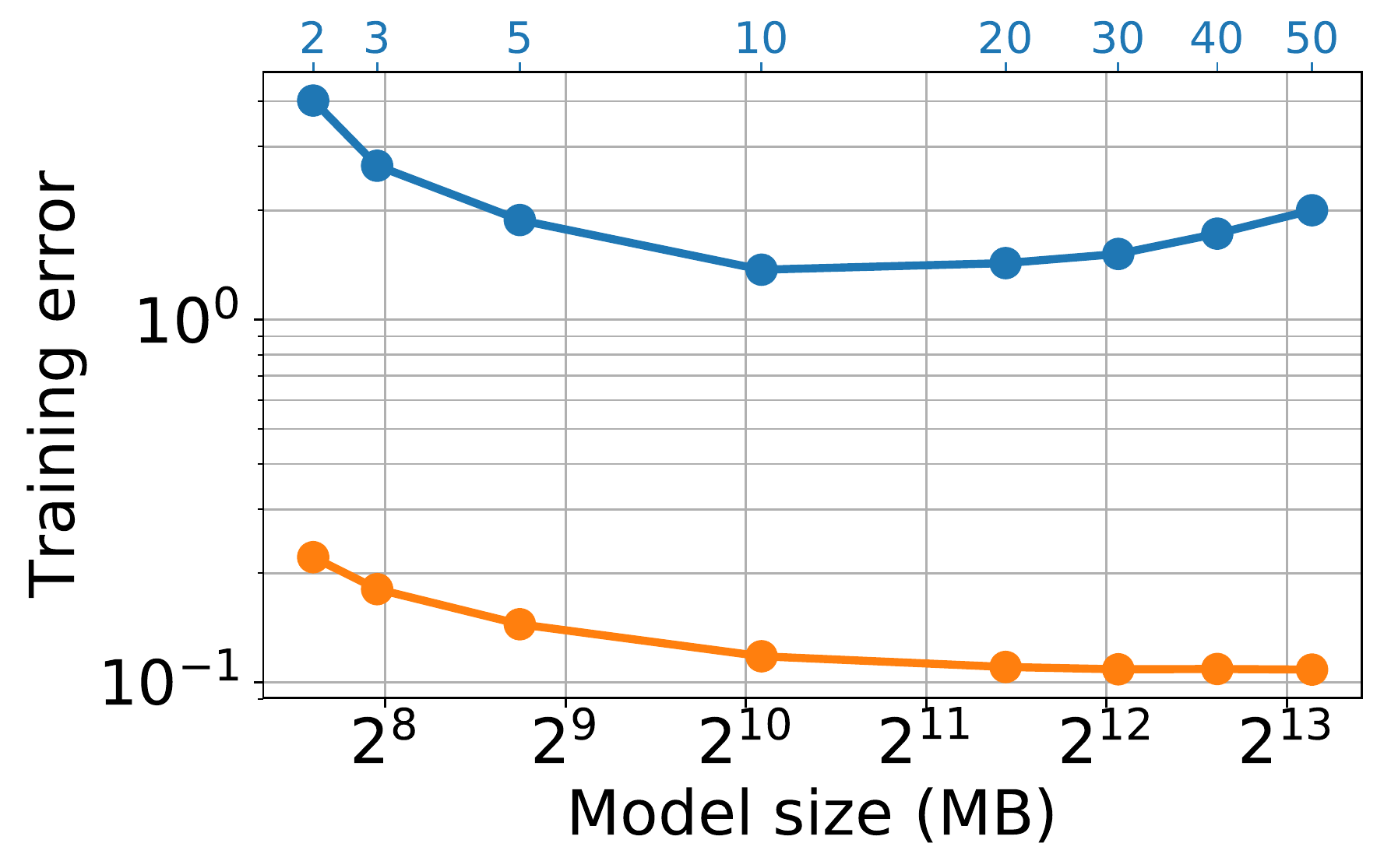}
\includegraphics[width=.191\linewidth]{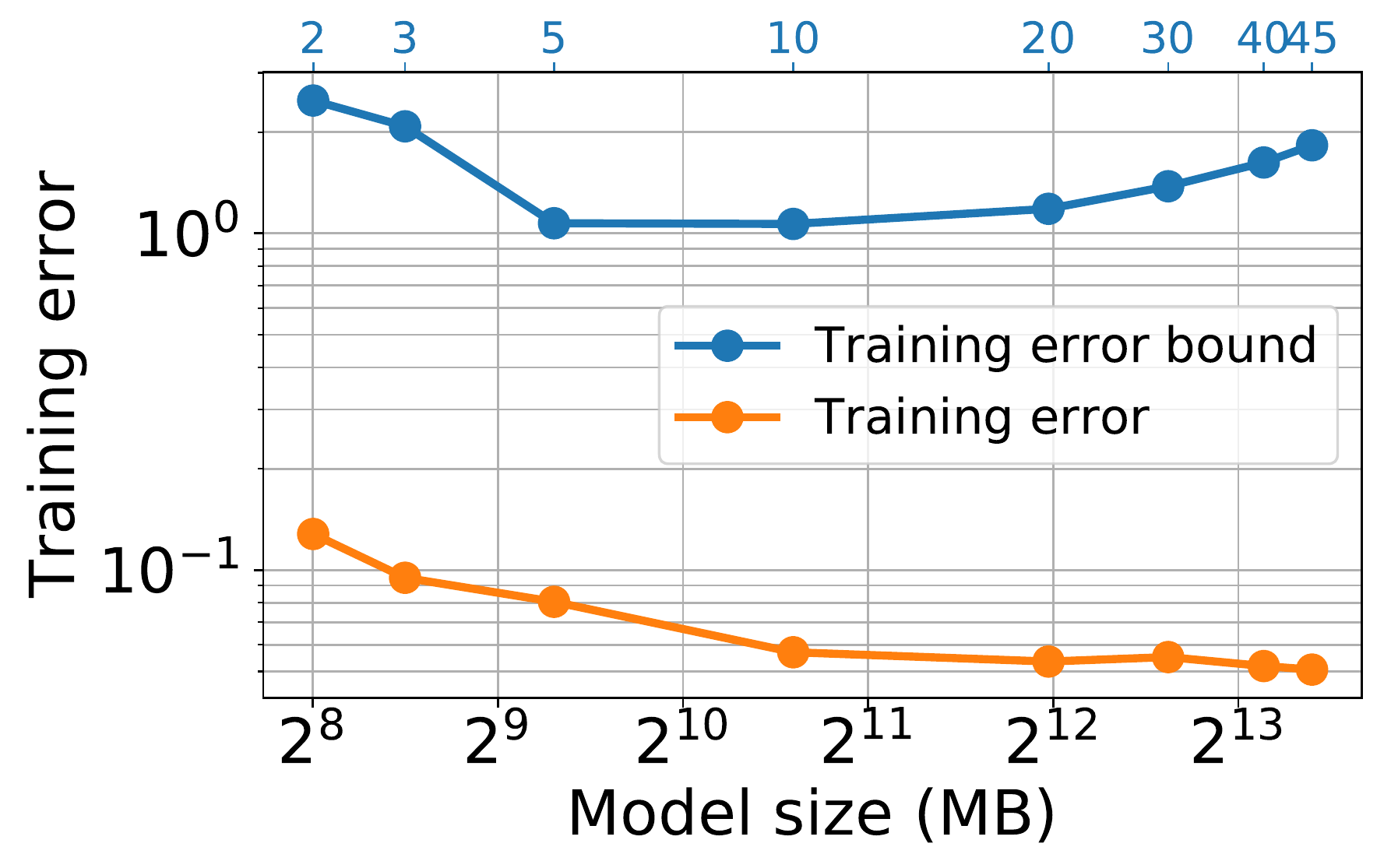}

\caption{First row: Average binary loss ($\varepsilon$) vs Model size.
Second row: Multiclass training error and multiclass training error bound (on a logarithmic scale) vs Model size.
The secondary x-axes (top axes, \textcolor[rgb]{0.17,0.50,0.72}{blue}) indicate the slice widths (b) used for the W-LTLS trellis graphs.}
\label{fig:avgBinaryLossVsPredictorsNumber}
\end{center}
\vskip -0.2in
\end{figure*}

\subsection{Multiclass training error}
\label{averageBinaryLossVsBalance}
%



In the second row of \figref{fig:avgBinaryLossVsPredictorsNumber} we plot the multiclass training
error (when using loss-based decoding defined in \eqref{loss_based} with the squared hinge loss)
and its bound \eqref{bound}
for different model sizes \footnote{
The model size is linear in the number of predictors,
which in turn depends on the slice width like $\frac{b^2}{\log b}$.
} (MBytes).


For the bound, we set the minimum distance $\rho=4$ as explained in
\secref{sec:Wltls}, and $L_{SH}(0)=1$.
The average binary loss $\varepsilon$ was computed as in \eqref{avgBinaryLoss}.


For all datasets, the multiclass training error follows qualitatively its bound.
For the two larger datasets, shown in the two right panels,
both the error and its bound decrease to some point,
and then
start to increase.
This can be explained as follows:
at some point, the increase in the slice widths (and $\ell$ and the model size), stops to
significantly decrease $\epsilon$ (see first row of \figref{fig:avgBinaryLossVsPredictorsNumber}), such that the term $\ell\times\varepsilon$ appears in
the training error bound \eqref{bound} overall starts increasing (recall that the denominator $\rho\times L\left(0\right)$ is constant).
By comparing these plots to the multiclass \emph{test} accuracy plots in \figref{fig:wltls}, we observe that at the same point where the training error and its bound start to increase,
the test accuracy does not increase significantly anymore.
For example, for {\tt LSHTC1} and {\tt Dmoz} datasets, the training error bounds start to increase at around model size of
 \revised{$2^{12}$}
 , and at the same time the test accuracy stops increasing significantly. This suggests that model size of
 \revised{$2^{12}$}
   is a good point in terms of accuracy/model size tradeoff.

\subsection{Average predictions margin}
\label{averageMargin}

As discussed in \secref{lbd-simulations},
the following \figref{fig:marginIncrease} shows that for larger values of $b$,
the predictions margin increases.

\begin{figure*}[h]
\begin{center}
\includegraphics[width=.45\linewidth]{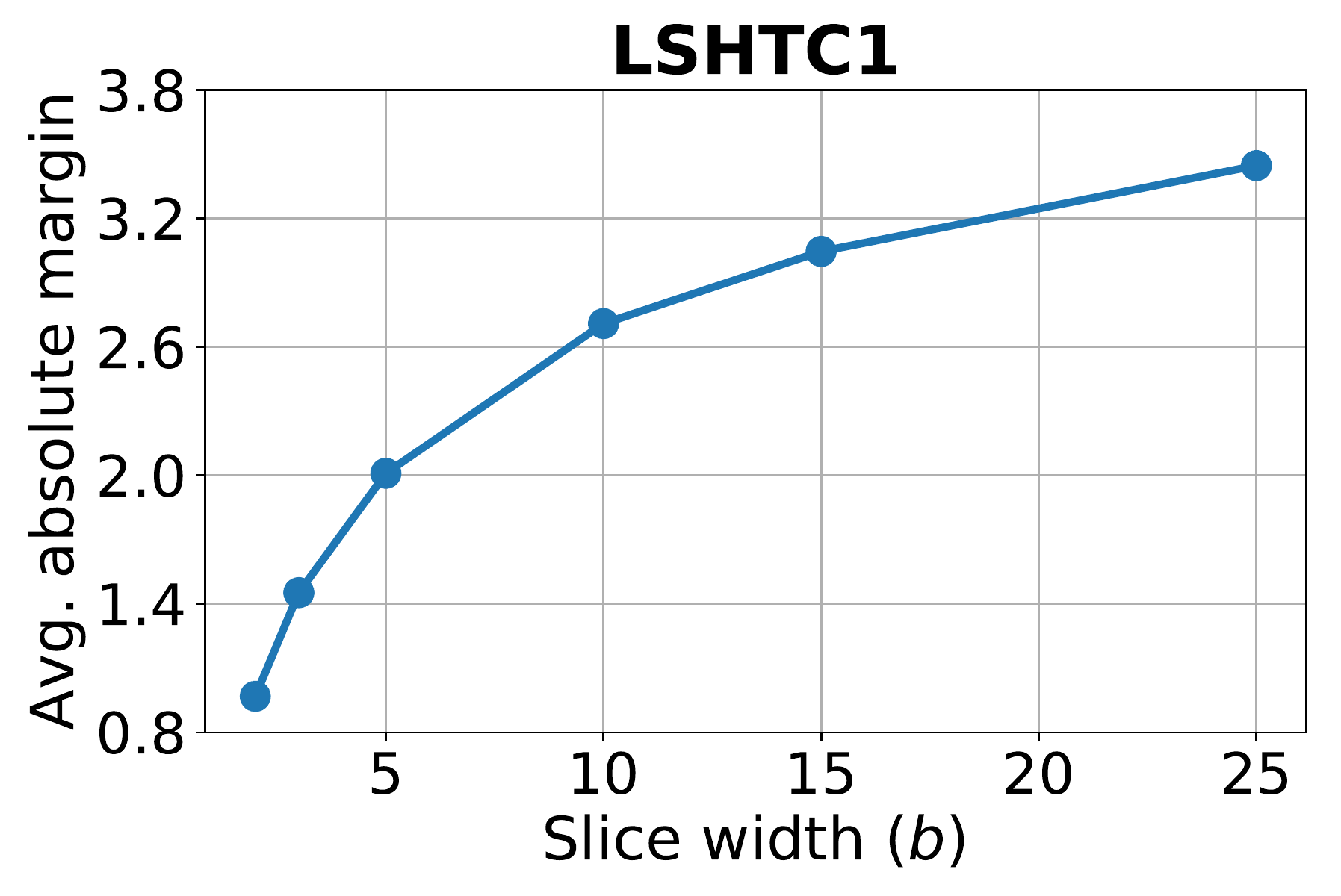}

\caption{
The average absolute margin, i.e.
$\frac{1}{m\ell}\sum_{i=1}^{m}{\sum_{j=1}^{\ell}{\left|f_j\left(x_i\right)\right|}}$,
vs the slice width $b$.
}
\label{fig:marginIncrease}
\end{center}
\vskip -0.2in
\end{figure*}


\newpage

\subsection{\revised{Experimental results of the multiclass test accuracy experiments}}
\label{sec:experimentalDenseResults}

Following are the results of \secref{WLTLS-simulations} organized in a tabular form --
the model sizes and prediction times of the tested algorithms and their test accuracy.
The results of W-LTLS are averaged on five runs.

\begin{table}[h!]
\begin{center}
\begin{small}
\begin{sc}
\begin{tabular}{|l|l|l|r|r|r|r|}
\toprule
Dataset  & \multicolumn{2}{l|}{Algorithm} & Model  & Prediction  & Test    \\
  &  \multicolumn{2}{l|}{}  & size (Bytes) & time (sec) &  accuracy (\%)    \\
 \midrule
 \hline

\multirow{9}{*}{
    \textbf{sector}
}
 &  \multicolumn{2}{l|}{LTLS} & 5.9 MB & 0.14 & 88.45 \\
 \cline{2-6}
& \multirow{5}{*}{
    W-LTLS
    }
     & $b=2$ & 5.9 MB & 0.14 & 91.63 \\
  \cline{3-6}
     & & $b=4$ & 9.7 MB & 0.16 & 93.92 \\
  \cline{3-6}
     & & $b=5$ & 11.6 MB & 0.16 & 94.32 \\
  \cline{3-6}
     & & $b=7$ & 15.4 MB & 0.18 & 94.86 \\
  \cline{3-6}
     & & $b=10$ & 26.5 MB & 0.23 & 94.88 \\
  \cline{2-6}
 &  \multicolumn{2}{l|}{LOMTree} & 17.0 MB & 0.16 & 82.10 \\
 \cline{2-6}
 &  \multicolumn{2}{l|}{FastXML} & 8.9 MB & 0.32 & 86.26 \\
 \cline{2-6}
 &  \multicolumn{2}{l|}{OVR} & 41.3 MB & - & 94.79 \\
 \cline{2-6}
\hline
\hline
\multirow{11}{*}{
    \textbf{aloi.bin}
}
 &  \multicolumn{2}{l|}{LTLS} & 102.1 MB & 1.0 & 82.24 \\
 \cline{2-6}
& \multirow{7}{*}{
    W-LTLS
    }
     & $b=2$ & 102.1 MB & 1.4 & 85.03 \\
  \cline{3-6}
     & & $b=3$ & 133.6 MB & 1.5 & 89.39 \\
  \cline{3-6}
     & & $b=5$ & 216.2 MB & 1.7 & 92.49 \\
  \cline{3-6}
     & & $b=7$ & 328.0 MB & 2.1 & 93.70 \\
  \cline{3-6}
     & & $b=10$ & 537.0 MB & 2.7 & 94.92 \\
  \cline{3-6}
     & & $b=15$ & 777.5 MB & 3.5 & 94.93 \\
  \cline{3-6}
     & & $b=20$ & 1.1 GB & 5.0 & 95.16 \\
  \cline{2-6}
 &  \multicolumn{2}{l|}{LOMTree} & 106.0 MB & 1.6 & 89.47 \\
 \cline{2-6}
 &  \multicolumn{2}{l|}{FastXML} & 522.0 MB & 5.3 & 95.38 \\
 \cline{2-6}
 &  \multicolumn{2}{l|}{OVR} & 2.4 GB & - & 95.90 \\
 \cline{2-6}
\hline
\hline
\multirow{10}{*}{
    \textbf{imageNet}
}
 &  \multicolumn{2}{l|}{LTLS} & 0.16 MB & 15.0 & 0.75 \\
 \cline{2-6}
& \multirow{6}{*}{
    W-LTLS
    }
     & $b=2$ & 0.16 MB & 21.4 & 0.42 \\
  \cline{3-6}
     & & $b=5$ & 0.34 MB & 26.3 & 1.59 \\
  \cline{3-6}
     & & $b=10$ & 0.84 MB & 40.3 & 4.36 \\
  \cline{3-6}
     & & $b=15$ & 1.2 MB & 52.6 & 4.08 \\
  \cline{3-6}
     & & $b=20$ & 1.8 MB & 76.2 & 7.01 \\
  \cline{3-6}
     & & $b=30$ & 3.7 MB & 194.2 & 12.60 \\
  \cline{2-6}
 &  \multicolumn{2}{l|}{LOMTree} & 35.0 MB & 37.7 & 5.37 \\
 \cline{2-6}
 &  \multicolumn{2}{l|}{FastXML} & 1.6 GB & 172.3 & 6.84 \\
 \cline{2-6}
 &  \multicolumn{2}{l|}{OVR} & 3.8 MB & - & 15.60 \\
 \cline{2-6}
\hline
\hline
\multirow{12}{*}{
    \textbf{Dmoz}
}
 &  \multicolumn{2}{l|}{LTLS} & 193.9 MB & 5.2 & 23.04 \\
 \cline{2-6}
& \multirow{8}{*}{
    W-LTLS
    }
     & $b=2$ & 193.9 MB & 7.5 & 25.76 \\
  \cline{3-6}
     & & $b=3$ & 248.0 MB & 8.1 & 29.56 \\
  \cline{3-6}
     & & $b=5$ & 429.2 MB & 9.8 & 33.92 \\
  \cline{3-6}
     & & $b=10$ & 1.1 GB & 16.6 & 37.44 \\
  \cline{3-6}
     & & $b=20$ & 2.8 GB & 41.4 & 38.43 \\
  \cline{3-6}
     & & $b=30$ & 4.3 GB & 73.6 & 38.89 \\
  \cline{3-6}
     & & $b=40$ & 6.3 GB & 164.5 & 38.81 \\
  \cline{3-6}
     & & $b=50$ & 9.0 GB & 332.3 & 38.89 \\
  \cline{2-6}
 &  \multicolumn{2}{l|}{LOMTree} & 1.8 GB & 28.0 & 21.27 \\
 \cline{2-6}
 &  \multicolumn{2}{l|}{FastXML} & 1.2 GB & 60.7 & 38.58 \\
 \cline{2-6}
 &  \multicolumn{2}{l|}{OVR} & 38.0 GB & - & 35.50 \\
 \cline{2-6}
\hline
\hline
\multirow{12}{*}{
    \textbf{LSHTC1}
}
 &  \multicolumn{2}{l|}{LTLS} & 256.3 MB & 0.65 & 9.50 \\
 \cline{2-6}
& \multirow{8}{*}{
    W-LTLS
    }
     & $b=2$ & 256.3 MB & 1.1 & 10.56 \\
  \cline{3-6}
     & & $b=3$ & 361.6 MB & 1.1 & 13.72 \\
  \cline{3-6}
     & & $b=5$ & 631.6 MB & 1.3 & 16.58 \\
  \cline{3-6}
     & & $b=10$ & 1.5 GB & 2.2 & 20.44 \\
  \cline{3-6}
     & & $b=20$ & 4.0 GB & 5.1 & 21.76 \\
  \cline{3-6}
     & & $b=30$ & 6.3 GB & 9.9 & 22.08 \\
  \cline{3-6}
     & & $b=40$ & 9.0 GB & 20.4 & 22.30 \\
  \cline{3-6}
     & & $b=45$ & 10.8 GB & 29.3 & 22.47 \\
  \cline{2-6}
 &  \multicolumn{2}{l|}{LOMTree} & 744.0 MB & 6.8 & 10.56 \\
 \cline{2-6}
 &  \multicolumn{2}{l|}{FastXML} & 366.6 MB & 10.7 & 21.66 \\
 \cline{2-6}
 &  \multicolumn{2}{l|}{OVR} & 56.3 GB & - & 21.90 \\ 

 \bottomrule


\end{tabular}
\end{sc}
\end{small}
\end{center}
\end{table}

\subsection{\revised{Experimental results of the sparsity experiments}}
\label{app:sparsityExperimental}


In the following \figref{fig:nonzero} we show that wider graphs
induce models which use less features, even before pruning.

This may help understand the results in \secref{sec:sparsityExperiment},
where we show that the larger slice widths allow pruning more weights
without accuracy degradation.

\begin{figure*}[h]
\begin{center}
\includegraphics[width=.25\linewidth]{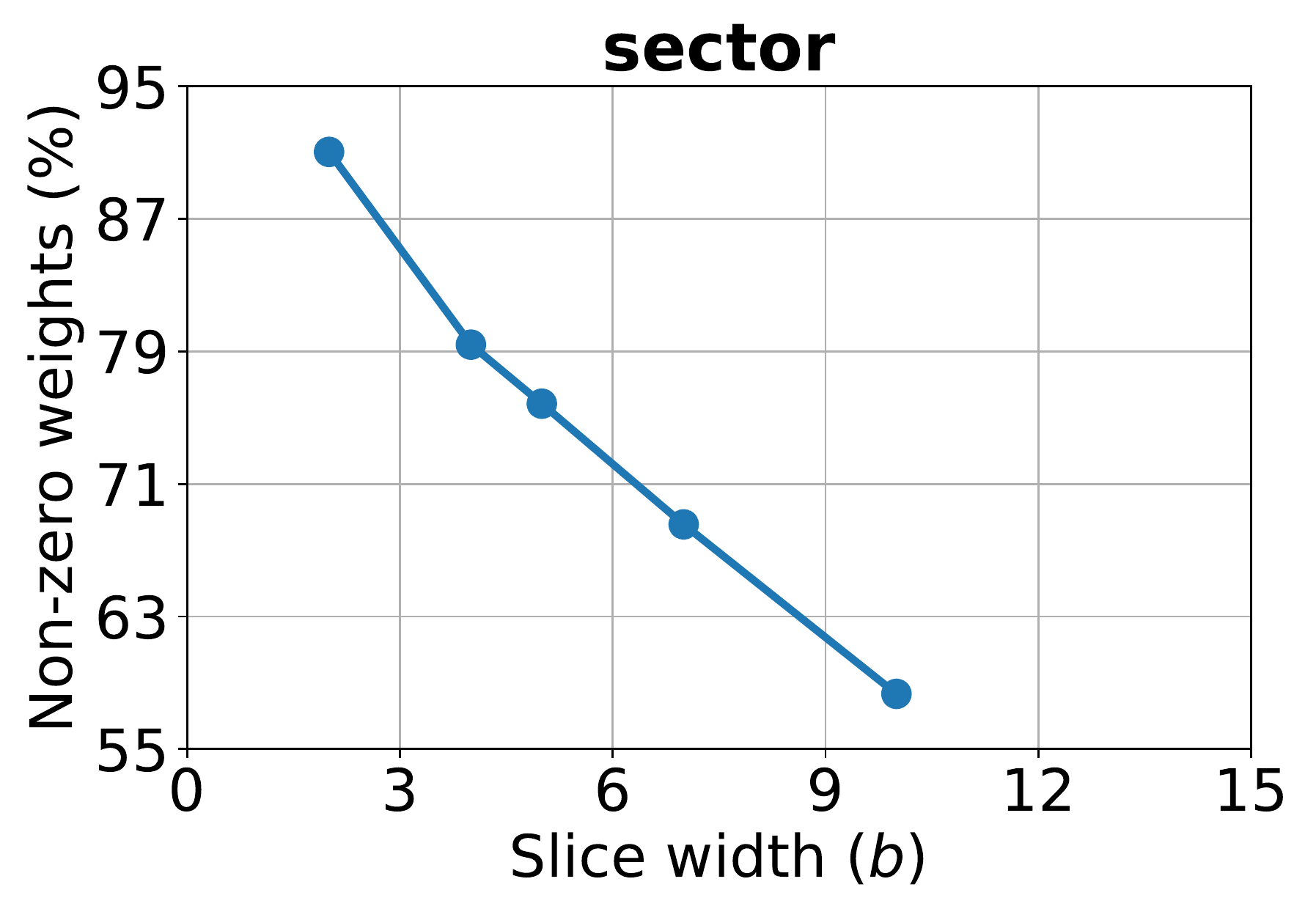}
\includegraphics[width=.24\linewidth]{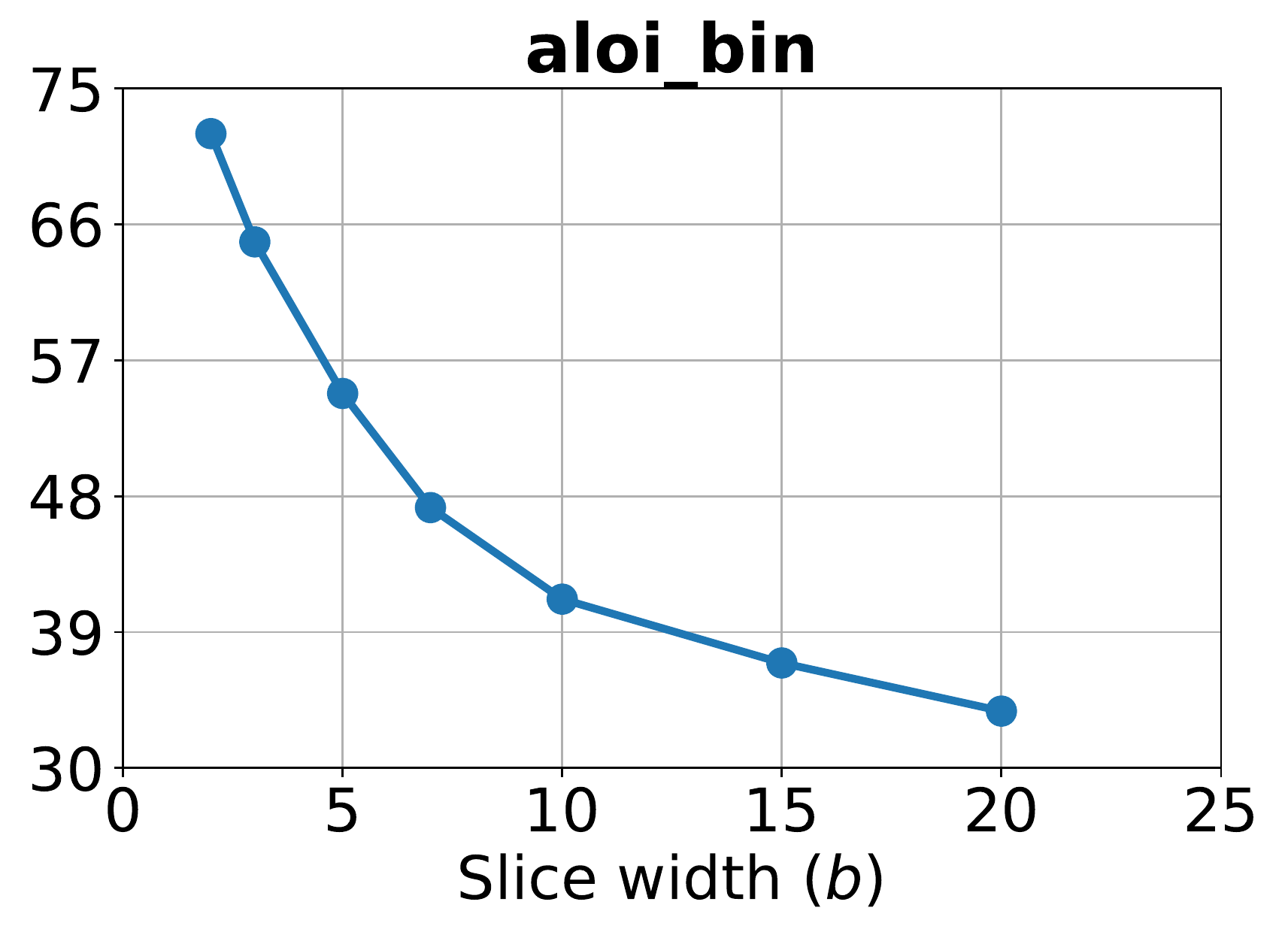}
\includegraphics[width=.24\linewidth]{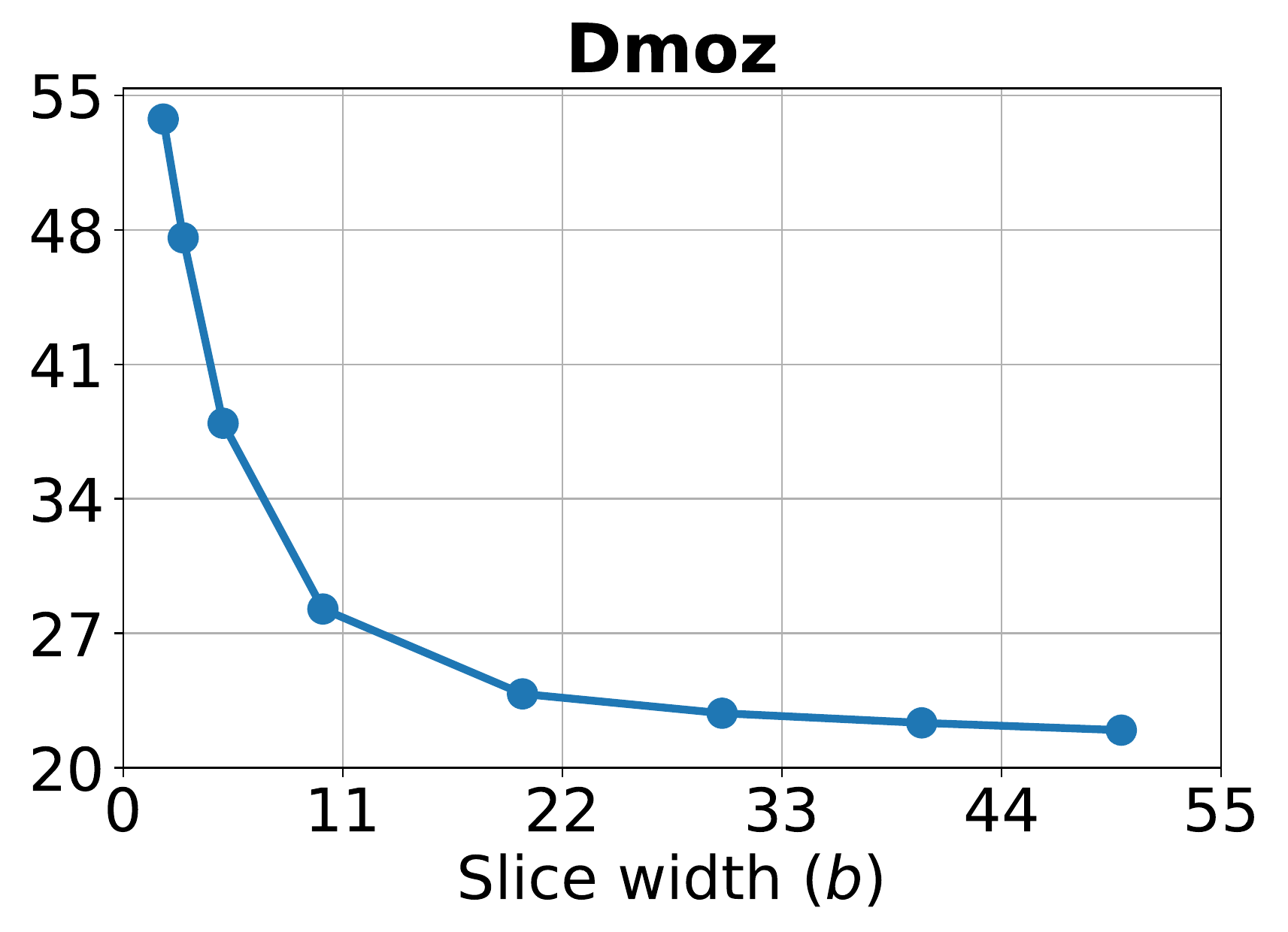}
\includegraphics[width=.24\linewidth]{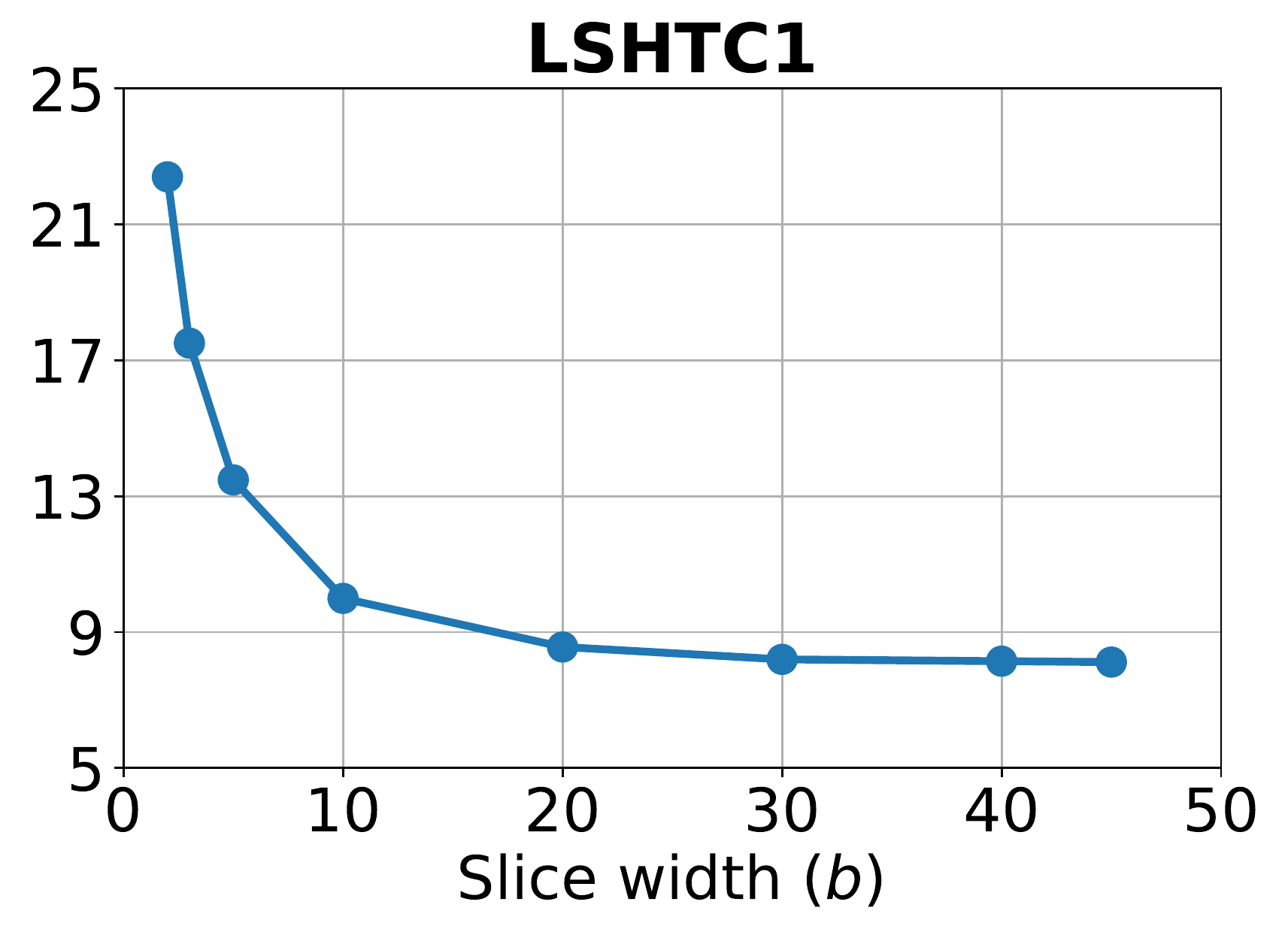}

\caption{
Percentage of non-zero weights at end of training (before pruning) vs
the slice width $b$.
}
\label{fig:nonzero}
\end{center}
\vskip 0.4in
\end{figure*}

In the following \tabref{table:sparseResults},
the results of \secref{sec:sparsityExperiment} organized in a tabular form.

\begin{table}[h!]
\begin{center}
\begin{small}
\begin{sc}
\begin{tabular}{|l|l|l|l|r|r|}
\toprule
Dataset  & \multicolumn{3}{l|}{Algorithm}   & Model        & Test    \\
         &  \multicolumn{3}{l|}{}           & size (Bytes) &  accuracy (\%)    \\
 \midrule
 \hline

\multirow{11}{*}{
    \textbf{sector}
}
& \multirow{5}{*}{
    Sparse W-LTLS
    }
     & $b=2$ & ${\lambda}=0.324$ & 1.4 MB & 90.66 \\
  \cline{3-6}
     & & $b=4$ & ${\lambda}=0.303$ & 1.0 MB & 93.42 \\
  \cline{3-6}
     & & $b=5$ & ${\lambda}=0.327$ & 1.1 MB & 93.86 \\
  \cline{3-6}
     & & $b=7$ & ${\lambda}=0.341$ & 1.0 MB & 94.11 \\
  \cline{3-6}
     & & $b=10$ & ${\lambda}=0.358$ & 0.74 MB & 94.82 \\
  \cline{2-6}
& \multirow{6}{*}{
    FastXML
    }
     & \multicolumn{2}{l|}{$T=25$} & 4.5 MB & 86.26 \\
  \cline{3-6}
     & & \multicolumn{2}{l|}{$T=50$} & 8.9 MB & 86.26 \\
  \cline{3-6}
     & & \multicolumn{2}{l|}{$T=75$} & 13.4 MB & 86.78 \\
  \cline{3-6}
     & & \multicolumn{2}{l|}{$T=100$} & 17.9 MB & 86.58 \\
  \cline{3-6}
     & & \multicolumn{2}{l|}{$T=150$} & 26.8 MB & 86.57 \\
  \cline{3-6}
     & & \multicolumn{2}{l|}{$T=200$} & 35.8 MB & 87.20 \\
  \cline{2-6}
\hline
\hline
\multirow{12}{*}{
    \textbf{aloi.bin}
}
& \multirow{7}{*}{
    Sparse W-LTLS
    }
     & $b=2$ & ${\lambda}=0.015$ & 37.3 MB & 84.09 \\
  \cline{3-6}
     & & $b=3$ & ${\lambda}=0.015$ & 32.7 MB & 88.59 \\
  \cline{3-6}
     & & $b=5$ & ${\lambda}=0.017$ & 25.4 MB & 91.74 \\
  \cline{3-6}
     & & $b=7$ & ${\lambda}=0.016$ & 25.9 MB & 92.89 \\
  \cline{3-6}
     & & $b=10$ & ${\lambda}=0.013$ & 30.3 MB & 94.48 \\
  \cline{3-6}
     & & $b=15$ & ${\lambda}=0.014$ & 31.2 MB & 94.27 \\
  \cline{3-6}
     & & $b=20$ & ${\lambda}=0.016$ & 25.4 MB & 94.55 \\
  \cline{2-6}
& \multirow{5}{*}{
    FastXML
    }
     & \multicolumn{2}{l|}{$T=5$} & 52.1 MB & 92.67 \\
  \cline{3-6}
     & & \multicolumn{2}{l|}{$T=10$} & 104.2 MB & 94.12 \\
  \cline{3-6}
     & & \multicolumn{2}{l|}{$T=25$} & 260.9 MB & 95.19 \\
  \cline{3-6}
     & & \multicolumn{2}{l|}{$T=50$} & 522.0 MB & 95.38 \\
  \cline{3-6}
     & & \multicolumn{2}{l|}{$T=100$} & 1.0 GB & 95.66 \\
  \cline{2-6}
 &  \multicolumn{3}{l|}{DiSMEC} & 10.7 MB & 96.28 \\
 \cline{2-6}
 &  \multicolumn{3}{l|}{PD-Sparse} & 12.7 MB & 96.20 \\
 \cline{2-6}
 &  \multicolumn{3}{l|}{PPDSparse} & 9.3 MB & 96.38 \\
 \cline{2-6}
\hline
\hline
\multirow{14}{*}{
    \textbf{Dmoz}
}
& \multirow{8}{*}{
    Sparse W-LTLS
    }
     & $b=2$ & ${\lambda}=0.347$ & 43.2 MB & 24.64 \\
  \cline{3-6}
     & & $b=3$ & ${\lambda}=0.325$ & 56.1 MB & 28.63 \\
  \cline{3-6}
     & & $b=5$ & ${\lambda}=0.314$ & 69.3 MB & 33.04 \\
  \cline{3-6}
     & & $b=10$ & ${\lambda}=0.323$ & 93.7 MB & 36.75 \\
  \cline{3-6}
     & & $b=20$ & ${\lambda}=0.345$ & 126.3 MB & 37.72 \\
  \cline{3-6}
     & & $b=30$ & ${\lambda}=0.385$ & 145.6 MB & 38.08 \\
  \cline{3-6}
     & & $b=40$ & ${\lambda}=0.374$ & 193.1 MB & 38.08 \\
  \cline{3-6}
     & & $b=50$ & ${\lambda}=0.323$ & 324.3 MB & 38.16 \\
  \cline{2-6}
& \multirow{6}{*}{
    FastXML
    }
     & \multicolumn{2}{l|}{$T=1$} & 24.5 MB & 27.09 \\
  \cline{3-6}
     & & \multicolumn{2}{l|}{$T=2$} & 49.2 MB & 31.17 \\
  \cline{3-6}
     & & \multicolumn{2}{l|}{$T=5$} & 123.0 MB & 35.84 \\
  \cline{3-6}
     & & \multicolumn{2}{l|}{$T=10$} & 246.5 MB & 37.47 \\
  \cline{3-6}
     & & \multicolumn{2}{l|}{$T=50$} & 1.2 GB & 38.58 \\
  \cline{3-6}
     & & \multicolumn{2}{l|}{$T=300$} & 7.4 GB & 38.63 \\
  \cline{2-6}
 &  \multicolumn{3}{l|}{DiSMEC} & 259.3 MB & 39.38 \\
 \cline{2-6}
 &  \multicolumn{3}{l|}{PD-Sparse} & 453.3 MB & 39.91 \\
 \cline{2-6}
 &  \multicolumn{3}{l|}{PPDSparse} & 526.7 MB & 39.32 \\
 \cline{2-6}
\hline
\hline
\multirow{15}{*}{
    \textbf{LSHTC1}
}
& \multirow{8}{*}{
    Sparse W-LTLS
    }
     & $b=2$ & ${\lambda}=0.288$ & 24.7 MB & 9.48 \\
  \cline{3-6}
     & & $b=3$ & ${\lambda}=0.237$ & 27.6 MB & 12.66 \\
  \cline{3-6}
     & & $b=5$ & ${\lambda}=0.271$ & 31.5 MB & 15.60 \\
  \cline{3-6}
     & & $b=10$ & ${\lambda}=0.226$ & 46.3 MB & 19.78 \\
  \cline{3-6}
     & & $b=20$ & ${\lambda}=0.207$ & 85.0 MB & 21.13 \\
  \cline{3-6}
     & & $b=30$ & ${\lambda}=0.192$ & 130.5 MB & 21.56 \\
  \cline{3-6}
     & & $b=40$ & ${\lambda}=0.183$ & 184.7 MB & 21.81 \\
  \cline{3-6}
     & & $b=45$ & ${\lambda}=0.246$ & 152.1 MB & 21.88 \\
  \cline{2-6}
& \multirow{7}{*}{
    FastXML
    }
     & \multicolumn{2}{l|}{$T=1$} & 7.3 MB & 12.50 \\
  \cline{3-6}
     & & \multicolumn{2}{l|}{$T=5$} & 36.6 MB & 17.98 \\
  \cline{3-6}
     & & \multicolumn{2}{l|}{$T=10$} & 73.3 MB & 19.36 \\
  \cline{3-6}
     & & \multicolumn{2}{l|}{$T=50$} & 366.6 MB & 21.66 \\
  \cline{3-6}
     & & \multicolumn{2}{l|}{$T=150$} & 1.1 GB & 21.94 \\
  \cline{3-6}
     & & \multicolumn{2}{l|}{$T=300$} & 2.2 GB & 22.04 \\
  \cline{3-6}
     & & \multicolumn{2}{l|}{$T=1200$} & 8.8 GB & 21.92 \\
  \cline{2-6}
 &  \multicolumn{3}{l|}{DiSMEC} & 94.7 MB & 22.74 \\
 \cline{2-6}
 &  \multicolumn{3}{l|}{PD-Sparse} & 58.7 MB & 22.46 \\
 \cline{2-6}
 &  \multicolumn{3}{l|}{PPDSparse} & 254.0 MB & 22.70 \\

 \bottomrule


\end{tabular}
\end{sc}
\end{small}
\end{center}
\caption{Simulation results for the sparse models.}
\label{table:sparseResults}
\end{table}

\end{document}